\newtheorem{theorem}{Theorem}
\newtheorem{lemma}{Lemma}
\newcommand{\E}{\mathbb{E}}
\newcommand{\I}{\mathbb{I}}
\newcommand{\R}{\mathbb{R}}
\newcommand{\N}{\mathbb{N}}
\newcommand{\ba}{\boldsymbol{a}}
\newcommand{\be}{\boldsymbol{e}}
\newcommand{\bu}{\boldsymbol{u}}
\newcommand{\bv}{\boldsymbol{v}}
\newcommand{\bw}{\boldsymbol{w}}
\newcommand{\by}{\boldsymbol{y}}
\newcommand{\bz}{\boldsymbol{z}}
\newcommand{\cA}{\mathcal{A}}
\newcommand{\cD}{\mathcal{D}}
\newcommand{\cE}{\mathcal{E}}
\newcommand{\cF}{\mathcal{F}}
\newcommand{\cG}{\mathcal{G}}
\newcommand{\cH}{\mathcal{H}}
\newcommand{\cJ}{\mathcal{J}}
\newcommand{\cK}{\mathcal{K}}
\newcommand{\cL}{\mathcal{L}}
\newcommand{\cM}{\mathcal{M}}
\newcommand{\cN}{\mathcal{N}}
\newcommand{\cR}{\mathcal{R}}
\newcommand{\argmax}{\operatornamewithlimits{argmax}}
\mathchardef\mhyphen="2D
\newcommand{\empiricalmax}{\mathtt{MC \mhyphen Empirical}}
\newcommand{\algsemibandit}{\mathtt{MC \mhyphen UCB}}
\newcommand{\algfullbandit}{\mathtt{MC \mhyphen ETE}}
\newcommand{\ex}{\mathbb{E}}
\newcommand{\bs}{\boldsymbol}
\newcommand{\var}{\textup{Var}}
\newcommand{\unitvec}{\textbf{1}}
\newcommand{\sbr}[1]{\left( #1 \right)}
\newcommand{\mbr}[1]{\left[ #1 \right]}
\newcommand{\lbr}[1]{\left\{ #1 \right\}}
\newcommand{\abr}[1]{\left| #1 \right|}
\newcommand{\quadopt}{\textsf{QuadOpt}}
\newcommand{\compilehidecomments}{true}%HIDE comments
	\newcommand{\longbo}[1]{}
	\newcommand{\zhixuan}[1]{}
	\newcommand{\siwei}[1]{}
	\newcommand{\yihan}[1]{}
	\newcommand{\longbo}[1]{{\color{red}  [\text{Longbo:} #1]}}
	\newcommand{\zhixuan}[1]{{\color{purple} [\text{Zhixuan:} #1]}}
	\newcommand{\siwei}[1]{{\color{blue} [\text{Siwei:} #1]}}
	\newcommand{\yihan}[1]{{\color{teal} [\text{Yihan:} #1]}}
\title{Continuous Mean-Covariance Bandits}
\author{%
  Yihan Du\\
  IIIS, Tsinghua University\\
  Beijing, China\\
%  Pittsburgh, PA 15213 \\
  \texttt{duyh18@mails.tsinghua.edu.cn} \\
  % examples of more authors
   \And
   Siwei Wang \\
   CST, Tsinghua University\\
   Beijing, China\\
%   Address \\
   \texttt{wangsw2020@mail.tsinghua.edu.cn} \\
   \AND
   Zhixuan Fang \\
   IIIS, Tsinghua University, Beijing, China\\
   Shanghai Qi Zhi Institute, Shanghai, China\\
%   Address \\
   \texttt{zfang@mail.tsinghua.edu.cn} \\
   \And
   Longbo Huang\thanks{Corresponding author.} \\
   IIIS, Tsinghua University\\
   Beijing, China\\
%   Address \\
   \texttt{longbohuang@mail.tsinghua.edu.cn} \\
%   \And
%   Coauthor \\
%   Affiliation \\
%   Address \\
%   \texttt{email} \\
}
\begin{document}

\maketitle

\begin{abstract}
Existing risk-aware multi-armed bandit models typically focus on risk measures of individual options such as variance. As a result, they cannot be directly applied to important real-world online decision making problems with correlated options. 
In this paper, we propose a novel Continuous Mean-Covariance Bandit (CMCB) model to explicitly take into account option correlation. Specifically, in CMCB, there is a learner who sequentially chooses weight vectors on given options and observes random feedback according to the decisions. The agent's  objective is to  achieve the best trade-off between reward and risk, measured with option covariance. 
To capture different reward observation scenarios in practice, we consider three feedback settings, i.e., full-information, semi-bandit and full-bandit feedback. We propose novel algorithms with rigorous regret analysis, and provide nearly matching lower bounds to validate their optimalities (in terms of the number of timesteps $T$). The experimental results also demonstrate the superiority of our algorithms.  
To the best of our knowledge, this is the first work that considers option correlation in risk-aware bandits and explicitly quantifies how arbitrary covariance structures impact the learning performance.
The novel analytical techniques we developed for exploiting the estimated covariance to build concentration and bounding the risk of selected actions based on sampling strategy properties can likely find applications in other bandit analysis and be of independent interests. 
\end{abstract}

\section{Introduction}
The stochastic Multi-Armed Bandit (MAB)~\cite{UCB_auer2002,thompson1933,agrawal2012analysis} problem is a classic online learning model, which characterizes the exploration-exploitation trade-off in decision making.
% and finds various applications, such as clinical trials~\cite{clinical_trials2015}, recommendation systems~\cite{recommendation_system2016} and online advertising~\cite{online_advertsing2017}.
%In this problem, a learner is given multiple arms, where each arm is associated with an unknown reward distribution. At each timestep, the learner chooses an arm to play and observes a random reward according to the chosen arm. The objective of the learner is to minimize the expected cumulative regret, which is defined as the cumulative difference of the expected rewards between the optimal arm (which has the maximum expected reward) and the actually played arms.
%, cumulated up to time $T$.
%
%
%%The MAB problem has been extensively studied for the past few decades and this 
%The MAB literature typically focuses on the expected rewards of arms. 
%However, in many real-world scenarios, the risk of decisions (reward variability) also plays an important role.
%% in the learning objective. 
%For example, in clinical trials~\cite{clinical_trials2015}, clinicians not only evaluate the expected efficacy of treatments but also emphasize their stability.   
%Compared to an effective but risky treatment, it is preferable to choose another treatment which has slightly lower efficacy in average but consistently works well on different patients.
%% the treatment which has slightly higher average efficacy but shows wild performance fluctuations.
Recently, due to the increasing requirements of risk guarantees in practical applications, the Mean-Variance Bandits (MVB) \cite{sani2012risk,vakili2016risk,thompson_sampling_mean_variance2020} which aim at balancing the rewards and performance variances have received extensive attention.
%This setting uses variance to measure learning risk and aims to select a single arm that achieves the best balance between   expected reward and variance.
%, an important variant of MAB focusing on the reward-risk trade-off, has attracted extensive attention,
%
%This problem introduces the well-known mean-variance paradigm~\cite{Markowitz1952,Markowitz1976} inc the MAB literature to investigate the risk of decisions.
%In this problem, the optimal arm is defined as the arm that achieves the best trade-off between the expected reward and variance, instead of the one only  maximizing the expected reward. 
%Although the mean-variance bandit formulation provides an effective model for online decision making with reward-risk objectives, 
%%it still has several limitations when applying to real-world scenarios. 
While MVB provides a successful risk-aware model, it only considers discrete decision space and focuses on the variances of individual arms (assuming independence among arms).

However, in many real-world scenarios,
% such as finance~\cite{Markowitz1952}, clinical trials~\cite{clinical_trials2015} and online advertising~\cite{online_advertsing2017}
a decision often involves multiple options with  certain correlation structure, which can heavily influence risk management and cannot be ignored.  
%is highly crucial to risk measurement.
%and the risk of decisions depends on not only the variances of individual options but also the correlation of these options.
For instance, in finance, investors can select portfolios on multiple correlated assets, and the investment risk is closely related to the correlation among the chosen assets. The well-known ``risk diversification'' strategy~\cite{risk_diversification1991} embodies the importance of correlation to investment decisions. 
In clinical trials, a treatment often  consists of different drugs with certain ratios,
%to improve the therapeutic effects and want to find the best ratio. Then 
and the correlation among drugs plays an important role in the treatment risk. 
Failing to handle the correlation among multiple options, existing MVB results cannot be directly applied to these important  real-world tasks.

Witnessing the above limitation of existing risk-aware results, in this paper, we propose a novel Continuous Mean-Covariance Bandit (CMCB) model, which considers a set of options (base arms) with continuous decision space and measures the risk of decisions with the option correlation. Specifically, 
in this model, a learner is given $d$ base arms, which are associated with an unknown joint reward distribution with a mean vector and covariance. At each timestep, the environment generates an underlying random reward for each base arm according to the joint distribution. Then, the learner selects a weight vector of base arms and observes the rewards.
%\yihan{Formulated the problem with the joint distribution.}
%: (i) in the  full-information feedback setting (CMCB-FI), the learner observes the reward of each base arm; (ii) in the semi-bandit feedback setting (CMCB-SB), the learner observes the rewards of the base arms which she places positive weights on, where the weight of a base arm is constrained to be either zero or above some constant (the learner cannot obtain full information by placing an arbitrarily small weight on base arms); (iii) in the full-bandit feedback setting (CMCB-FB), the learner observes the weighted sum of the rewards of base arms.
The goal of the learner is to minimize the expected cumulative regret, i.e., the total difference of the reward-risk (mean-covariance) utilities between the chosen actions and the optimal action, where the optimal action is defined as the weight vector that achieves the best trade-off between the expected reward and covariance-based risk. 
%\yihan{Revised this sentence for the learning objective to be more accurate.}
%between the expected reward and variance that depends on the 
%where the risk of actions is measured with the covariance of base arms. The learning objective is to minimize the regret due to not playing the optimal action.
%, which vary in how much information one obtains from pulling an arm. 
To capture important observation scenarios in practice, we consider three feedback settings in this model, i.e., full-information (CMCB-FI), semi-bandit (CMCB-SB) and full-bandit (CMCB-FB) feedback, which vary from seeing  rewards of all options to receiving  rewards of the selected options to only observing a weighted sum of rewards. 
%\yihan{Moved the feedback settings here and briefly explained them.}

The CMCB framework finds a wide range of real-world applications, including finance~\cite{Markowitz1952}, company operation~\cite{computer_operation2004} and online advertising~\cite{online_advertsing2017}. 
% 
%Depending on specific scenarios, it also comes with different feedback structures. 
% 
For example, in stock markets, investors choose portfolios
% on each trading day and 
based on the observed prices of all stocks (full-information feedback),
% after the market closes (full-information feedback), 
with the goal of earning high returns and meanwhile minimizing risk. 
In company operation, managers allocate  investment budgets to several correlated business
% (allocated budget for each  business is lower bounded)
and only observe the returns of the invested business (semi-bandit feedback), with the objective of achieving high returns and low risk. 
In clinical trials, clinicians select a treatment comprised of different drugs and only observe an overall therapeutic effect (full-bandit feedback), where good therapeutic effects and high stability are both desirable.

%\longbo{compress this and move some sentences to the bullets}
%For CMCB-FI, we propose an optimal algorithm $\empiricalmax$ (within a logarithmic factor) and develop a novel regret analysis to handle covariance, which contributes a technique to bound the risk of the chosen action, and provide the lower bound to demonstrate its optimality.
%For CMCB-SB, we develop   $\algsemibandit$, an algorithm that exploits the estimated upper confidence bound of covariance to build confidence regions and achieves the optimal (within a logarithmic factor) regret with respect to $T$. We also provide a lower bound to certify the tightness of this regret. 
%For CMCB-FB, we develop algorithm $\algfullbandit$, which uses the design set to reveal the expected rewards and covariance, with regret guarantees.
%% 

For both CMCB-FI and CMCB-SB, we propose novel  algorithms and establish nearly matching lower bounds for the problems, and contribute novel techniques in analyzing the risk of chosen actions and exploiting the covariance information. 
For CMCB-FB, we develop a novel algorithm which adopts a carefully designed action set to estimate the expected rewards and covariance, with non-trivial regret guarantees.
%Our theoretical results explicitly characterize the influences of the covariance structure on learning performance, which distinguishes itself from existing risk-aware results that focus on  risk measures on individual  arms~\cite{sani2012risk,vakili2016risk,general_risk_criteria2018}. 
Our theoretical results offer an explicit quantification of the influences of arbitrary covariance structures on learning performance, and our empirical evaluations also demonstrate the superior performance of our algorithms.
%\yihan{Compressed this paragraph.}

Our work differs from previous works on bandits with  covariance~\cite{online_variance_minimization2006,online_variance_minimization2012,known_covariance2016,covariance-adapting2020} in the following aspects. 
(i) 
%\longbo{state formulation difference} 
%\yihan{Highlight the difference on formulation}
We consider the reward-risk objective under  continuous decision space and stochastic environment, while existing works study either combinatorial bandits, where the decision space is discrete and risk is not considered in the objective, or  adversarial online optimization. 
% (where the covariance matrix can be directly observed). 
%
%Existing studies~\cite{known_covariance2016,covariance-adapting2020} consider the combinatorial semi-bandit formulation without risk criteria, where an action is a subset of base arms (in ${0,1}^d$) rather than a weight vector (in $[0,1]^d$) and the goal is only to maximize the expected rewards rather than reward-risk trade-off. Another studies~\cite{online_variance_minimization2006,online_variance_minimization2012} consider the adversarial online optimization formulation instead of the stochastic setting, where we can directly observe a distinct covariance matrix at each timestep. 
(ii) We do not assume a prior knowledge or direct feedback on the covariance matrix as in~\cite{online_variance_minimization2006,online_variance_minimization2012,known_covariance2016}. (iii) Our results for  full-information and full-bandit feedback explicitly characterize the impacts of   arbitrary  covariance structures, whereas prior results, e.g., \cite{known_covariance2016,covariance-adapting2020}, only focus on independent or positively-correlated cases. 
%which introduces additional challenges in bounding the regret incurred by risk.
%The above difference require a significantly different theoretical analysis compared to prior work, for which we also develop several novel techniques. 	
These differences pose new challenges in algorithm design and analysis, and demand new  analytical techniques.

%result in a noticeable difference in theoretical analysis.
%(iii) We study three feedback settings with theoretical analysis instead of only considering semi-bandit feedback~\cite{known_covariance2016,covariance-adapting2020} or full-information adversarial feedback~\cite{online_variance_minimization2006,online_variance_minimization2012}. 
%
We summarize the main contributions as follows. 
\begin{itemize}
	\item We propose a novel risk-aware bandit model called continuous mean-covariance bandit (CMCB), which considers correlated options with continuous decision space, and characterizes the trade-off between reward and covariance-based risk. 
	Motivated by practical reward observation scenarios, three feedback settings are considered under CMCB, i.e., full-information (CMCB-FI), semi-bandit (CMCB-SB) and full-bandit (CMCB-FB).

	\item We design an algorithm $\empiricalmax$ for CMCB-FI with a  $\tilde{O}(\sqrt{T})$ regret, and develop a novel analytical technique to build a relationship on risk between chosen actions and the optimal one
	using properties of the sampling strategy. 
	%technique that utilizes the property of sample strategy 
	%	re-utilizes the empirical information implied in sample strategy to bound the risk of chosen actions. 
	We also derive a nearly matching lower bound, by analyzing the gap between hindsight knowledge and available empirical information under a Bayesian environment.
	% 	construce the ted based on a novel Bayesian analytical procedure for continuous and full-information bandit problems.

	\item For CMCB-SB, we develop  $\algsemibandit$, an algorithm that exploits the estimated covariance information to construct confidence intervals and achieves a $\tilde{O}(\sqrt{T})$ regret. 
	A regret lower bound  is also established, by  investigating the necessary regret paid to differentiate  two  well-chosen distinct instances.

	\item We propose a novel algorithm $\algfullbandit$ for CMCB-FB, which employs a well-designed action set to carefully estimate the reward means and covariance, and achieves a $\tilde{O}(T^\frac{2}{3})$ regret guarantee under the severely limited feedback. 
	%
	%	\item 
	%	We also provide empirical evaluations to demonstrate the performance superiority of our proposed algorithms. \longbo{either strengthen it or remove it}
\end{itemize}
To our best knowledge, our work is the \emph{first} to explicitly characterize the influences of \emph{arbitrary} covariance structures on learning performance in risk-aware bandits. %  under  covariance structures. 
Our results shed light into risk management in online decision making with correlated options.
%
%This paper is organized as follows. 
%Section~\ref{sec:related_work} reviews the related work. Section~\ref{sec:formulation} gives the problem formulation of CMCB.  Section~\ref{sec:full_information} presents the algorithm for CMCB-FI and the regret upper and lower bounds. Section~\ref{sec:semi_bandit} provides the  algorithm for CMCB-SB and the regret upper and lower bounds. Section~\ref{sec:full_bandit} introduces the algorithm for CMCB-FB with regret analysis.  Section~\ref{sec:experiments} presents the empirical evaluation. Section~\ref{sec:conclusion} concludes this work and discusses on future work.
Due to space limitation, we defer all detailed proofs to the supplementary material.

\section{Related Work} \label{sec:related_work}
%In this section, we review two lines of related work to ours.
%\longbo{skip for now}
\textbf{(Risk-aware Bandits)}
Sani et al.~\cite{sani2012risk} initiate the classic mean-variance paradigm~\cite{Markowitz1952,Markowitz2017} in bandits, and formulate the mean-variance bandit problem, where the learner plays a single arm each time and the risk is measured by the variances of individual arms.
Vakili \& Zhao~\cite{vakili2015mean,vakili2016risk} further study this problem under a different metric and complete the regret analysis.
Zhu \& Tan~\cite{thompson_sampling_mean_variance2020} provide a Thompson Sampling-based algorithm for mean-variance bandits.
In addition to variance, several works consider other risk criteria.
%\citet{VaR2016} investigate the VaR measure  and \citet{CVaR_galichet2013,distribution_oblivious_risk_aware2019} use CVaR to quantify the risk. 
The VaR measure is studied in \cite{VaR2016}, and CVaR is also investigated to quantify the risk in \cite{CVaR_galichet2013,distribution_oblivious_risk_aware2019}.
Cassel et al.~\cite{general_risk_criteria2018} propose a general risk measure named empirical distributions performance measure (EDPM) and present an algorithmic framework for EDPM.
All existing studies on risk-aware bandits only consider discrete decision space and assume independence among arms, and thus they cannot be applied to our CMCB problem.

\textbf{(Bandits with Covariance)}
%\longbo{specify that they focus on CMAB}
In the stochastic MAB setting, while there have been several works~\cite{known_covariance2016,covariance-adapting2020} on covariance, they focus on the combinatorial bandit problem without considering risk.
%\yihan{Highlight the difference.}
% which plays a subset of base arms at each time and only aims to maximize the expected reward.
Degenne \& Perchet~\cite{known_covariance2016} study the combinatorial semi-bandits with correlation, which assume a known upper bound on the covariance, and design an algorithm with this prior knowledge of covariance.
Perrault et al.~\cite{covariance-adapting2020} further investigate this problem without the assumption on covariance under the sub-exponential distribution framework, and propose an algorithm with a tight asymptotic regret analysis. 
In the adversarial setting, Warmuth \& Kuzmin~\cite{online_variance_minimization2006,online_variance_minimization2012} consider an online variance minimization problem, where at each timestep the learner chooses a weight vector and receives a covariance matrix, and propose the exponentiated gradient based algorithms. 
Our work differs from the above works in the following aspects:
compared to  \cite{known_covariance2016,covariance-adapting2020}, we consider a continuous decision space instead of combinatorial space, study the reward-risk objective instead of only maximizing the expected reward, and investigate two more feedback settings other than the semi-bandit feedback.
Compared to \cite{online_variance_minimization2006,online_variance_minimization2012}, we consider the stochastic environment  and in our case, the covariance cannot be directly observed and needs to be estimated.
%\longbo{have 2-3 more sentences to state the difference}
%\yihan{Revised the related work.}

\section{Continuous Mean-Covariance Bandits (CMCB)} \label{sec:formulation}
Here we present the formulation for the Continuous Mean-Covariance Bandits (CMCB) problem. % under three feedback settings.
%
%\textbf{Continuous Mean-Covariance Bandits (CMCB).} 
%\longbo{stop here}
Specifically, a learner is given $d$ base arms labeled $1, \dots, d$ and a decision (action) space $\cD \subseteq \triangle_{d}$, where  $\triangle_{d}=\{\bw \in \R^d:  0 \leq w_i \leq 1, \forall i \in [d], \  \sum_i w_i=1\}$ denotes the probability simplex in $\R^d$. 
The base arms are associated with an unknown $d$-dimensional joint reward distribution with mean vector $\bs{\theta}^*$
%$\bs{\theta}^*=(\theta^*_1, \dots, \theta^*_d)^\top$ 
and positive semi-definite covariance matrix $\Sigma^*$, where  $\Sigma^*_{ii} \leq 1$ for any $i \in [d]$ without loss of generality. 
%Following the standard assumption in the bandit literature~\cite{APT2016,known_covariance2016,tao_bai_lb_2018}, we assume the joint reward distribution has a sub-Gaussian tail, i.e.,
%Each base arm $i$ is associated with an unknown $1$-sub-Gaussian reward distribution with mean of $\theta^*_i$, and the covariance matrix of the $d$ reward distributions is $\Sigma^*$.
%Let $\bs{\theta}^*=(\theta^*_1, \dots, \theta^*_d)^\top$. 
%
For any action $\bw \in \cD$, which can be regarded as a weight vector placed on the base arms, 
the instantaneous reward-risk utility is given by the following \emph{mean-covariance} function 
%\yihan{Unify the notions throughout the paper.}

\begin{eqnarray}
f(\bw)=\bw^\top \bs{\theta}^* - \rho \bw^\top \Sigma^* \bw, \label{eq:mean-cov-def}
\end{eqnarray}
where $\bw^\top \bs{\theta}^*$ denotes the expected reward, $\bw^\top \Sigma^* \bw$ represents the risk, i.e., reward variance, and  $\rho>0$ is a risk-aversion parameter that controls the weight placed on the risk. 
We define the optimal action as $\bw^*=\argmax_{\bw \in \cD} f(\bw)$. 
Compared to linear bandits~\cite{improved_linear_bandit2011,conservative_linear_bandit2017}, the additional quadratic term in $f(\bw)$ raises significant challenges in estimating the covariance, bounding the risk of chosen actions and deriving  covariance-dependent regret bounds. % which can explicitly capture the impacts of correlations.

At each timestep $t$, the environment generates an underlying (unknown to the learner) random reward vector
$\bs{\theta}_t=\bs{\theta}^*+\bs{\eta}_t$ according to the joint distribution, where $\bs{\eta}_t$ is a zero-mean noise vector and it is independent among different timestep $t$.
Note that here we consider an additive vector noise to the parameter $\bs{\theta}^*$, instead of the simpler scalar noise added in the observation (i.e., $y_t=\bw_t^\top \bs{\theta}^*+\eta_t$) as in linear bandits~\cite{improved_linear_bandit2011,conservative_linear_bandit2017}. 
Our noise setting better models the real-world scenarios where distinct actions incur different risk, and enables us to explicitly quantify the correlation effects. Following the standard assumption in the bandit literature \cite{APT2016,known_covariance2016,thompson_sampling_mean_variance2020},
% we assume that the noise $\bs{\eta}_t$ is  $\Sigma^*$-sub-Gaussian ($\Sigma^*$ unknown), i.e.,
%$\forall \bu \in \R^d$, $\E[\exp(\bu^\top \bs{\eta}_t)] \leq \exp(\frac{1}{2} \bu^\top \Sigma^* \bu)$.
we assume the noise is sub-Gaussian, i.e.,
$\forall \bu \in \R^d$, $\E[\exp(\bu^\top \bs{\eta}_t)] \leq \exp(\frac{1}{2} \bu^\top \Sigma^* \bu)$, where $\Sigma^*$ is unknown.
%\yihan{Added the sub-Gaussian assumption.}
The learner selects an action $\bw_t  \in \cD$ and observes the feedback according to a certain structure (specified later).
%which is categorized into full-information feedback, semi-bandit feedback and full-bandit feedback, and formally defined later.  
For any time horizon $T>0$, define the expected cumulative regret as
$$
\ex\mbr{\cR(T)}=\sum_{t=1}^{T} \ex \mbr{ f(\bw^*)-f(\bw_t) }. 
$$
The objective of the learner is to minimize  $\ex[\cR(T)]$.
Note that our mean-covariance function Eq.~\eqref{eq:mean-cov-def} extends the popular mean-variance measure~\cite{sani2012risk,vakili2016risk,thompson_sampling_mean_variance2020} to the continuous decision space. 
%\yihan{Moved the explanation for the mean-covariance function here. Slightly revised this sentence.}

%\longbo{do we want to move the specific settings to the sections directly? } \yihan{Moved them to the later sections.}

In the following,  we consider three feedback settings motivated by reward observation scenarios in practice, including (i) full-information (CMCB-FI),  observing random rewards of all base arms after a pull, (ii) semi-bandit (CMCB-SB), only observing  random rewards of the selected base arms,  and (iii) full-bandit (CMCB-FB),  only seeing a weighted sum of the random rewards from base arms.
%  
%Full-information feedback represents that we can observe the random rewards of all base arms after a pull,  semi-bandit feedback represents refers to that we only observe the random rewards of the selected base arms (which we place positive weights on), and full-bandit feedback stands for that after we select a weight vector, only the weighted sum of the random rewards from base arms is observed.
We will present the formal definitions of these three feedback settings in the following sections. % for CMCB-FI (Section~\ref{sec:full_information}), CMCB-SB (Section~\ref{sec:semi_bandit}) and CMCB-FB (Section~\ref{sec:full_bandit}).

\textbf{Notations.} %Throughout the paper, we make use of the following notations. 
%For a positive definite matrix $A$ and vector $\bx$, $\|\bx\|_A=\sqrt{\bx^\top A \bx}$.
For action $\bw \in \cD$, let $I_{\bw}$ be a diagonal matrix such that $I_{\bw,ii}=\I\{w_i>0\}$. 
For a matrix $A$, let $A_{\bw}=I_{\bw} A I_{\bw}$ and $\Lambda_A$ be a diagonal matrix with the same diagonal as $A$.

\section{CMCB with Full-Information Feedback (CMCB-FI)}
\label{sec:full_information}
We start with CMCB with full-information feedback (CMCB-FI). In this setting, at each timestep $t$, the learner selects $\bw_t \in \triangle_{d}$ and observes the random reward $\theta_{t,i}$ for all $i \in [d]$.
CMCB-FI provides an online learning model for the celebrated Markowitz~\cite{Markowitz1952,Markowitz2017} problem in finance, where investors select portfolios and can observe the prices of all stocks at the end of the trading days. %\longbo{no recent work on this?}
%\yihan{Added recent references.}
%, with the goal of achieving high returns and low risk.  

Below, we propose an efficient  Mean-Covariance Empirical algorithm ($\empiricalmax$)  for CMCB-FI, and provide a novel regret analysis that  fully characterizes how an arbitrary covariance structure affects the regret performance. 
We also present a nearly matching lower bound for CMCB-FI to demonstrate the optimality of  $\empiricalmax$.

\subsection{Algorithm for CMCB-FI}

\begin{algorithm}[t!]
	\caption{$\empiricalmax$} \label{alg:empirical_max}
	\begin{multicols}{2}
		\begin{algorithmic}[1]
			\STATE \textbf{Input:} Risk-aversion parameter $\rho>0$.
			\STATE Initialization: Pull action $\bw_1=(\frac{1}{d}, \dots, \frac{1}{d})$, and observe $\bs{\theta}_1=(\theta_{1,1}, \dots, \theta_{d,1})^\top \!$. 
			$\hat{\theta}^*_{1,i} \leftarrow  \theta_{1,i}, \  \forall i \in [d]$.
			$\hat{\Sigma}_{1,ij}= (\theta_{1,i}-\hat{\theta}^*_{1,i})(\theta_{1,j}-\hat{\theta}^*_{1,j}), \  \forall i,j \in [d]$.
			\FOR{$t=2,3,\dots$}
			\STATE $\bw_t=\argmax \limits_{\bw \in \triangle_{d}} ( \bw^\top \bs{\hat{\theta}}^*_{t-1} - \rho \bw^\top \hat{\Sigma}_{t-1} \bw )$
			\STATE Pull $\bw_t$, observe $\bs{\theta}_t=(\theta_{t,1}, \dots, \theta_{t,d})^\top$
			\STATE $\hat{\theta}^*_{t,i} \leftarrow \frac{1}{t} \sum_{s=1}^{t} \theta_{s,i}, \  \forall i \in [d]$
			\STATE $\!\! \hat{\Sigma}_{t,ij} \!\!= \!\! \frac{1}{t} \!\! \sum \limits_{s=1}^{t} \! (\theta_{s,i} \!-\! \hat{\theta}^*_{t,i}) \! (\theta_{s,j} \!-\! \hat{\theta}^*_{t,j}), \! \forall i,\! j \!\in\! [d]$
			\ENDFOR
		\end{algorithmic}
	\end{multicols}
	\vspace*{-1em}
\end{algorithm}

%For CMCB-FI, we propose a simple and efficient algorithm $\empiricalmax$, which is illustrated in Algorithm~\ref{alg:empirical_max}, and provide novel regret analysis to show how the correlation among base arms affects the learning performance and that our algorithm achieves the \emph{optimal} regret bound.

Algorithm~\ref{alg:empirical_max} shows the detailed steps of $\empiricalmax$. %, which achieves the optimal regret bound within a logarithmic factor. 
% 
%The main idea of $\empiricalmax$ is to estimate the expected reward  and covariance   by their empirical estimates, and select the action that maximizes the empirical mean-covariance value. % with empirical estimation. 
% 
Specifically, at each timestep $t$, we use the empirical mean $\bs{\hat{\theta}}_t$ and  covariance $\hat{\Sigma}_t$ to estimate $\bs{\theta}^*$ and $\Sigma^*$, respectively. Then, we form $\hat{f}_{t}(\bw)= \bw^\top \bs{\hat{\theta}}_{t} - \rho \bw^\top \hat{\Sigma}_{t} \bw$, an empirical mean-covariance function of $\bw \in \triangle_{d}$, and  always choose the action with the maximum empirical objective value. 
%\longbo{briefly highlight that the analysis is non-trivial and our bound will give insights into the correlation}\yihan{Discussed the analysis in the following paragraph.}

Although $\empiricalmax$ appears to be intuitive, its analysis is highly non-trivial due to  covariance-based risk in the objective. In this case, a naive universal bound cannot characterize the impact of covariance, 
%will lose the covariance information, 
and prior gap-dependent analysis (e.g.,~\cite{known_covariance2016,covariance-adapting2020}) cannot be applied to solve our continuous space analysis with gap approximating to zero.
% 
% in our regret analysis,
% the covariance  introduces new challenges in developing a tight confidence region for the expected reward of an action.
%, which is related to the correlation among base arms. 
Instead, we develop two novel techniques
% which enables our regret bound 
to handle the covariance, including 
using the actual covariance to analyze the confidence region of the expected rewards, 
%, which is adapted to the covariance. 
and exploiting the empirical information of the sampling strategy to bound the risk gap between selected actions and the optimal one.
%Specifically, the key challenge falls on bounding the $\sum_{t=1}^T \sqrt{ \bw_t^\top  \Sigma^* \bw_t }$ term. Naively using a universal bound $ \sqrt{\bw_{\textup{max}}^\top \Sigma^* \bw_{\textup{max}}}$ for $\sqrt{ \bw_t^\top  \Sigma^* \bw_t }$ will lose the covariance information. Instead, we utilize the property $f(\bw^*)-f(\bw_t) \leq  r_t(\bw^*)+r_t(\bw_t)$ of the chosen action $\bw_t$ to establish an explicit relationship between $\bw_t^\top  \Sigma^* \bw_t$ and ${\bw^*}^\top \Sigma^* {\bw^*}$, which is novel and explicitly captures the impacts of arbitrary correlations.
Different from prior works~\cite{known_covariance2016,covariance-adapting2020}, which  assume  a prior knowledge on covariance or only focus on the independent and positively-related cases, our analysis does not require extra knowledge of covariance and explicitly quantifies the effects of arbitrary covariance structures.
%of the chosen action. 
%building a bridge between the risk of the optimal action
%% $\bw_*^\top \Sigma^* \bw_*$ 
%and the chosen action.
% $\bw_t^\top \Sigma^* \bw_t$. 
%Moreover, we will show later that  $\empiricalmax$'s $O(\sqrt{T})$ regret  matches the lower bound for the problem and cannot be further improved (ignoring the logarithmic factor).
%\yihan{Revised this sentence to be more accurate, because there still exists a logarithmic gap and we only compare the $T$-dependent term.}
%
The regret performance of $\empiricalmax$ is summarized in Theorem~\ref{thm:ub_full_information}. % and present a proof sketch here.
%, which captures the relationship between the covariance structure and learning performance.

\begin{theorem}[Upper Bound for CMCB-FI]
	\label{thm:ub_full_information}
	Consider the continuous mean-covariance bandits with full-information feedback (CMCB-FI).
	For any $T \geq 1+\frac{\Sigma^*_{\textup{max}}}{{\bw^*}^\top \Sigma^* {\bw^*}}$, algorithm $\empiricalmax$ (Algorithm~\ref{alg:empirical_max}) achieves an expected cumulative regret bounded by
	\begin{align}
	O \Bigg( \bigg(  \bm{\min}  \Big\{ \sqrt{{\bw^*}^{\top}  \Sigma^* \bw^*} + \rho^{-\frac{1}{2}}  \sqrt{  {\theta}^*_{\textup{max}}-{\theta}^*_{\textup{min}} } , \sqrt{ \Sigma^*_{\max} } \Big\} + \rho \bigg) \ln T  \sqrt{d T} \Bigg), \label{eq:ub_fi} 
	\end{align}
	where ${\theta}^*_{\textup{max}}=\max_{i \in [d]} \theta^*_{i}$, ${\theta}^*_{\textup{min}}=\min_{i \in [d]} \theta^*_{i}$ and $\Sigma^*_{\max} = \max_{i \in [d]} \Sigma^*_{ii} $.
\end{theorem}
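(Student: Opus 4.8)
The plan is to bound the per-step regret $f(\bw^*)-f(\bw_t)$ on a high-probability event on which $\hat{\bs\theta}^*_{t-1}$ and $\hat\Sigma_{t-1}$ concentrate around $\bs\theta^*$ and $\Sigma^*$, and to control the complementary event crudely using that $f$ is deterministically bounded on $\triangle_d$ (by $\|\bs\theta^*\|_\infty+\rho\,\mathrm{tr}(\Sigma^*)$) together with sub‑Gaussian tail integration; driving the failure probability down to $T^{-2}$ is what produces the $\ln T$ factor.

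First I would decompose, using that $\bw_t$ is the empirical maximizer, i.e.\ $\hat f_{t-1}(\bw_t)\ge\hat f_{t-1}(\bw^*)$ with $\hat f_{t-1}(\bw)=\bw^\top\hat{\bs\theta}^*_{t-1}-\rho\,\bw^\top\hat\Sigma_{t-1}\bw$:
\begin{align*}
f(\bw^*)-f(\bw_t)&\le\big[f(\bw^*)-\hat f_{t-1}(\bw^*)\big]+\big[\hat f_{t-1}(\bw_t)-f(\bw_t)\big]\\
&=\underbrace{(\bw^*-\bw_t)^\top(\bs\theta^*-\hat{\bs\theta}^*_{t-1})}_{(A)}+\rho\underbrace{\big[\bw_t^\top(\Sigma^*-\hat\Sigma_{t-1})\bw_t-{\bw^*}^\top(\Sigma^*-\hat\Sigma_{t-1})\bw^*\big]}_{(B)}.
\end{align*}
For $(B)$, since $\bw_t,\bw^*\in\triangle_d$ have unit $\ell_1$‑norm I would bound each quadratic form by the entrywise max‑norm, $|(B)|\le 2\|\hat\Sigma_{t-1}-\Sigma^*\|_{\max}$, and then prove the entrywise rate $\|\hat\Sigma_{t-1}-\Sigma^*\|_{\max}=O(\sqrt{\ln(T)/t})$ by expanding $\hat\Sigma_{t-1}$, isolating the mean‑estimation cross terms, and applying a Bernstein bound to the sub‑exponential products $\eta_{s,i}\eta_{s,j}$ (using $\Sigma^*_{ii}\le1$). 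Summed over $t$ this gives the $\rho\,\ln(T)\sqrt T$ term.

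The heart of the proof is $(A)$, for which I would derive two bounds and take the minimum. The crude one is $|(A)|\le 2\|\hat{\bs\theta}^*_{t-1}-\bs\theta^*\|_\infty=O(\sqrt{\Sigma^*_{\max}\ln(T)/t})$ from a union bound over the $d$ coordinates (each $\sqrt{\Sigma^*_{ii}/t}$‑sub‑Gaussian), yielding the $\sqrt{\Sigma^*_{\max}}$ branch. The refined one — the main novelty — combines (i) a \emph{covariance‑adaptive} deviation bound $|\bw^\top(\hat{\bs\theta}^*_{t-1}-\bs\theta^*)|=O(\sqrt{\bw^\top\Sigma^*\bw\,\ln(T)/t})$ holding \emph{uniformly over all $\bw\in\triangle_d$} (since this is an average of $\sqrt{\bw^\top\Sigma^*\bw}$‑sub‑Gaussian terms; a fine net of $\triangle_d$ plus a Lipschitz patching step upgrades it from fixed to uniform $\bw$), with (ii) a bound on the risk of the \emph{chosen} action extracted from the sampling rule: from $\hat f_{t-1}(\bw_t)\ge\hat f_{t-1}(\bw^*)$ and $\bw_t,\bw^*\in\triangle_d$ one gets $\rho\,\bw_t^\top\hat\Sigma_{t-1}\bw_t\le\rho\,{\bw^*}^\top\hat\Sigma_{t-1}\bw^*+(\max_i\hat\theta^*_{t-1,i}-\min_i\hat\theta^*_{t-1,i})$, and transferring this to the true quantities (again via $\|\hat\Sigma_{t-1}-\Sigma^*\|_{\max}$ and $\|\hat{\bs\theta}^*_{t-1}-\bs\theta^*\|_\infty$) yields $\bw_t^\top\Sigma^*\bw_t\le{\bw^*}^\top\Sigma^*\bw^*+\rho^{-1}(\theta^*_{\textup{max}}-\theta^*_{\textup{min}})+o(1)$. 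Feeding this ``risk‑ellipsoid'' containment of $\bw_t$ into the covariance‑adaptive bound applied to $\bw^*$ and to $\bw_t$ gives $|(A)|=O\big((\sqrt{{\bw^*}^\top\Sigma^*\bw^*}+\rho^{-1/2}\sqrt{\theta^*_{\textup{max}}-\theta^*_{\textup{min}}})\sqrt{\ln(T)/t}\big)$.

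Summing the per‑step bounds over $t=1,\dots,T$ via $\sum_{t\le T}t^{-1/2}=O(\sqrt T)$, taking the minimum of the two bounds for $(A)$, and adding the $O(1)$ contribution of the failure event gives \eqref{eq:ub_fi}. The main obstacle is part (ii): because the action space is continuous the optimality gap $f(\bw^*)-f(\bw_t)$ may be arbitrarily small, so no gap‑dependent argument applies, and because $\bw_t$ is data‑dependent the mean‑deviation control must be uniform in $\bw$ \emph{without} losing the covariance‑adaptive $\sqrt{\bw^\top\Sigma^*\bw}$ scaling — this forces one to chain the ``$\bw_t$ maximizes $\hat f_{t-1}$'' property into a quantitative risk bound for $\bw_t$ rather than relying on the trivial $\bw_t^\top\Sigma^*\bw_t\le\|\Sigma^*\|_{\mathrm{op}}$. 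A secondary nuisance is keeping the empirical‑covariance concentration clean when centering by $\hat{\bs\theta}^*_{t-1}$ instead of $\bs\theta^*$, which I would handle by the standard expansion plus the normalization $\Sigma^*_{ii}\le1$.
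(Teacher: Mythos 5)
Your proposal is correct and follows the same overall architecture as the paper's proof: decompose the regret via the empirical-maximizer property, take a minimum of a crude $\sqrt{\Sigma^*_{\max}}$ branch and a refined branch, and obtain the refined branch by chaining a covariance-adaptive concentration for $\bw^\top(\bs{\hat\theta}_{t-1}-\bs\theta^*)$ with a bound on $\bw_t^\top\Sigma^*\bw_t$ extracted from the sampling rule. Two steps differ in execution. First, for the risk of the chosen action, the paper combines the lower bound $\Delta_t\ge \theta^*_{\min}-\theta^*_{\max}+\rho(\bw_t^\top\Sigma^*\bw_t-{\bw^*}^\top\Sigma^*\bw^*)$ with the upper bound $\Delta_t\le h_t(\bw^*)+h_t(\bw_t)$, whose right-hand side itself contains $\sqrt{\bw_t^\top\Sigma^*\bw_t}$, and solves the resulting quadratic inequality $\rho x-c_1\sqrt{x}-c_2\le 0$; your route of directly comparing $\rho\,\bw_t^\top\hat\Sigma_{t-1}\bw_t\le \rho\,{\bw^*}^\top\hat\Sigma_{t-1}\bw^*+(\hat\theta_{\max}-\hat\theta_{\min})$ and transferring to true quantities via the entrywise covariance error is cleaner and gives the same order. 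Second, for the uniform covariance-adaptive mean concentration, the paper uses a self-normalized super-martingale with a Gaussian mixture (Lemma~\ref{lemma:full_info_con_means}), which produces an explicit regularization floor $\lambda\Lambda_{\Sigma^*}$ inside the confidence width and a $\tfrac{d}{2}\ln(1+e/\lambda)$ term in $\beta(\delta_t)$, whereas you propose a net plus Lipschitz patching. Your version needs the same kind of floor: the patching error scales as $\epsilon\sqrt{\Sigma^*_{\max}}$, so to preserve the $\sqrt{\bw^\top\Sigma^*\bw}$ scaling in directions where $\bw^\top\Sigma^*\bw\ll\Sigma^*_{\max}$ you must either shrink $\epsilon$ (paying a $d\ln(\Sigma^*_{\max}/\bw^\top\Sigma^*\bw)$ in the log factor, which can blow up for near-degenerate $\Sigma^*$) or add a regularizer exactly as the paper does; relatedly, the $d$-dependent constant in the log factor is swept into $\ln T$ only for $t$ beyond a $d$- and $\lambda$-dependent burn-in, which both proofs silently absorb into the additive constant. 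This is a presentational gap to fill rather than a flaw in the approach.
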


\emph{Proof sketch.}
Let $D_t$ be the diagonal matrix which takes value $t$ at each diagonal entry.
%$c_1,c_2,c_3,c_4$ are all positive constants and will be used in the following proof.
We first build confidence intervals for the expected rewards of actions and the covariance as 
$|\bw^\top \bs{\theta}^*-\bw^\top \bs{\hat{\theta}}_{t-1}| \leq p_t(\bw) \triangleq c_1 \sqrt{\beta_t} \sqrt{\bw^\top D_{t-1}^{-1}(\lambda \Lambda_{\Sigma^*} D_{t-1} + \sum_{s=1}^{t-1} \Sigma^* ) D_{t-1}^{-1} \bw}$ and $|\Sigma^*_{ij}-\hat{\Sigma}_{ij,t-1}|  \leq q_{t} \triangleq c_2 \frac{\ln t}{\sqrt{t-1}}$. 
Here $\beta_t \triangleq \ln t + d\ln(1+\lambda^{-1})$, $\lambda \triangleq \frac{{\bw^*}^\top \Sigma^* {\bw^*}}{\Sigma^*_{\textup{max}}}$, and $c_1$ and $c_2$ are positive constants. 
Then, we obtain the confidence interval of $f(\bw)$ as $|\hat{f}_{t-1}(\bw)-f(\bw)|\leq r_t(\bw) \triangleq p_t(\bw)+ \rho {\bw}^\top Q_t {\bw}$, where $Q_t$ is a matrix with all entries equal to $q_{t}$. 
Since algorithm $\empiricalmax$ always plays the empirical best action, we have $f(\bw^*)-f(\bw_t) \leq \hat{f}_{t-1}(\bw^*)+ r_t(\bw^*)-f(\bw_t) \leq \hat{f}_{t-1}(\bw_t)+ r_t(\bw^*)-f(\bw_t) \leq r_t(\bw^*)+r_t(\bw_t)$. 
Plugging the definitions of $f(\bw)$ and $r_t(\bw)$, we have 
\begin{align}
\!\! - \! \Delta_{\theta^*} \!\! + \!\! \rho \sbr{\bw_t^\top \Sigma^* \bw_t -  {\bw^*}^\top \Sigma^* {\bw^*}}  \! \leq \! f(\bw^*) \! - \! f(\bw_t) 
\! \overset{\textup{(a)}}{\leq} \! c_3 \frac{ \!\! \sqrt{\beta_t} ( \! \sqrt{    {\bw^*}^\top  \Sigma^* \bw^* }  \!\!+\!\!  \sqrt{ \bw_t^\top  \Sigma^* \bw_t } ) + \rho \ln t }{\sqrt{t-1}},
\!\! \label{eq:proof_sketch_ineq}
\end{align}
where $\Delta_{\theta^*}=\theta^*_{\textup{max}}-\theta^*_{\textup{min}}$ and $c_3$ is a positive constant. Since our goal is to bound the regret $f(\bw^*) - f(\bw_t)$ and in inequality~(a) only the $\sqrt{ \bw_t^\top  \Sigma^* \bw_t }$ term is a  variable, the challenge falls on bounding $\bw_t^\top  \Sigma^* \bw_t$. Note that the left-hand-side of Eq.~\eqref{eq:proof_sketch_ineq} is linear with respect to $\bw_t^\top  \Sigma^* \bw_t$ and the right-hand-side only contains $\sqrt{ \bw_t^\top  \Sigma^* \bw_t }$. 
%\yihan{We do not use ``LHS'' and ``RHS'' later, so I remove them.}
% 
Then, using the property of sampling strategy on $\bw_t$, i.e.,  Eq.~\eqref{eq:proof_sketch_ineq},  again, after some algebraic analysis, 
we obtain $\bw_t^\top  \Sigma^* \bw_t \leq c_4( {\bw^*}^\top \Sigma^* {\bw^*}+  \frac{1}{\rho}  \Delta_{\theta^*}    +  \frac{1}{\rho} \sqrt{ \frac{\beta_t}{t-1} }    \sqrt{ {\bw^*}^\top  \Sigma^* \bw^* }    +   \frac{\ln t}{\sqrt{t-1}} + \frac{\beta_t}{\rho^2(t-1)}) $ for some constant $c_4$. 
Plugging it into inequality~(a) and doing a summation over $t$, we obtain the theorem.
\hfill $\qed$  

%\longbo{dont seem necessary, remove ?}This proof exploits the fact that $\bw_t$ is always the empirical best action and uses Eq.~\eqref{eq:proof_sketch_ineq} to build a bridge between $\bw_t^\top  \Sigma^* \bw_t$ and ${\bw^*}^\top  \Sigma^* \bw^*$, which enables us to fully capture the impact of the covariance structure. 
%\yihan{Removed the paragraph of the explanation on proof novelty.}

\textbf{Remark 1.}
%As we will show in Section~\ref{section:lb_full_information}, 
%this $\tilde{O}(\sqrt{T})$ regret matches the lower bound up to a logarithmic factor in terms of $T$. 
%Moreover, 
Theorem \ref{thm:ub_full_information}  fully characterizes how an \emph{arbitrary} covariance structure impacts the regret bound. 
%Negative correlation leads to a lower regret than the independent and positively-correlated structures. % 
To see this, note that in Eq. (\ref{eq:ub_fi}), under the $\bm{\min}$ operation, the first $\sqrt{{\bw^*}^\top  \Sigma^* \bw^*}$-related term dominates under reasonable $\rho$, and shrinks from positive to negative correlation, which implies that the more the base arms are negatively (positively) correlate, the lower (higher) regret the learner suffers.  
%\yihan{Specified the term.}
% 
The intuition behind is that the negative (positive) correlation diversifies (intensifies) the risk of estimation error and narrows (enlarges) the confidence region for the expected reward of an action, which leads to a reduction (an increase) of regret. 
%
%In contrast to previous works~\cite{known_covariance2016,covariance-adapting2020}, which also study arm covariance, we do not impose any extra assumption on the covariance matrix, and fully characterize the relationship between  regret and an arbitrary covariance structure. %, instead of only considering the independent and positively-correlated cases.

Also note that when $\rho=0$, the CMCB-FI problem reduces to a $d$-armed bandit problem with full-information feedback,  
and Eq.~(\ref{eq:ub_fi}) becomes $\tilde{O}( \sqrt{ d \Sigma^*_{\max} T})$.
For this degenerated case, the optimal gap-dependent regret is $O(\frac{\Sigma^*_{\max}}{\Delta})$ for constant gap $\Delta>0$. By setting $\Delta=\sqrt{\Sigma^*_{\max} / T}$ at this gap-dependent result, one obtains the optimal gap-independent regret $O(\sqrt{\Sigma^*_{\max} T})$. Hence, when $\rho=0$,  Eq.~(\ref{eq:ub_fi}) still offers a tight gap-independent regret bound with respect to $T$.

\subsection{Lower Bound for CMCB-FI} \label{section:lb_full_information}
Now we provide a regret lower bound for CMCB-FI to corroborate the optimality of  $\empiricalmax$.

%\yihan{Rewrote the idea for the lower bound analysis.}
Since CMCB-FI considers full-information feedback and continuous decision space where the reward gap $\Delta$ (between the optimal action and the nearest optimal action) approximates to zero, existing lower bound analysis for linear~\cite{dani2008stochastic,dani_price2008} or discrete~\cite{lai_robbins1985,known_covariance2016,covariance-adapting2020} bandit problems  cannot be applied to this problem.
% \yihan{Revised to ``$\Delta$ approximates to zero''. I feel that $\Delta=0$ is not accurate.}\longbo{not sure approximates to is a correct phrase}

To tackle this challenge, we contribute a new analytical procedure to establish the lower bound for continuous and full-information bandit problems from the Bayesian perspective.
%, which is very different from  either the  with constant gaps $\Delta>0$, or linear bandit problems \cite{improved_linear_bandit2011} with integral feedback. %Instead, we build a novel analysis from thec Bayesian perspective.
The main idea is to construct an instance distribution, where $\bs{\theta}^*$ is drawn from a well-chosen prior Gaussian distribution. After $t$ pulls the posterior of $\bs{\theta}^*$ is still Gaussian with a mean vector $\bu_t$ related to sample outcomes. 
Since the hindsight strategy simply selects the action which maximizes the mean-covariance function with respect to $\bs{\theta}^*$ while a feasible strategy can only utilize the sample information ($\bu_t$), we show that 
%when $\bs{\theta}^*$ varies surrounding $\bu_t$, 
any algorithm must suffer $\Omega(\sqrt{T})$ regret due to the gap between random $\bs{\theta}^*$ and its mean $\bu_t$. 
%Since $\bs{\theta}^*$ varies surrounding $\bu_t$, we show that  
%
%
%construct the instances with randomness, i.e., let the expected rewards of base arms to be randomly chosen from a multivariate Gaussian distribution, and then show that any algorithm must suffer $\Omega(\sqrt{T})$ regret due to the variation of the expected rewards. 
% 
%
%
Theorem~\ref{thm:lb_full_information} below formally states this lower bound.

\begin{theorem}[Lower Bound for CMCB-FI]
	\label{thm:lb_full_information}
	There exists an instance distribution of the continuous mean-covariance bandits with full-information feedback problem (CMCB-FI), for which any algorithm has an expected cumulative regret bounded by $\Omega(\sqrt{T})$.
	%	Consider the continuous mean-covariance bandits with full-information feedback problem (CMCB-FI), where $\bs{\theta}^*$ is randomly chosen from $\cN(0, \frac{1}{\omega}I)$ for some $\omega>0$.
	%	Then, for any $T>0$, $\omega=36(d-1)^2 T$ and $\rho=\frac{1}{\sqrt{T}}$, any algorithm for CMCB-FI has an expected cumulative regret bounded by $\Omega(\sqrt{T})$.
\end{theorem}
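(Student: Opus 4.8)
The plan is to prove the matching $\Omega(\sqrt T)$ lower bound by a Bayesian (average-case) argument: I will exhibit a prior over CMCB-FI instances under which the expected regret of \emph{every} algorithm is $\Omega(\sqrt T)$, which in particular produces a single hard instance. Concretely, take $d=2$ and $\cD=\triangle_2$, fix the covariance to a nondegenerate matrix, say $\Sigma^*=\bI$ (so the noise is $\cN(\bs 0,\bI)$ and the risk term is $\rho\|\bw\|^2$), set the risk-aversion parameter to $\rho=c_0/\sqrt T$, and draw the unknown mean from $\bs\theta^*\sim\cN(\bs 0,\tau^2\bI)$ with $\tau=c_1/\sqrt T$ for constants $0<c_1\le c_0$ fixed later. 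The scaling $\rho\sim\tau\sim 1/\sqrt T$ is the crux: it keeps the instance nontrivial (the reward term $\bw^\top\bs\theta^*$ and the risk term $\rho\,\bw^\top\Sigma^*\bw$ are of the same order $1/\sqrt T$, so $\bw^*$ genuinely depends on $\bs\theta^*$ and no fixed action is near optimal) yet unlearnable within $T$ rounds (each pull reveals $\bs\theta^*$ only up to $\Theta(1)$ noise, so after $T$ pulls the posterior spread of $\bs\theta^*$ is still $\Theta(\tau)=\Theta(1/\sqrt T)$, comparable to the prior).

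First I would reduce the statement to bounding a Jensen gap. Since the feedback is full-information, the history $\cH_{t-1}$ contains the i.i.d.\ Gaussian observations $\bs\theta_1,\dots,\bs\theta_{t-1}$ regardless of the actions played, so by Gaussian conjugacy the posterior of $\bs\theta^*$ given $\cH_{t-1}$ is $\cN(\bu_{t-1},\sigma_{t-1}^2\bI)$ with $\sigma_{t-1}^2=(1/\tau^2+t-1)^{-1}$ and $\bu_{t-1}$ a ridge-regularized sample mean. Writing $h(\bs v)=\max_{\bw\in\triangle_d}\big(\bw^\top\bs v-\rho\,\bw^\top\Sigma^*\bw\big)$, so that $f(\bw^*)=h(\bs\theta^*)$, observe that for \emph{any} algorithm the played action $\bw_t$ is $\cH_{t-1}$-measurable, whence
\[
\E\big[f(\bw_t)\mid\cH_{t-1}\big]=\bw_t^\top\bu_{t-1}-\rho\,\bw_t^\top\Sigma^*\bw_t\ \le\ h(\bu_{t-1}).
\]
Hence the conditional expected instantaneous regret is at least $\E[h(\bs\theta^*)\mid\cH_{t-1}]-h(\bu_{t-1})\ge 0$, the Jensen gap of the convex function $h$, and it suffices to show $\sum_{t=1}^T\big(\E[h(\bs\theta^*)]-\E[h(\bu_{t-1})]\big)=\Omega(\sqrt T)$.

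Next I would lower bound each Jensen-gap term via local strong convexity of $h$. The function $h$ is piecewise quadratic: on the convex ``interior region'' $\cV_0$ where the unconstrained maximizer of $\bw\mapsto\bw^\top\bs v-\rho\bw^\top\Sigma^*\bw$ lies in the relative interior of $\triangle_d$ (for $\Sigma^*=\bI$, $d=2$ this is the strip $\{|v_1-v_2|<2\rho\}$), $h$ agrees with the explicit quadratic $q(\bs v)=\tfrac1{4\rho}\|\Pi\bs v\|^2+\tfrac1d\mathbf 1^\top\bs v-\tfrac\rho d$, where $\Pi$ projects onto the tangent space $\{\bx:\mathbf 1^\top\bx=0\}$; elsewhere $h<q$ but $h$ is still convex. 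Letting $\bw_{t-1}=\argmax_{\bw\in\triangle_d}(\bw^\top\bu_{t-1}-\rho\bw^\top\Sigma^*\bw)$, a subgradient of $h$ at $\bu_{t-1}$, convexity gives the Bregman-type bound $h(\bs\theta^*)-h(\bu_{t-1})-\bw_{t-1}^\top(\bs\theta^*-\bu_{t-1})\ \ge\ \tfrac1{4\rho}\|\Pi(\bs\theta^*-\bu_{t-1})\|^2\,\I\{\bu_{t-1},\bs\theta^*\in\cV_0\}$; taking the posterior expectation, using $\E[\bs\theta^*-\bu_{t-1}\mid\cH_{t-1}]=\bs 0$, and checking (for $c_1/c_0$ small enough) that with probability at least an absolute constant $\bu_{t-1}$ and a constant fraction of the posterior mass of $\bs\theta^*$ both lie in $\cV_0$ — the coordinates of $\bs\theta^*$ are within $O(\rho)$ of each other under the prior, and the conditional variance of $\bu_{t-1,1}-\bu_{t-1,2}$ is $O(\tau^2)$ for $t-1\le 1/\tau^2$ — we obtain a Jensen gap $=\Omega(\sigma_{t-1}^2/\rho)$ at every $t\le T$. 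Pairing $\rho=\Theta(1/\sqrt T)$ with $1/\tau^2=\Theta(T)$ makes $\sigma_{t-1}^2=\Theta(1/T)$ for all $t\le T$, so each term is $\Omega(1/(\rho T))$ and the sum is $\Omega(1/\rho)=\Omega(\sqrt T)$; the hard instance is any point in the support of the prior attaining at least the average.

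The step I expect to be the main obstacle is precisely the ``stays in $\cV_0$'' control: $h$ is not globally strongly convex — it flattens into affine pieces near the vertices of $\triangle_d$ — so the curvature estimate is valid only while $\bu_{t-1}$ and enough posterior mass of $\bs\theta^*$ remain in the quadratic region, and the constants $c_0,c_1$ (and the prior mean, here $\bs 0$) must be calibrated so that this holds with constant probability \emph{simultaneously for all} $t\le T$ while keeping $\rho,\tau=\Theta(1/\sqrt T)$ so that the $1/\rho=\Theta(\sqrt T)$ factor survives. A clean way to discharge this is to pass to the constant-probability events $\{|\theta^*_1-\theta^*_2|\le\rho/4\}$ (under the prior) and $\{|\bu_{t-1,1}-\bu_{t-1,2}|\le\rho\}$ (a Gaussian-tail consequence of the variance bound above via $t-1\le 1/\tau^2$), intersect with the posterior-mass estimate for $\bs\theta^*$, and only then invoke the closed forms of $q$ and $\nabla^2 q=\tfrac1{2\rho}\Pi$; the remaining Gaussian-conjugacy algebra and the sum $\sum_{t\le T}(1/\tau^2+t-1)^{-1}=\Theta(1)$ are routine.
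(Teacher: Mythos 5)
Your proposal is correct and follows essentially the same route as the paper's proof: a Gaussian prior $\cN(\bs 0,\Theta(1/T)\bI)$ on $\bs\theta^*$ with $\Sigma^*=\bI$ and $\rho=\Theta(1/\sqrt T)$, reduction to the gap between the posterior-mean optimizer and the true optimizer, the explicit quadratic form of the interior solution giving a per-step regret of order $\frac{1}{\rho}\|\Pi(\bs\theta^*-\bu_{t-1})\|^2$, constant-probability events keeping both points in the interior region, and summation of $\Theta(1/(\rho(t+\omega)))$ terms. Your Jensen-gap framing of the reduction (using that under full information the history is action-independent, so $\E[f(\bw_t)\mid\cH_{t-1}]\le h(\bu_{t-1})$ for any algorithm) is a slightly cleaner justification of the step the paper states as ``the best strategy is to play the optimizer of $\quadopt(\bs\mu_t,\Sigma)$,'' but the substance is identical.
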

% $d \leq 18$ and , $\Sigma^*=I$

\yihan{Revised the statement of the lower bound to ``There exists an instance distribution of ...'', and removed the detailed parameters of constructed instances.}

\textbf{Remark 2.}
This parameter-free lower bound demonstrates that the regret upper bound of $\empiricalmax$ (Theorem~\ref{thm:ub_full_information}) is near-optimal with respect to $T$.
% since under the constructed instance distribution, Theorem~\ref{thm:ub_full_information} also implies a matching $\tilde{O}(\sqrt{T})$ parameter-free result, i.e., when $\rho=1/\sqrt{T}$, Eq.~\eqref{eq:ub_fi} becomes $\tilde{O}( (\sqrt{ \Sigma^*_{\max} }  + 1/\sqrt{T} ) \sqrt{T})=\tilde{O}(\sqrt{T})$.
Unlike discrete bandit problems~\cite{lai_robbins1985,known_covariance2016,covariance-adapting2020} where the optimal regret is usually $\frac{\log T}{\Delta}$ for constant gap $\Delta>0$, 
CMCB-FI has a continuous decision space with gap $\Delta \rightarrow 0$ and a polylogarithmic regret is not achievable in general. 
% In such continuous bandit literature~\cite{Kleinberg_continuum_NIPS2004,dani2008stochastic,dani_price2008}, the parameter ($\bs{\theta}^*$, $\Sigma^*$ and $\rho$) dependent lower bound is an open problem.

\begin{algorithm}[t!]
	\caption{$\algsemibandit$} \label{alg:semi_bandit}
	\begin{multicols}{2}
		\begin{algorithmic}[1]
			\STATE \textbf{Input:} $\! \rho \! > \! 0 $, $c \in (0,\frac{1}{2}]$ and regularization parameter $\! \lambda \! > \! 0$.
			\STATE Initialize: $\forall i \in [d]$, pull $\be_i$ that has $1$ at the $i$-th entry and $0$ elsewhere. $\forall i,j \in [d], i \neq j$, pull $\be_{ij}$ that has $\frac{1}{2}$ at the $i$-th and the $j$-th entries, and $0$ elsewhere. 
			Update $N_{ij}(d^2), \  \forall i,j \in [d]$, $\bs{\hat{\theta}}_{d^2}$ and $\hat{\Sigma}_{d^2}$.
			%	\STATE Define $D_t$ as a diagonal matrix such that $D_{t,ii}=N_{ii}(t)$ for any $t > 0$
			\label{line:sb_initilization}
			\FOR{$t=d^2+1, \dots$}
			%		\STATE $\delta_t \leftarrow \frac{1}{t \ln^2 t}$
			%		\STATE $g_{ij}(t) \leftarrow 16 \left( \frac{3 \ln t}{N_{ij}(t-1)} \vee \sqrt{\frac{3 \ln t}{N_{ij}(t-1)}} \right) + \sqrt{\frac{48 \ln^2 t}{N_{ij}(t-1) N_{ii}(t-1)}} + \sqrt{\frac{36 \ln^2 t}{N_{ij}(t-1) N_{j}(t-1)}} , \quad \forall i,j \in [d]$
			\STATE $\underline{\Sigma}_{t,ij} \leftarrow \hat{\Sigma}_{t-1,ij} - g_{ij}(t)$%\longbo{the $g_{ij}$ terms are not defined here}
			\STATE $\bar{\Sigma}_{t,ij} \leftarrow \hat{\Sigma}_{t-1,ij} + g_{ij}(t)$
			%		\STATE $E_t(\bw) \leftarrow \sqrt{2\beta(\delta_t)} \sqrt{\bw^\top D_{t-1}^{-1} (\lambda \Lambda_{\bar{\Sigma}_{t}} D_{t-1} + \sum_{s=1}^{t-1} \bar{\Sigma}_{s, \bw_s} ) D_{t-1}^{-1} \bw}$
			\STATE $\! \bw_t \!\! \leftarrow  \!\! \argmax \limits_{\bw \in \triangle^c_{d}}  (\! \bw^{\! \!\top}  \! \bs{\hat{\theta}}_{t-1} \! + \! E_t(\bw) \! - \! \rho \bw^{\! \!\top} \!  \underline{\Sigma}_{t} \! \bw ) $
			\STATE Pull $\bw_t$ and observe all $\theta_{t,i}$ s.t. $w_{t,i}>0$
			\vspace*{0.2em}
			\STATE $J_{t, ij} \leftarrow \I\{w_{t,i}, w_{t,j}>0\}, \  \forall i,\! j \!\in\! [d]$
			\vspace*{0.3em}
			\STATE $N_{ij}(t) \leftarrow N_{ij}(t-1)+J_{t, ij}, \  \forall i,\! j \!\in\! [d]$
			\vspace*{0.2em}
			\STATE $\hat{\theta}^*_{t,i} \leftarrow \frac{\sum_{s=1}^{t} J_{t, ii}  \theta_{s,i}}{ N_{ii}(t) }, \  \forall i \in [d]$
			\label{line:sb_hat_theta}
			\vspace*{-0.3em}
			\STATE $\! \hat{\Sigma}_{t,ij}  \!\!  \leftarrow  \!\!  \frac{\sum \limits_{s=1}^{t} \!\! J_{t, ij} \! (\theta_{s,i}-\hat{\theta}^*_{t,i}) \! (\theta_{s,j}-\hat{\theta}^*_{t,j})}{N_{ij}(t)} \!, \! \forall i,\! j \! \in \! [d]$
			\label{line:sb_hat_Sigma}
			\ENDFOR
		\end{algorithmic}
	\end{multicols}
	\vspace*{-0.9em}
\end{algorithm}

\section{CMCB with Semi-Bandit Feedback (CMCB-SB)} \label{sec:semi_bandit}
In many practical tasks, the learner may not be able to simultaneously select (place positive weights on) all options and observe full information. Instead, the weight of each option is  usually  lower bounded and cannot be arbitrarily small. As a result, the learner  only selects a subset of options and obtains their feedback, e.g., company investments~\cite{company_investment2019} on multiple  business.
% and online advertising on different items.

Motivated by such tasks, in this section we consider the CMCB problem with semi-bandit feedback (CMCB-SB), where the decision space is a restricted probability simplex $\triangle^c_{d}=\{\bw \in \R^d: w_i=0 \textup{ or } c \leq w_i \leq 1, \forall i \in [d] \textup{ and }  \sum_i w_i=1\}$ for some constant $0<c\leq \frac{1}{2}$.\footnote{When $c>\frac{1}{2}$, the learner can only place all weight on one option, and
	the problem trivially reduces to the mean-variance bandit setting~\cite{sani2012risk,thompson_sampling_mean_variance2020}. In this case, our Theorem~\ref{thm:ub_semi_bandit} still provides a tight gap-independent bound.
	% \longbo{explain a bit more}\yihan{Added the explanation.}
}
%\yihan{Not sure whether to add this footnote or discuss the special case in the main text.}\longbo{looks okay to have }}
%\footnote{When $0 \leq c \leq \frac{1}{d}$, the problem allows to simultaneously select all the base arms and can be easily solved by applying the CMCB-FI algorithm with a slight modification.}
%\yihan{Removed the assumption on $c$.}\longbo{does not affect the proof?}\yihan{Yes. It does not affect upper bound. For lower bound, I added this condition for the constructed instance.}
%\longbo{add a footnote to explain why we make this assumption}\yihan{Yes. Our lower bound needs that a valid action cannot include all base arms. In addition, if a valid action can include all base arms, it is possible that one can naively apply the CMCB-FI algorithm (revise a bit) to solve this problem.}
In this scenario, at timestep $t$, the learner selects $\bw_t \in \triangle^c_{d}$ and only observes the rewards $\{\theta_{t,i}: w_i \geq c \}$ from the base arms that are placed positive weights on. 
%Here we assume $c>\frac{1}{d}$, otherwise one can still observe full information from a pull and this problem is trivially similar to the previous CMCB-FI problem. 
%
Below, we propose the Mean-Covariance Upper Confidence Bound algorithm ($\algsemibandit$) for CMCB-SB, % with a novel regret analysis, which also captures the effects of the covariance structure. 
and provide a regret lower bound, which shows that 
%and demonstrate that the regret of algorithm
$\algsemibandit$ achieves the optimal performance with respect to $T$. % (up to a logarithmic factor).

\subsection{Algorithm for CMCB-SB}

Algorithm $\algsemibandit$  for CMCB-SB is described in Algorithm~\ref{alg:semi_bandit}.
The main idea is to use the optimistic covariance to construct a  confidence region for the expected reward of an action and calculate an upper confidence bound of the mean-covariance function, and then select the action with the maximum optimistic mean-covariance value.

In Algorithm~\ref{alg:semi_bandit}, 
$N_{ij}(t)$ denotes the number of times  $w_{s,i},w_{s,j}>0$ occurs among timestep $s \in [t]$.
%, and $N_{ii}(t)$ is a shorthand for $N_{ii}(t)$. 
% 
$J_{t, ij}$ is an indicator variable that takes value $1$ if $w_{t,i},w_{t,j}>0$ and $0$ otherwise.
%, and similarly, $J_{t, ii}$ is a shorthand for $J_{t, ii}$.
% 
$D_t$ is a diagonal matrix such that $D_{t,ii}=N_{ii}(t)$.
In Line~\ref{line:sb_initilization}, we update the number of observations by $N_{ii}(d^2) \leftarrow 2d-1$ for all $i \in [d]$ and $N_{ij}(d^2) \leftarrow 2$ for all $i,j \in [d], i \neq j$ (due to the initialized $d^2$ pulls), and calculate the empirical mean $\hat{\boldsymbol{\theta}}^*_{d^2}$ and empirical covariance $\hat{\Sigma}^*_{d^2}$ using the equations in Lines~\ref{line:sb_hat_theta},\ref{line:sb_hat_Sigma}. 
\yihan{Clarified the initialization steps.}

For any $t > 1$ and $i,j \in [d]$, we define the confidence radius of covariance $\Sigma^*_{ij}$ as
%\begin{align*}
%	& g_{ij}(t) =  16 \left( \frac{3 \ln t}{N_{ij}(t-1)} \vee \sqrt{\frac{3 \ln t}{N_{ij}(t-1)}} \right) \\&  + \sqrt{\frac{48 \ln^2 t}{N_{ij}(t-1) N_{ii}(t-1)}} + \sqrt{\frac{36 \ln^2 t}{N_{ij}(t-1) N_{j}(t-1)}} ,
%\end{align*}
$g_{ij}(t) \triangleq  16 \big( \frac{3 \ln t}{N_{ij}(t-1)} \vee \sqrt{\frac{3 \ln t}{N_{ij}(t-1)}} \big)   + \sqrt{\frac{48 \ln^2 t}{N_{ij}(t-1) N_{ii}(t-1)}} + \sqrt{\frac{36 \ln^2 t}{N_{ij}(t-1) N_{j}(t-1)}}$,
and the confidence region for the expected reward $\bw^\top \bs{\theta}^*$ of action $\bw$ as
$$
E_t(\bw)  \triangleq  \sqrt{  2\beta(\delta_t)  \bigg(  \bw^\top  D_{t-1}^{-1}  \bigg(  \lambda \Lambda_{\bar{\Sigma}_{t}}  D_{t-1} + \sum_{s=1}^{t-1}  \bar{\Sigma}_{s, \bw_s}  \bigg)  D_{t-1}^{-1} \bw  \bigg) } ,
$$
where $\lambda>0$ is the regularization parameter, $\beta(\delta_t)=\ln(\frac{1}{\delta_t}) + d\ln \ln t + \frac{d}{2} \ln(1+\frac{e}{\lambda})$ is the confidence term and $\delta_t = \frac{1}{t \ln^2 t}$ is the confidence parameter.
At each timestep $t$, algorithm $\algsemibandit$ calculates the upper confidence bound of $f(\bw)$ using $g_{ij}(t)$ and $E_t(\bw)$, and selects the action $\bw_t$ that maximizes this upper confidence bound. Then, the learner observes rewards $\theta_{t,i}$ with  $w_{t,i}>0$ and update the statistical information according to the  feedback.

In regret analysis, unlike \cite{known_covariance2016} which uses a universal upper bound to analyze confidence intervals,   we incorporate the estimated covariance into the confidence region for the expected reward of an action, which enables us to derive tighter regret bound and explictly quantify the impact of the covariance structure on algorithm performance.  
We also contribute a new technique for handling the challenge raised by having different numbers of observations among base arms, in order to obtain an optimal $\tilde{O}(\sqrt{T})$ regret (here prior gap-dependent analysis~\cite{known_covariance2016,covariance-adapting2020} still cannot be applied to solve this continuous problem).  
%by  
% 
%We analyze the regrets due to different base arms separately and 
%taking integration on the key term for each arm separately, which contains historical observations and the selected action, 
Theorem~\ref{thm:ub_semi_bandit} gives the regret upper bound of algorithm $\algsemibandit$. %\longbo{Its proof is in Appendix xxx.}

\begin{theorem}[Upper Bound for CMCB-SB]
	\label{thm:ub_semi_bandit}
	Consider the continuous mean-covariance bandits with semi-bandit feedback problem (CMCB-SB). Then, for any $T>0$, algorithm $\algsemibandit$ (Algorithm~\ref{alg:semi_bandit}) with regularization parameter $\lambda>0$ has an expected cumulative regret bounded by
	$$
	O \bigg(  \sqrt{ L(\lambda) (\|\Sigma^*\|_{+}   +  d^2) d \ln^2 T    \cdot T }   + \rho  d \ln T  \sqrt{ T } \bigg) ,
	$$
	where $L(\lambda)= (\lambda+1)( \ln(1+ \lambda^{-1}) + 1 ) $ and $\|\Sigma^*\|_{+}=\sum_{i,j \in [d]} \sbr{\Sigma^*_{ij} \vee 0}$ for any $i,j \in [d]$.
	%	\longbo{again, how to choose $\lambda$?}
	%	\yihan{Revised the bound to incorporate the trade-off on $\lambda$.}
\end{theorem}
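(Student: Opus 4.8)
The plan is to run the optimism-in-the-face-of-uncertainty argument, using the two covariance-aware confidence structures built into Algorithm~\ref{alg:semi_bandit}: the self-normalized deviation bound $E_t(\bw)$ for the plug-in linear estimate $\bw^\top\bs{\hat{\theta}}_{t-1}$, with the optimistic covariance $\bar\Sigma_t$ playing the role of the noise scale, and the entrywise radius $g_{ij}(t)$ for $|\Sigma^*_{ij}-\hat\Sigma_{t-1,ij}|$. First I would fix the good event $\cG$ on which, simultaneously for all $t\le T$, all $i,j\in[d]$ and all $\bw\in\triangle^c_{d}$, one has $|\bw^\top\bs{\theta}^*-\bw^\top\bs{\hat{\theta}}_{t-1}|\le E_t(\bw)$ and $|\Sigma^*_{ij}-\hat\Sigma_{t-1,ij}|\le g_{ij}(t)$; in particular $\underline\Sigma_t\le\Sigma^*\le\bar\Sigma_t$ entrywise and $\bar\Sigma_t-\underline\Sigma_t$ has entries at most $2g_{ij}(t)$. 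On $\cG$, writing $\tilde f_t(\bw)=\bw^\top\bs{\hat{\theta}}_{t-1}+E_t(\bw)-\rho\,\bw^\top\underline\Sigma_t\bw$ for the index maximized in Line~6 and noting that $\bw^*\in\triangle^c_{d}=\cD$ (so no feasibility/discretization gap arises), optimism gives $\tilde f_t(\bw^*)\ge f(\bw^*)$, and therefore by the choice of $\bw_t$,
\[
f(\bw^*)-f(\bw_t)\ \le\ \tilde f_t(\bw_t)-f(\bw_t)\ \le\ 2E_t(\bw_t)+\rho\,\bw_t^\top(\Sigma^*-\underline\Sigma_t)\bw_t\ \le\ 2E_t(\bw_t)+2\rho\sum_{i,j\in[d]}w_{t,i}w_{t,j}\,g_{ij}(t).
\]
The $O(d^2)$ initialization rounds and the complement of $\cG$ each contribute only $O(1+\rho)$ to the total regret, by the union bound over $\delta_t=1/(t\ln^2 t)$ and boundedness of $f$, which is lower order.

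The bulk of the work is then to sum the two families of widths over $t=d^2+1,\dots,T$. For $\sum_t E_t(\bw_t)$ I would first establish the self-normalized martingale concentration underlying the definition of $E_t$: since the restricted noise $\bs{\eta}_{s}$ on the support of $\bw_s$ is $\Sigma^*_{\bw_s}$-sub-Gaussian and $\bar\Sigma_{s,\bw_s}\succeq\Sigma^*_{\bw_s}$ on $\cG$, a weighted, coordinate-rescaled (by the diagonal count matrix $D_{t-1}^{-1}$) version of the method-of-mixtures bound of Abbasi-Yadkori et al.\ yields exactly $E_t(\bw)$ with $\beta(\delta_t)$. Then Cauchy--Schwarz gives $\sum_t E_t(\bw_t)\le\sqrt{T}\,\big(\sum_t E_t(\bw_t)^2\big)^{1/2}$, and $\sum_t E_t(\bw_t)^2\le 2\beta(\delta_T)\sum_t \bw_t^\top D_{t-1}^{-1}\big(\lambda\Lambda_{\bar\Sigma_t}D_{t-1}+\sum_{s<t}\bar\Sigma_{s,\bw_s}\big)D_{t-1}^{-1}\bw_t$, which I would control by an elliptical-potential (log-determinant telescoping) lemma. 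The determinant ratio of the cumulative information matrix is bounded through its trace, $\lambda d+\sum_{s\le T}\mathrm{tr}(\bar\Sigma_{s,\bw_s})$; replacing $\bar\Sigma$ by $\Sigma^*$ plus the $g$-radii and using that $\bw^\top\Sigma^*\bw\le\|\Sigma^*\|_{+}$ on $\triangle_{d}$ while the $g$-radii and the $\lambda\Lambda$ term each account for the additive $d^2$, this trace scales as $O((\|\Sigma^*\|_{+}+d^2)T)$, producing the first term $O(\sqrt{L(\lambda)(\|\Sigma^*\|_{+}+d^2)\,d\ln^2 T\cdot T})$, with $L(\lambda)=(\lambda+1)(\ln(1+\lambda^{-1})+1)$ emerging from the determinant ratio and $\beta$.

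For the covariance-penalty term, the dominant part of $g_{ij}(t)$ behaves like $\sqrt{\ln t/N_{ij}(t-1)}$, and it is effectively multiplied by the support indicator $J_{t,ij}$; using $w_{t,i}w_{t,j}\le1$ together with the integration bound $\sum_t J_{t,ij}/\sqrt{N_{ij}(t-1)}=O(\sqrt{N_{ij}(T)})$ applied for each pair, and then $\sum_{i,j}\sqrt{N_{ij}(T)}\le\sqrt{d^2\sum_{i,j}N_{ij}(T)}=O(d\sqrt T)$ (since $\bw_t$ has at most $1/c$ nonzero coordinates, $\sum_{i,j}N_{ij}(T)\le T/c^2$), gives $\sum_t\rho\sum_{i,j}w_{t,i}w_{t,j}g_{ij}(t)=O(\rho\,d\ln T\sqrt T)$; the two $\sqrt{\ln^2 t/(N_{ij}N_{\cdot})}$ terms and the $3\ln t/N_{ij}$ term in $g_{ij}$ are lower order after summation. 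Combining the two pieces yields the stated bound.

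The main obstacle is the self-normalized concentration feeding $E_t(\bw)$. Unlike the full-information case, the ``design'' is not $\sum_s\bw_s\bw_s^\top$ but $\sum_s\bar\Sigma_{s,\bw_s}$ rescaled by $D_{t-1}^{-1}$, a diagonal matrix recording the \emph{heterogeneous} per-coordinate counts $N_{ii}(t-1)$ rather than a common round count; one must verify that the mixture supermartingale still closes under this coordinatewise normalization and differing observation sets, and that plugging in $\bar\Sigma$ instead of the unknown $\Sigma^*$ is legitimate on $\cG$ without circularity. Establishing $g_{ij}(t)$ itself is the second delicate point: $\hat\Sigma_{t-1,ij}$ is a product of \emph{centered} empirical means over possibly distinct index sets, so its concentration must combine a product-of-sub-Gaussians bound with the estimation error of $\hat\theta^*_i,\hat\theta^*_j$ — precisely the structure reflected in the three-term form of $g_{ij}$. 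Once these two concentration lemmas and the accompanying log-determinant potential lemma are in hand, the remainder is the bookkeeping sketched above. \hfill$\qed$
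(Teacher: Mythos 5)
Your skeleton matches the paper's: the same good event (covariance radii $g_{ij}(t)$ plus the self-normalized bound $E_t(\bw)$), the same optimism step $f(\bw^*)-f(\bw_t)\le \bar f_t(\bw_t)-f(\bw_t)\le 2E_t(\bw_t)+2\rho\,\bw_t^\top G_t\bw_t$, Cauchy--Schwarz to reduce to $\sum_t E_t(\bw_t)^2$, and the per-pair integration bound $\sum_t J_{t,ij}/\sqrt{N_{ij}(t-1)}=O(\sqrt{N_{ij}(T)})$ followed by $\sum_{i,j}\sqrt{N_{ij}(T)}\le d\sqrt{\sum_{i,j}N_{ij}(T)}$ for the $\rho d\ln T\sqrt{T}$ term. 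You also correctly identify the two concentration lemmas as the technical core (the paper proves the mean one by a method-of-mixtures supermartingale with a union bound over a geometric grid of the heterogeneous counts, and imports the covariance one from Perrault et al.).

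The genuine gap is in how you bound $\sum_t E_t(\bw_t)^2$. You propose an elliptical-potential / log-determinant telescoping argument controlled by the trace of a ``cumulative information matrix,'' but the quadratic form here is $\bw_t^\top M_t\bw_t$ with $M_t=D_{t-1}^{-1}(\lambda\Lambda_{\bar\Sigma_t}D_{t-1}+\sum_{s<t}\bar\Sigma_{s,\bw_s})D_{t-1}^{-1}$, which is \emph{not} of the form $\|x_t\|^2_{V_{t-1}^{-1}}$ for any matrix $V_t$ updated by rank-one increments of the actions; there is no identity $\det V_t=\det V_{t-1}(1+\|x_t\|^2_{V_{t-1}^{-1}})$ to telescope, so the lemma does not apply. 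Moreover, even if one forced a determinant--trace bound, two things go wrong: a trace of $\bar\Sigma_{s,\bw_s}$ sees only diagonal entries and cannot generate $\|\Sigma^*\|_{+}=\sum_{i,j}(\Sigma^*_{ij}\vee 0)$, and the covariance magnitude would end up inside a logarithm ($d\ln(\mathrm{tr})$) rather than as the multiplicative factor $(\|\Sigma^*\|_{+}+d^2)$ that the theorem asserts. The paper instead expands the quadratic form coordinatewise, uses $N_i(t-1)N_j(t-1)\ge N_{ij}(t-1)^2$ and the scalar harmonic-sum bounds $\sum_t a_t/\sum_{s<t}a_s\le\ln(\sum_t a_t)$ (and its $3/2$-power analogue for the $g_{ij}$ corrections) separately for each pair $(i,j)$, which is exactly what produces $\sum_{i,j}\Sigma^{*+}_{ij}\ln T+O(d^2\ln T)$ multiplicatively. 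You would need to replace your potential-lemma step with this (or an equivalent) per-coordinate computation for the proof to deliver the stated bound.
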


\textbf{Remark 3.}
%This regret upper bound for CMCB-SB is optimal in terms of $T$ up to a logarithmic factor, which we will  further discuss with a comparison to the lower bound (Theorem~\ref{thm:lb_semi_bandit}) in Section~\ref{section:lb_semi_bandit}. In addition, 
Theorem~\ref{thm:ub_semi_bandit} captures the effects of covariance structures in CMCB-SB, i.e., positive correlation renders a larger $\|\Sigma^*\|_{+}$ factor than the negative correlation or independent case, since the covariance influences the rate of estimate concentration for the expected rewards of actions.
%Our constructed confidence region $E_t(\bw)$ successfully captures these effects by incorporating the estimated covariance information (i.e., the $\sum_{s=1}^{t-1} \! \bar{\Sigma}_{s, \bw_s}$ term in $E_t(\bw)$).
%
% \longbo{the last sentences seems to suggest that  the effect is due to algorithm design?}\yihan{Yes, the effect is due to the design of confidence region. The theoretical analysis is also crucial to capture the effect.}\longbo{we need to discuss  this}\yihan{Revised this remark.}\longbo{is the last sentence needed? can it be removed?}\yihan{Removed it. Added ``the rate of estimate concentration for the expected rewards of actions''.}
%
The regret bound for CMCB-SB has a heavier dependence on $d$ than that for CMCB-FI. This matches the fact that semi-bandit feedback only reveals rewards of the queried dimensions, and provides less information than full-information feedback in terms of observable dimensions.
\yihan{Added some sentences to compare the regret for SB with that for FI.}

\subsection{Lower Bound for CMCB-SB} \label{section:lb_semi_bandit}

In this subsection, we establish  a lower bound for CMCB-SB, and show that algorithm $\algsemibandit$ achieves the optimal regret with respect to $T$ up to logarithmic factors. 

The insight of the lower bound analysis is to construct two instances with a gap in the expected reward vector $\bs{\theta}^*$, where the optimal actions under these two instances place positive weights on different base arms. 
Then, when the gap is  set to $\sqrt{  \ln T/ T  }$,
any algorithm must suffer $\Omega \sbr{ \sqrt{ T \ln T} }$ regret for differentiating  these two instances. Theorem~\ref{thm:lb_semi_bandit} summarizes the lower bound for CMCB-SB.

\begin{theorem}[Lower Bound for CMCB-SB]
	\label{thm:lb_semi_bandit}
	There exists an instance distribution of the continuous mean-covariance bandits with semi-bandit feedback (CMCB-SB) problem, for which any algorithm has an expected cumulative regret bounded by $\Omega \sbr{ \sqrt{c d T } }$.
\end{theorem}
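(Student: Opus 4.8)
The plan is a change-of-measure (Le Cam) lower bound against a family consisting of a null instance and $d$ single-arm perturbations, with the mean gap tuned to $\varepsilon\asymp\sqrt{cd/T}$ so that the per-round cost of missing the ``good'' arm exactly balances the information that semi-bandit feedback can extract about it. Fix $\Sigma^*=I_d$ (independent, $1$-sub-Gaussian, e.g.\ Gaussian, noise), set $\varepsilon$ to a small absolute constant times $\sqrt{cd/T}$, and take $\rho=\tfrac12 c\varepsilon$. Let $\nu_0$ be the instance with $\bs\theta^*=\vzero$, and for $k\in[d]$ let $\nu_k$ have $\theta^*_k=\varepsilon$ and all other coordinates $0$; the claimed instance distribution will be the uniform law over $\{\nu_1,\dots,\nu_d\}$. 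The first step is to read off the optimal actions: using the floor constraint of $\triangle^c_d$ (if $\bw\ne\be_k$ and $w_k>0$ then $w_k\le 1-c$) together with $\rho\le c\varepsilon$, one checks that $\bw^*=\be_k$ under $\nu_k$, and --- the property we really need --- that any $\bw$ with $w_k=0$ has $f(\bw^*)-f(\bw)\ge\varepsilon-\rho\ge\varepsilon/2$ in $\nu_k$; that is, failing to place positive weight on the good arm costs $\Omega(\varepsilon)$ per round.

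Next I would exploit the information bottleneck created by semi-bandit feedback. Arm $k$ is observed at round $t$ iff $w_{t,k}\ge c$, and since the weights sum to $1$ at most $1/c$ arms are observed per round, so $\sum_{k\in[d]}N_k(T)\le T/c$ deterministically, where $N_k(T)$ counts the rounds that observe arm $k$. Hence under $\nu_0$ at least $d/2$ arms satisfy $\E_{\nu_0}[N_k(T)]\le 2T/(cd)\le 1/(4\varepsilon^2)$ (by the choice of $\varepsilon$); fix one such arm $k^\dagger$. Only observations of arm $k^\dagger$ distinguish $\nu_0$ from $\nu_{k^\dagger}$, each contributing $\varepsilon^2/2$ to the KL divergence, so by the chain rule the KL between the length-$T$ interaction laws is $\tfrac{\varepsilon^2}{2}\E_{\nu_0}[N_{k^\dagger}(T)]\le\tfrac18$, and Pinsker gives total-variation distance $\le\tfrac14$. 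Transporting the occupation count across measures, $\E_{\nu_{k^\dagger}}[N_{k^\dagger}(T)]\le\E_{\nu_0}[N_{k^\dagger}(T)]+T\cdot\tfrac14\le T/2$ (the absolute constants are harmless --- the claim is vacuous unless $cd\gtrsim 1$). Thus under $\nu_{k^\dagger}$ the learner places zero weight on the good arm for at least $T/2$ rounds in expectation, so by the per-round gap above $\E_{\nu_{k^\dagger}}[\cR(T)]\ge\tfrac{\varepsilon T}{4}=\Omega(\sqrt{cdT})$. Since this holds for at least $d/2$ of the arms, averaging over the uniform instance distribution gives $\Omega(\sqrt{cdT})$, which together with Theorem~\ref{thm:ub_semi_bandit} shows $\algsemibandit$ is rate-optimal in $T$; the degenerate regime $c>\tfrac12$ is handled separately by reduction to a one-dimensional mean-variance lower bound.

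The main obstacle is the change-of-measure step: one must argue that because the learner's observation budget is throttled to $1/c$ arms per round (the only consequence of the $\triangle^c_d$ floor that matters) and arm $k^\dagger$ carries the \emph{only} distinguishing signal between $\nu_0$ and $\nu_{k^\dagger}$, the learner cannot have observed $k^\dagger$ often enough to identify it, hence keeps missing it --- and to make this quantitative one must control how the occupation time $N_{k^\dagger}(T)$ moves under the $\nu_0\to\nu_{k^\dagger}$ perturbation, which is precisely where the $c$ and $d$ factors in the bound originate. A secondary point, not present in discrete-armed lower bounds, is that the continuous decision space (vanishing sub-optimality gap) forces $\rho$ to shrink with $T$; one must carry the constraint $\rho\le c\varepsilon$ throughout so that $\be_k$ stays the unique optimum and the $\Omega(\varepsilon)$ per-round penalty persists.
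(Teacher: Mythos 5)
Your proposal is correct and follows essentially the same route as the paper's proof: single-coordinate bump instances with $\Sigma^*=I$, the semi-bandit observation budget $\sum_{k}N_k(T)\le T/c$, a KL chain-rule plus Pinsker change of measure, and the tuning $\varepsilon\asymp\sqrt{cd/T}$ with $\rho$ taken small enough that $\be_k$ is optimal and any action ignoring arm $k$ costs $\Omega(\varepsilon)$ per round. The only cosmetic difference is that you isolate a single under-observed arm by pigeonhole, whereas the paper averages the occupation counts over all $d$ perturbations via Cauchy--Schwarz; both yield the same $\Omega(\sqrt{cdT})$ rate.
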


\yihan{Fixed Theorem 4 and its proof.}

\textbf{Remark 4.} Theorem~\ref{thm:lb_semi_bandit} demonstrates that the regret upper bound of   $\algsemibandit$ (Theorem~\ref{thm:ub_semi_bandit}) is near-optimal in terms of $T$.
% and only has a $\sqrt{d}$ gap in terms of $d$, which we leave for future work. 
Similar to CMCB-FI, CMCB-SB considers continuous decision space with $\Delta \rightarrow 0$, and thus the lower bound differs from those gap-dependent results $\frac{\log T}{\Delta}$ in discrete bandit problems~\cite{lai_robbins1985,known_covariance2016,covariance-adapting2020}. 
% Our lower bound shows that for CMCB-SB,  no improvement upon  $O(\sqrt{T})$ regret is possible in general.

\section{CMCB with Full-Bandit Feedback (CMCB-FB)}
\label{sec:full_bandit}

\begin{algorithm}[t]
	\caption{$\algfullbandit$} \label{alg:full_bandit}
	\begin{multicols}{2}
		\begin{algorithmic}[1]
			\STATE \textbf{Input:} {$\rho>0$, $\tilde{d}=\frac{d(d+1)}{2}$ and design action set $\pi=\{ \bv_1, \dots, \bv_{\tilde{d}} \}$.}
			%	, with matrices $B_\pi \in \R^{\tilde{d} \times d}, C_\pi \in \R^{\tilde{d} \times \tilde{d}}$.
			%	with matrices $B_\pi = ( \bw_1^\top; \dots; \bw_{\tilde{d}}^\top ) \in \R^{\tilde{d} \times d}, C_\pi=( w_{1,1}^2, \dots, w_{1,d}^2, 2 w_{1,1} w_{1,2}, \dots, 2 w_{1,d-1} w_{1,d} ; \dots; w_{\tilde{d},1}^2, \dots, w_{\tilde{d},d}^2, 2 w_{\tilde{d},1} w_{\tilde{d},2}, \dots, 2 w_{\tilde{d},d-1} w_{\tilde{d},d} ) \in \R^{\tilde{d} \times \tilde{d}}$
			\STATE Initialize: $N_\pi(0) \leftarrow 0$. $t \leftarrow 1$.\;
			\STATE \textbf{Repeat} lines~\ref{line:ete_loop_start}-\ref{line:ete_loop_end}:
			\IF{ $N_\pi(t-1)>t^{\frac{2}{3}} / d$ } \label{line:ete_loop_start}
			\label{line:ete_check_exploration_times}
			\STATE $\bw_t=\argmax \limits_{\bw \in \triangle_{d}} \ ( \bw^\top \bs{\hat{\theta}}^*_{t-1} - \rho \bw^\top \hat{\Sigma}_{t-1} \bw )$\; \label{line:ete_exploit}
			\vspace*{-0.2em}
			\STATE $t \leftarrow t+1$\;
			\ELSE
			\STATE $N_\pi(t) \leftarrow N_\pi(t-1)+1$\;
			\label{line:ete_exploration_start}
			\FOR{$k=1, \dots, \tilde{d}$}
			\STATE Pull $\bv_k$ and observe $y_{N_\pi(t),k}$\;
			\IF{$k=\tilde{d}$}
			\STATE $\by_{N_\pi(t)} \leftarrow (y_{N_\pi(t),1}, \dots, y_{N_\pi(t),\tilde{d}})^\top$\;
			\vspace*{-0.7em}
			\STATE $\hat{\by}_{t}  \leftarrow  \frac{\sum_{s=1}^{N_\pi(t)} \by_s}{N_\pi(t)} $\;
			\STATE $\hat{z}_{t, k}=\frac{\sum_{s=1}^{N_\pi(t)} (y_{s,k} - \hat{y}_{t,k} )^2}{N_\pi(t)}, \forall k \in [\tilde{d}]$\;
			\STATE $\hat{\bz}_t \leftarrow (\hat{z}_{t, 1}, \dots, \hat{z}_{t, \tilde{d}})^\top$ \;
			\STATE $\bs{\hat{\theta}}_t \leftarrow B_\pi^{+} \hat{\by}_{t}$\;\label{line:theta}
			\STATE $\bs{\hat{\sigma}}_{t} \leftarrow C_\pi^{+} \hat{\bz}_{t}$\;\label{line:sigma}
			\STATE Reshape $\bs{\hat{\sigma}}_{t}$ to $d \times d$ matrix $\hat{\Sigma}_{t}$\;
			\ENDIF
			\STATE $t \leftarrow t+1$\;
			\ENDFOR \label{line:ete_exploration_end}
			\ENDIF \label{line:ete_loop_end}
		\end{algorithmic}
	\end{multicols}
	\vspace*{-1em}
\end{algorithm}

In this section, we further study the CMCB problem with full-bandit feedback (CMCB-FB), where at timestep $t$, the learner selects $\bw_t \in \triangle_{d}$ and only observes the weighted sum of random rewards, i.e., $y_t=\bw_t^\top \bs{\theta}_t$. 
This setting models many real-world decision making tasks, where the learner can only attain an aggregate feedback from the chosen options, such as clinical trials~\cite{clinical_trials2015}. 
%where a treatment usually contains multiple medicines with different weights but clinicians only observe an entire treatment performance.

%For CMCB-FB, we propose the Mean-Covariance Exploration-Then-Exploitation algorithm ($\algfullbandit$)  and provide regret analysis which depicts the connection between the regret and covariance structure.
%\yihan{Added the explanation of the algorithm.}

\subsection{Algorithm for CMCB-FB}

We propose the Mean-Covariance Exploration-Then-Exploitation algorithm ($\algfullbandit$) for CMCB-FB in Algorithm~\ref{alg:full_bandit}. %, and describe its procedure as follows:
% 
%Algorithm~\ref{alg:full_bandit} illustrates the proposed algorithm $\algfullbandit$ for CMCB-FB.
%The main idea is to construct a design action set $\pi$ to  collect the full-bandit rewards and variances and then utilize the property of $\pi$ to transform these information to the estimators of the expected rewards and covariance of base arms. When the estimation confidence is high enough, we exploit the attained observations to select the empirical best action. 
%
Specifically, we first choose a design action set $\pi=\{ \bv_1, \dots, \bv_{\tilde{d}} \}$ which contains $\tilde{d}=d(d+1)/2$ actions and satisfies that $B_\pi = ( \bv_1^\top; \dots; \bv_{\tilde{d}}^\top )$ and  $C_\pi=( v_{1,1}^2, \dots, v_{1,d}^2, 2 v_{1,1} v_{1,2}, \dots, 2 v_{1,d-1} v_{1,d} ;  \dots;  v_{\tilde{d},1}^2, \dots, v_{\tilde{d},d}^2, 2 v_{\tilde{d},1} v_{\tilde{d},2}, \dots, 2 v_{\tilde{d},d-1} v_{\tilde{d},d} ) $ are of full column rank. We also denote their Moore-Penrose inverses by $B^+_\pi$ and $C^+_\pi$, and it holds that $B^+_\pi B_\pi =I^{d \times d}$ and $C^+_\pi C_\pi =I^{\tilde{d} \times \tilde{d}}$.
There exist more than one feasible $\pi$, and for simplicity and good performance we choose $\bv_1, \dots, \bv_{d}$ as standard basis vectors in $\R^d$ and $\{\bv_{d+1}, \dots, \bv_{\tilde{d}}\}$ as the set of all $\tbinom{d}{2}$ vectors where each vector has two entries equal to $\frac{1}{2}$ and others equal to $0$.

In an exploration round (Lines \ref{line:ete_exploration_start}-\ref{line:ete_exploration_end}), we pull the designed actions in $\pi$ and maintain their empirical rewards and variances. Through linear transformation by $B^+_\pi$ and $C^+_\pi$, we obtain the estimators of the
%utilize the observations on $\pi$ to estimate 
expected rewards and covariance of base arms (Lines \ref{line:theta}-\ref{line:sigma}).  
When the estimation confidence is high enough, we exploit the attained information to select the empirical best action (Lines \ref{line:ete_exploit}).
Theorem~\ref{thm:ub_full_bandit} presents the regret guarantee of $\algfullbandit$.

\begin{theorem}[Upper Bound for CMCB-FB]
	\label{thm:ub_full_bandit}
	Consider the continuous mean-covariance bandits with full-bandit feedback problem (CMCB-FB). Then, for any $T>0$, algorithm $\algfullbandit$ (Algorithm~\ref{alg:full_bandit}) achieves an expected cumulative regret bounded by 
	$$
	O \sbr{ Z(\rho, \pi)  \sqrt{ d(\ln T + d^2 ) } \cdot  T^{\frac{2}{3}}+   d \Delta_{\textup{max}} \cdot  T^{\frac{2}{3}} } ,
	$$
	where $Z (\rho, \pi) = \max_{\bw \in \triangle_{d}}  (\sqrt{ \bw^\top B_\pi^+   \Sigma^*_\pi  (B_\pi^{+}) \top \bw  } + \rho \| C^+_\pi \| )  $, $\Sigma^*_\pi=\textup{diag}(\bv_1^\top \Sigma^* \bv_1, \dots, \bv_{\tilde{d}}^\top \Sigma^* \bv_{\tilde{d}} )$, $\| C_\pi^+ \| = \max_{i \in [\tilde{d}]}  {\sum_{j \in [\tilde{d}]} |C^+_{\pi, ij}| } $ and $\Delta_{\textup{max}}=f(\boldsymbol{w}^*)-\min_{\boldsymbol{w} \in \triangle_d} f(\boldsymbol{w})$.
\end{theorem}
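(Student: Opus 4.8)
The plan is to decompose the horizon into exploration rounds (Lines~\ref{line:ete_exploration_start}--\ref{line:ete_exploration_end}) and exploitation rounds (Line~\ref{line:ete_exploit}) and to bound each contribution separately. First I would count the exploration cost: an exploration round occurs at time $t$ only when $N_\pi(t-1)\le t^{2/3}/d$, and each round performs $\tilde d=d(d+1)/2$ pulls of the design actions $\bv_1,\dots,\bv_{\tilde d}\in\triangle_d$. Hence at most $O(T^{2/3}/d)$ rounds ever complete, so there are $O(\tilde d\,T^{2/3}/d)=O(d\,T^{2/3})$ exploration pulls, each incurring instantaneous regret at most $\Delta_{\textup{max}}=f(\bw^*)-\min_{\bw\in\triangle_d}f(\bw)$, which yields the $O(d\,\Delta_{\textup{max}}\,T^{2/3})$ term of the bound.

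Next I would establish concentration of the estimators after $n:=N_\pi(t-1)$ completed rounds. Since in each round $\bv_k$ is pulled at its own timestep, $y_{s,k}=\bv_k^\top\bs{\theta}^*+\bv_k^\top\bs{\eta}_{s,k}$ with noise independent across $k$, so $\by_s-B_\pi\bs{\theta}^*$ is zero-mean sub-Gaussian with covariance proxy $\Sigma^*_\pi$; because $B_\pi^+B_\pi=I$, $\hat{\bs{\theta}}_t-\bs{\theta}^*=\frac1n\sum_s B_\pi^+(\by_s-B_\pi\bs{\theta}^*)$, and a time-uniform self-normalized bound made uniform over the simplex gives $|\bw^\top(\hat{\bs{\theta}}_t-\bs{\theta}^*)|\le\sqrt{O(\ln T+d^2)/n}\cdot\sqrt{\bw^\top B_\pi^+\Sigma^*_\pi(B_\pi^+)^\top\bw}$ on a high-probability event, which produces the reward term of $Z(\rho,\pi)$. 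For the covariance, $\hat z_{t,k}$ is the (sub-exponential) empirical variance of the i.i.d. samples $\{y_{s,k}\}_{s\le n}$ and concentrates around $\bv_k^\top\Sigma^*\bv_k$ at rate $O(\sqrt{\ln T/n})$; since the linear map from $\textup{vec}(\Sigma^*)$ to the vector $(\bv_k^\top\Sigma^*\bv_k)_k$ is exactly $C_\pi$ with $C_\pi^+C_\pi=I$, one gets $\hat{\bs{\sigma}}_t-\bs{\sigma}^*=C_\pi^+(\hat\bz_t-\bz^*)$ and hence $\rho\,|\bw^\top(\hat\Sigma_t-\Sigma^*)\bw|\le\rho\|C_\pi^+\|\,O(\sqrt{\ln T/n})$ uniformly in $\bw$, the source of the $\rho\|C_\pi^+\|$ term. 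A union bound over $n\le T$ and over the $\tilde d=\Theta(d^2)$ directions is absorbed into the $\ln T+d^2$ confidence level.

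Then I would close the argument on the exploitation rounds. On the good event, combining the two displays gives a uniform confidence width $r_t(\bw)\le Z(\rho,\pi)\sqrt{\ln T+d^2}/\sqrt n$ with $|\hat f_{t-1}(\bw)-f(\bw)|\le r_t(\bw)$, where $\hat f_{t-1}(\bw)=\bw^\top\hat{\bs{\theta}}_{t-1}-\rho\bw^\top\hat\Sigma_{t-1}\bw$ is the empirical objective of Line~\ref{line:ete_exploit}. Since $\bw_t=\argmax_{\bw}\hat f_{t-1}(\bw)$, the chain $f(\bw^*)\le\hat f_{t-1}(\bw^*)+r_t(\bw^*)\le\hat f_{t-1}(\bw_t)+r_t(\bw^*)\le f(\bw_t)+r_t(\bw_t)+r_t(\bw^*)$ yields $f(\bw^*)-f(\bw_t)\le2\max_\bw r_t(\bw)$. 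In an exploitation round $N_\pi(t-1)>t^{2/3}/d$, so $1/\sqrt n<\sqrt d\,t^{-1/3}$ and the per-step regret is $O(Z(\rho,\pi)\sqrt{d(\ln T+d^2)}\,t^{-1/3})$; summing over $t\le T$ with $\sum_{t}t^{-1/3}=O(T^{2/3})$ gives the first term, and adding the exploration cost and the negligible contribution of the low-probability complement finishes the proof.

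I expect the main obstacle to be the covariance estimation. Because $f$ is quadratic in $\bw$, the empirical-best argument needs the covariance confidence bound to hold \emph{uniformly} over the continuous simplex, yet $\hat\Sigma_t$ is recovered only indirectly, by inverting $C_\pi$ on a vector of empirical variances of scalar observations. Controlling the sub-exponential deviations of those empirical variances, propagating them through $C_\pi^+$ with the correct row-$\ell_1$ operator norm, and keeping the dimension dependence at $\sqrt{d(\ln T+d^2)}$ rather than a larger power of $d$ — while also verifying that the exploration schedule indeed maintains $N_\pi(t)=\Theta(t^{2/3}/d)$ over the relevant range of $t$ — is where the real work lies.
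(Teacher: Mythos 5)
Your proposal follows essentially the same route as the paper's proof: the same exploration/exploitation decomposition with $N_\pi(T)\le T^{2/3}/d+1$, the same self-normalized super-martingale concentration for $\bw^\top\bs{\hat{\theta}}_{t-1}$ through $B_\pi^+$, the same empirical-variance concentration propagated through $C_\pi^+$ with the row-$\ell_1$ norm, the same empirical-best chain $f(\bw^*)-f(\bw_t)\le r_t(\bw^*)+r_t(\bw_t)$, and the same $\sum_t t^{-1/3}=O(T^{2/3})$ summation. The argument is correct and matches the paper's proof in all essential respects, including the source of the $d^2$ inside the logarithmic/confidence term.
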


\textbf{Remark 5.}
The choice of $\pi$ will affect the regret factor $\Sigma^*_\pi$ contained in $Z (\rho, \pi)$. 
Under our construction, $\Sigma^*_\pi$ can be regarded as a uniform representation of covariance $\Sigma^*$, and
% and guarantees good performance across different correlation structures. 
thus our regret bound demonstrates how the learning performance is influenced by the covariance structure, i.e., negative (positive) correlation shrinks (enlarges) the factor and leads to a lower (higher) regret. 

\textbf{Discussion on the ETE strategy.}
In contrast to common ETE-type algorithms, $\algfullbandit$ requires novel analytical techniques in handling the transformed estimate concentration while preserving the covariance information in regret bounds. 
In analysis, we build a novel concentration using key matrices $B^+_\pi$ and $C^+_\pi$ to adapt to the actual covariance structure, and construct a super-martingale which takes the aggregate noise in an exploration round as analytical basis to prove the concentration. 
These techniques allow us to capture the correlations in the results, and are new compared to both the former FI/SB settings and covariance-related bandit literature~\cite{online_variance_minimization2012,known_covariance2016,covariance-adapting2020}. 

In fact, under the full-bandit feedback, it is highly challenging to estimate the covariance without using a fixed exploration (i.e., ETE) strategy. Note that even for its simplified offline version, where one uses given (non-fixed) full-bandit data to estimate the covariance, there is \emph{no available solution} in the  statistics literature to our best knowledge.
%Under the full-bandit feedback setting, we cannot directly observe any individual reward of base arms. Hence, it is challenging to estimate the covariance with adaptive samples, even for the offline version of the problem (which does not consider online decision making). This implies that the full-bandit setting is intrinsically harder than the full-information and semi-bandit settings. 
%
%Hence, we believe that the ETE strategy is the only option to effectively handle the risk under severely limited feedback, and then $\algfullbandit$ achieves an $O(T^{\frac{2}{3}})$ regret.
% instead of the $O(\sqrt{T})$ regrets in FI and SB settings.
%Due to the severely limited feedback, we believe that an ETE strategy is needful and then $\algfullbandit$ achieves an $O(T^{\frac{2}{3}})$ regret for CMCB-FB  the $O(\sqrt{T})$ regrets in FI and SB settings. 
Hence, for such online tasks with severely limited feedback, ETE is the most viable strategy currently available, as used in many partial observation works~\cite{LinTian2014,Phased_Exploration,Contextual_Volatile_Submodular}.
We remark that our contribution in this setting focuses on designing a practical solution and deriving regret guarantees which explicitly characterize the correlation impacts. 
The lower bound for CMCB-FB remains open, which we leave for future work.

\section{Experiments} \label{sec:experiments}

\begin{figure} [t!]
	\centering     
	\subfigure[FI, synthetic, $d=5$, $\rho=0.1$] { 
		\label{fig:fi_syn}        
		\includegraphics[width=0.3\textwidth]{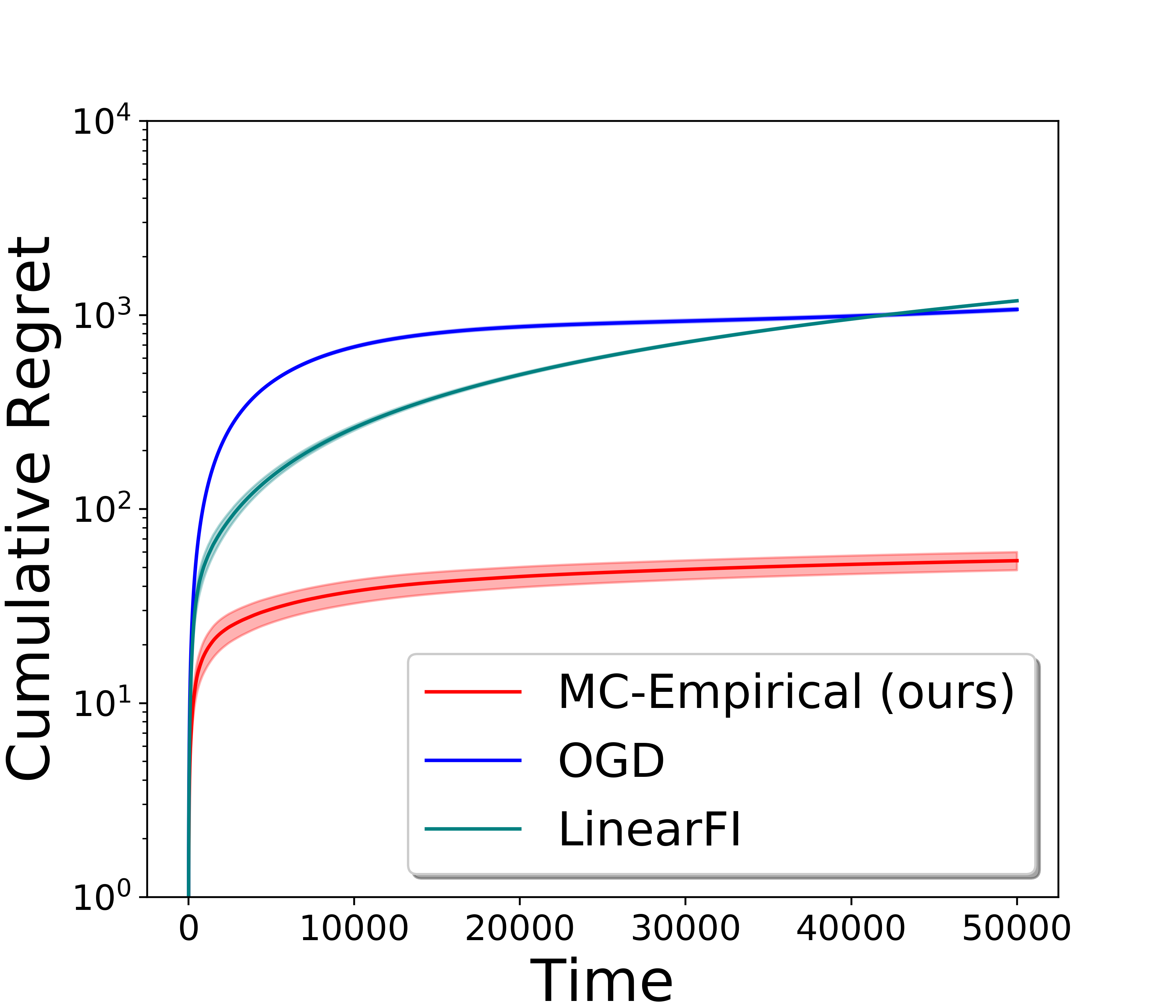} 
	}    
	\subfigure[SB, synthetic, $d=5$, $\rho=0.1$] { 
		\label{fig:sb_syn}       
		\includegraphics[width=0.3\textwidth]{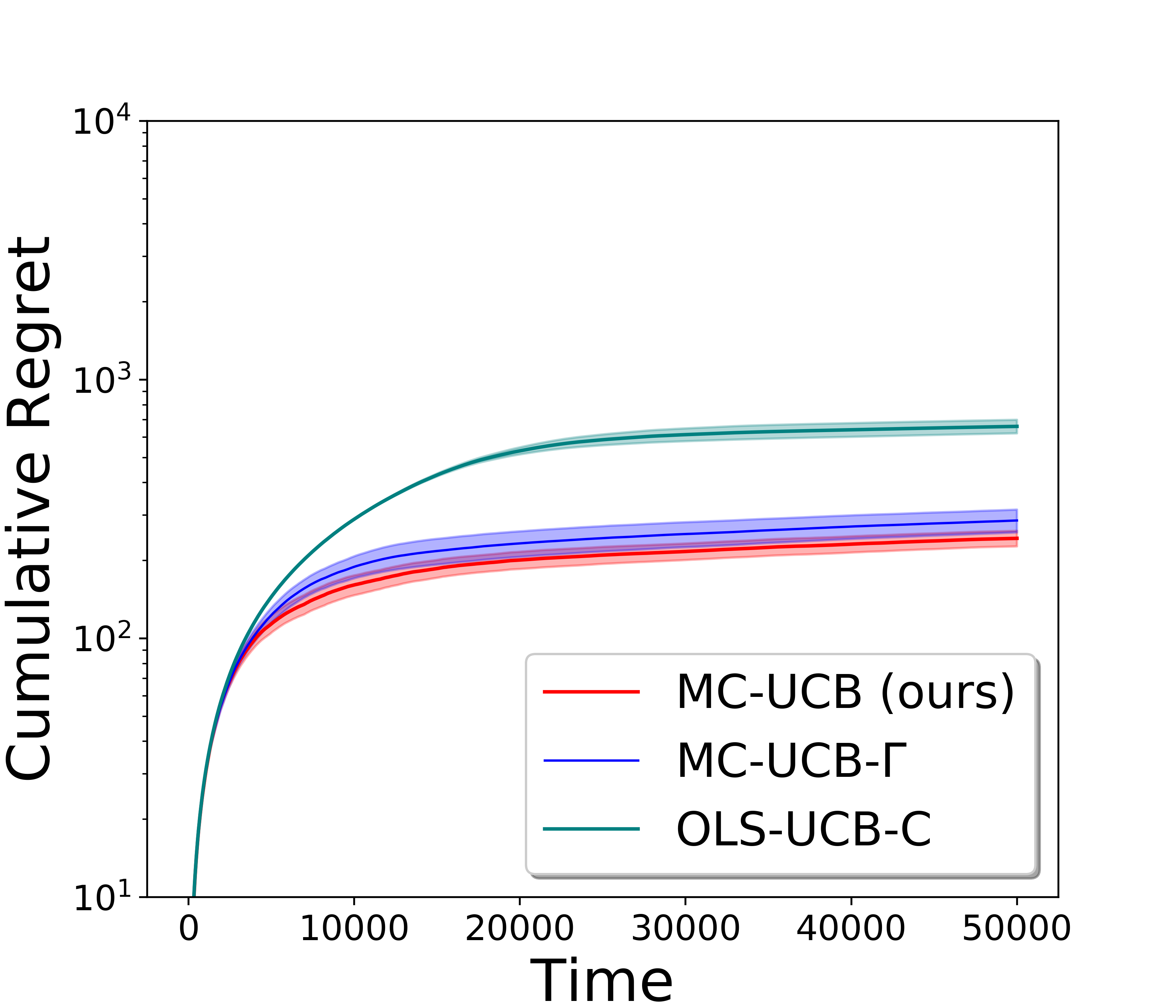} 
	}   
	\subfigure[FB, synthetic, $d=5$, $\rho=10$] { 
		\label{fig:fb_syn}       
		\includegraphics[width=0.3\textwidth]{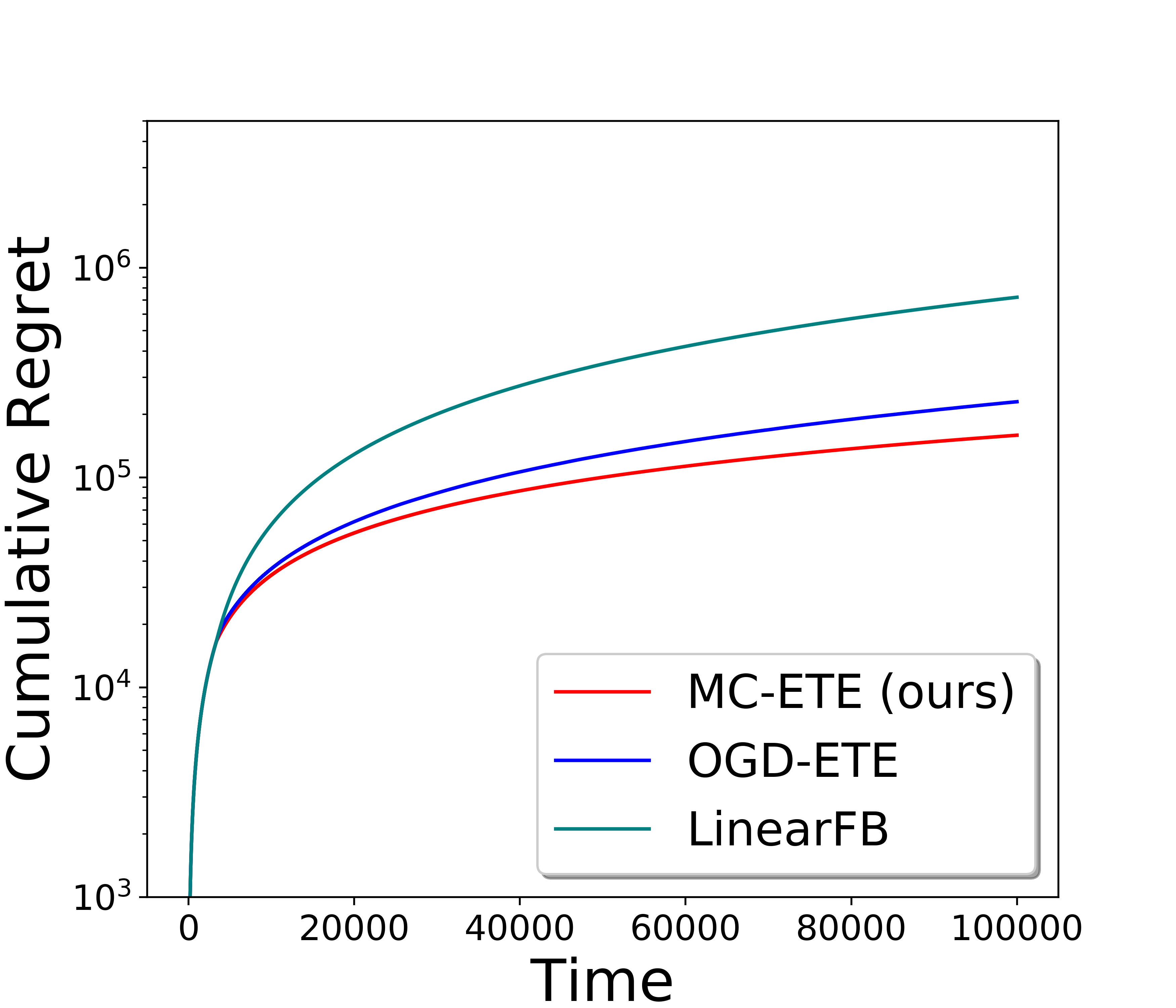} 
	}
	\subfigure[FI, real-world, $d=5$, $\rho=0.1$] { 
		\label{fig:fi_real}        
		\includegraphics[width=0.3\textwidth]{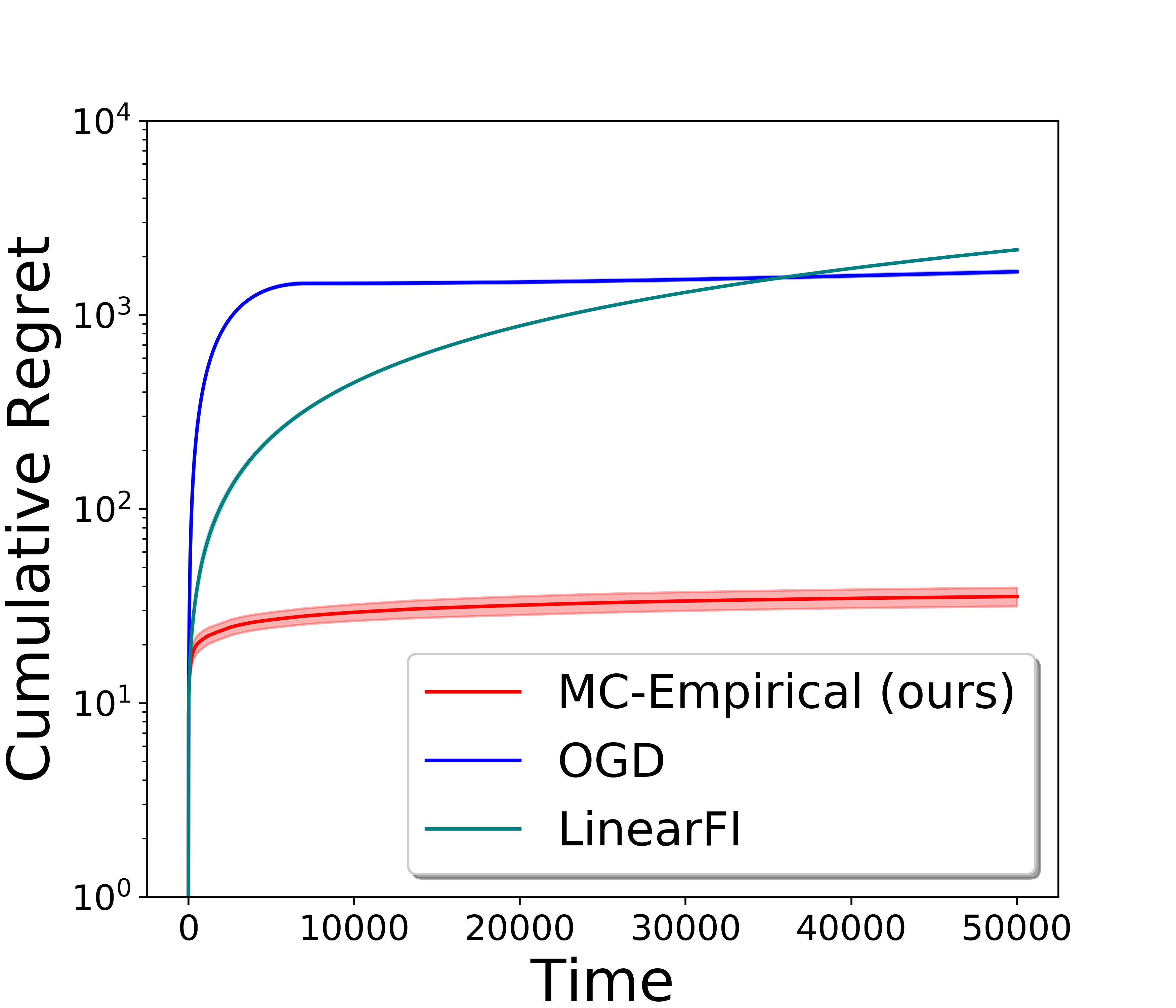} 
	} 
	\subfigure[SB, real-world, $d=5$, $\rho=0.1$] { 
		\label{fig:sb_real}        
		\includegraphics[width=0.3\textwidth]{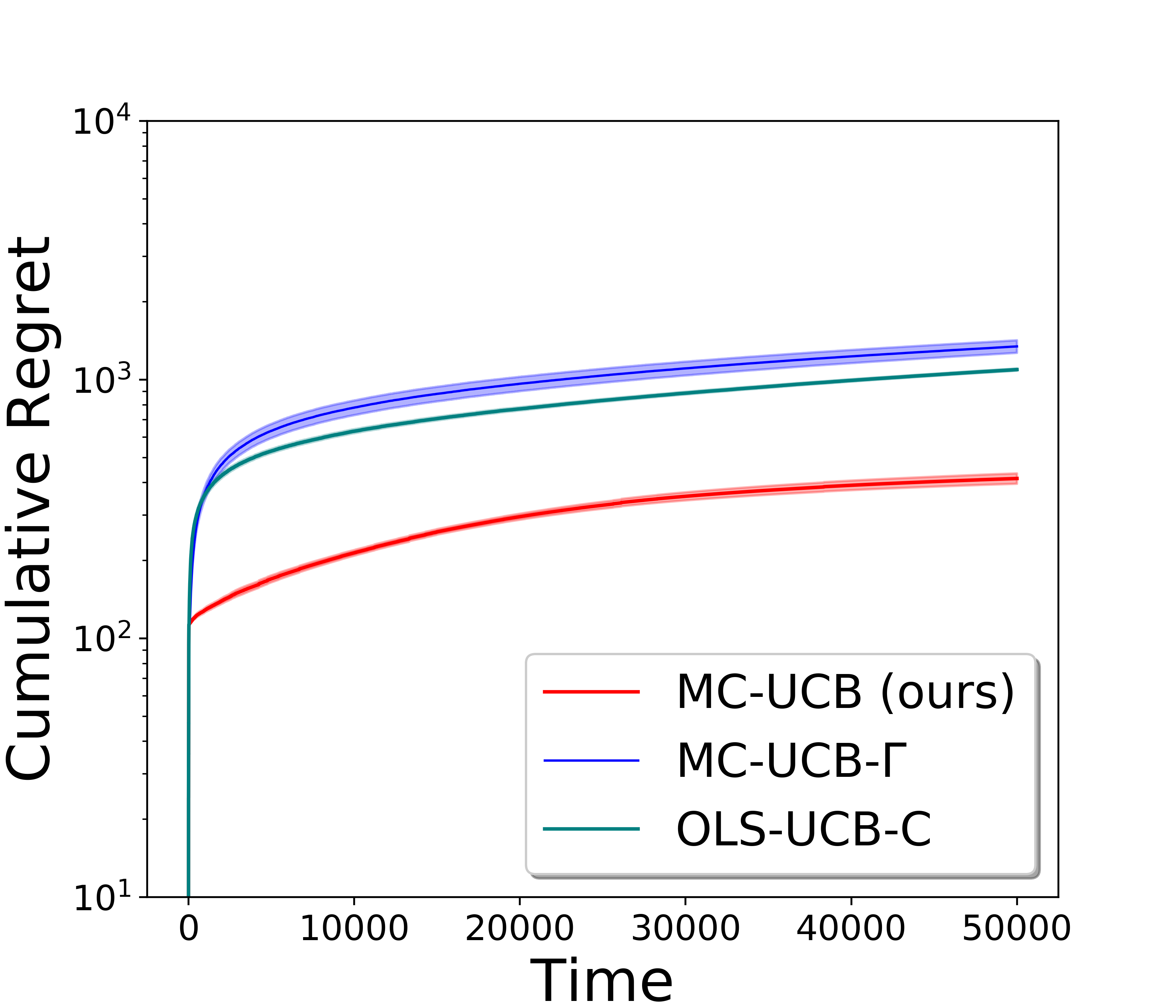} 
	} 
	\subfigure[FB, real-world, $d=5$, $\rho=10$] { 
		\label{fig:fb_real}        
		\includegraphics[width=0.3\textwidth]{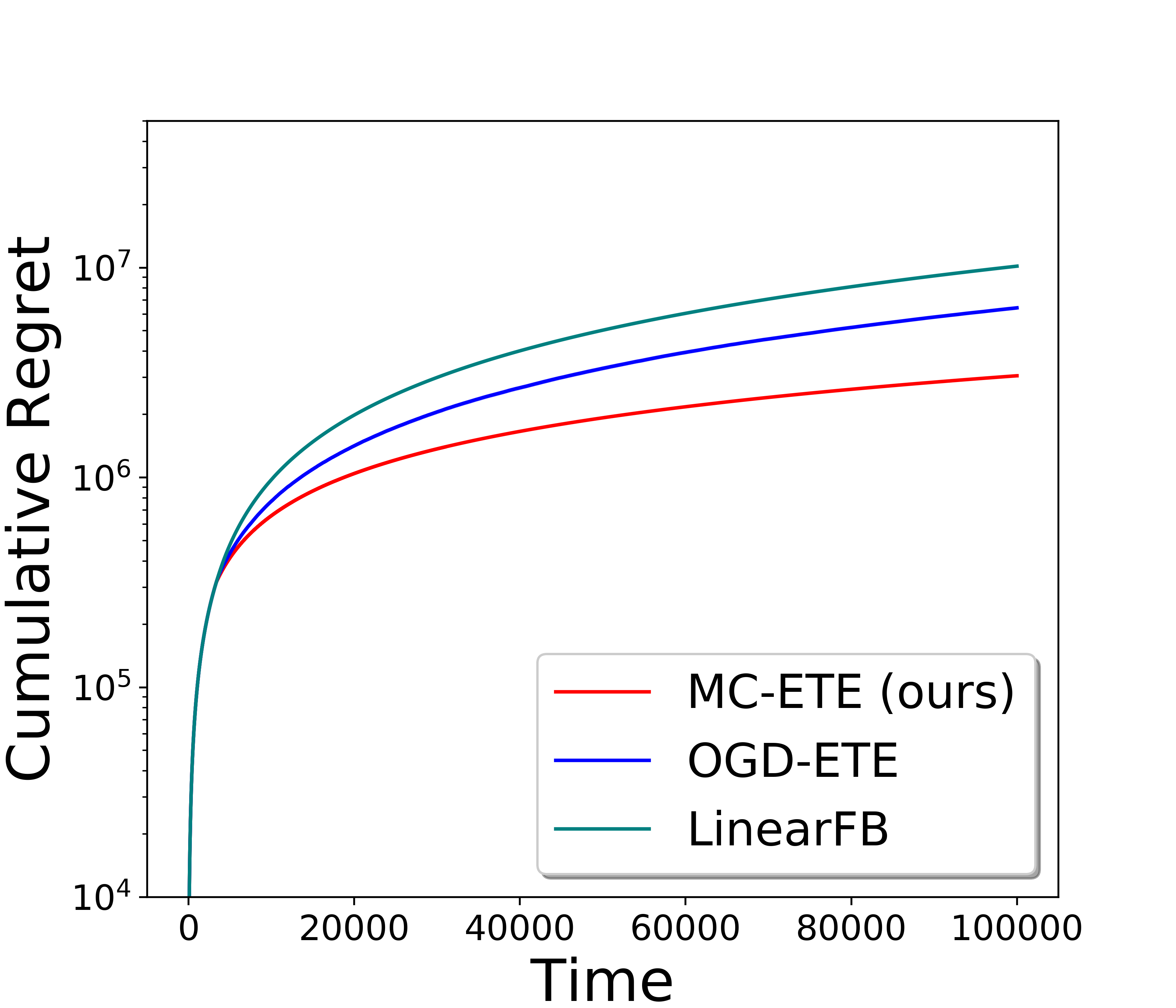} 
	} 
	\caption{Experiments for CMCB-FI, CMCB-SB and CMCB-FB on the synthetic and real-world datasets.\yihan{Added the experiments on the real-world dataset.}}
	\label{fig:experiments_main}
\end{figure} 

In this section, we present experimental results for our algorithms on both synthetic and real-world  \cite{kaggle_us_fund} datasets.
%%, with a wide range of parameter settings. 
%%
%Following the experimental setups in the stochastic bandit literature~\cite{sani2012risk,covariance-adapting2020,thompson_sampling_mean_variance2020}, we use the stochastic synthetic data.
%Specifically, 
For the synthetic dataset, we set $\bs{\theta}^*=[0.2, 0.3, 0.2, 0.2, 0.2]^\top$, and 
$\Sigma^*$ has all diagonal entries equal to $1$ and all off-diagonal entries equal to $-0.05$. 
For the real-world dataset, we use an open dataset \emph{US Funds from Yahoo Finance} on Kaggle~\cite{kaggle_us_fund}, which provides financial data of 1680 ETF funds in 2010-2017. We select five funds and generate a stochastic distribution ($\bs{\theta}^*$ and $\Sigma^*$) from the data of returns (since we study a stochastic bandit problem). 
For both datasets, we set $d=5$ and $\rho \in \{0.1, 10\}$. The random reward $\bs{\theta}_t$ is drawn i.i.d. from  Gaussian distribution $\cN(\bs{\theta}^*, \Sigma^*)$. We perform $50$ independent runs for each algorithm and show the average regret and $95\%$ confidence interval across runs,\footnote{In some cases, since algorithms are doing similar procedures (e.g., in Figures~\ref{fig:fb_syn},\ref{fig:fb_real}, the algorithms are exploring the designed actions) and have low performance variance, the confidence intervals are narrow and indistinguishable.} with logarithmic y-axis for clarity of magnitude comparison.
%and not easy to see
%For clarity of comparison in magnitude, we plot the y-axis in a logarithmic scale.

%The results here are obtained on fixed problem instances. More (parameter-free) empirical results and detailed descriptions of baselines are presented in the supplementary material.
\yihan{Added the experiments on the real-world dataset.}

\textbf{(CMCB-FI)}  
We compare our algorithm $\empiricalmax$ with  two algorithms $\mathtt{OGD}$~\cite{hazan2007logarithmic} and $\mathtt{LinearFI}$. 
$\mathtt{OGD}$ (Online Gradient Descent)~\cite{hazan2007logarithmic} is designed for general online convex optimization with also a $O(\sqrt{T})$ regret guarantee, but its result cannot capture the covariance impacts as ours. 
% apply Theorem 1 in \cite{hazan2007logarithmic}
$\mathtt{LinearFI}$ is a linear adaption of $\empiricalmax$ that only aims to maximize the expected rewards.
%Figures~\ref{fig:fi_d5_rho1},\ref{fig:fi_d5_rho10} show the empirical regret performance of the compared algorithms on CMCB-FI.
%We can see that our $\empiricalmax$ enjoys a significant regret reduction when compared to the other algorithms.
Figures~\ref{fig:fi_syn},\ref{fig:fi_real} show that our $\empiricalmax$ enjoys multiple orders of magnitude reduction in regret compared to the benchmarks, since it efficiently exploits the empirical observations to select actions and well handles the covariance-based risk. 
In particular, the performance superiority of $\empiricalmax$ over $\mathtt{OGD}$ demonstrates that our sample strategy sufficiently utilize the observed information than conventional gradient descent based policy.

%$\mathtt{OGD}$ shows much worse performance than  $\empiricalmax$ due to its inefficiency in exploiting the covariance information, and 
%% despite the same order in the theoretical bound.
%$\mathtt{LinearFI}$ suffers the highest regret since it cannot handle the risk.

\textbf{(CMCB-SB)} 
For CMCB-SB, we compare $\algsemibandit$ with two adaptions of $\mathtt{OLS \mhyphen UCB}$~\cite{known_covariance2016} (state-of-the-art for combinatorial bandits with covariance), named $\mathtt{MC \mhyphen UCB \mhyphen \Gamma}$ and $\mathtt{OLS \mhyphen UCB \mhyphen C}$.
$\mathtt{MC \mhyphen UCB \mhyphen \Gamma}$ uses the confidence region with a universal covariance upper bound $\Gamma$, instead of the adapting one used in our $\algsemibandit$.
%, which is the main idea of $\mathtt{OLS \mhyphen UCB}$ [].
%and addresses the risk part in the same way of our $\algsemibandit$.
$\mathtt{OLS \mhyphen UCB \mhyphen C}$ directly adapts $\mathtt{OLS \mhyphen UCB}$~\cite{known_covariance2016} to the continuous decision space and only considers maximizing the expected rewards in its objective.
As shown in Figures~\ref{fig:sb_syn},\ref{fig:sb_real},  $\algsemibandit$ achieves the lowest regret since it utilizes the covariance information to accelerate the estimate  concentration.
Due to lack of a covariance-adapting confidence interval, $\mathtt{MC \mhyphen UCB \mhyphen \Gamma}$ shows an inferior regret performance than $\algsemibandit$, and $\mathtt{OLS \mhyphen UCB \mhyphen C}$ suffers the highest regret due to its ignorance of risk.

\textbf{(CMCB-FB)} 
We compare $\algfullbandit$ with  two baselines, $\mathtt{OGD \mhyphen ETE}$, which adopts $\mathtt{OGD}$~\cite{hazan2007logarithmic} during the exploitation phase, and $\mathtt{LinearFB}$, which only investigates the expected reward maximization. 
From Figures~\ref{fig:fb_syn},\ref{fig:fb_real}, 
one can see that, $\algfullbandit$ achieves the best regret performance due to its effective estimation of the covariance-based risk and efficiency in exploitation. 
Due to the inefficiency of gradient descent based policy in utilizing information, $\mathtt{OGD \mhyphen ETE}$ has a higher regret than $\algfullbandit$, whereas $\mathtt{LinearFB}$ shows the worst performance owing to the unawareness of the risk.

\section{Conclusion and Future Work}
\label{sec:conclusion}
%\longbo{too dry, needs work}
In this paper, we propose a novel continuous mean-covariance bandit (CMCB) model, which investigates the reward-risk trade-off measured by option correlation.
Under this model, we consider three feedback settings, i.e., full-information, semi-bandit and full-bandit feedback, to formulate different real-world reward observation scenarios.
%For CMCB-FI, we provide an optimal (within a logarithmic factor) algorithm and the lower bound to certify its optimality.
%For CMCB-SB, we propose a novel algorithm with the optimal regret with respect to $T$ (up to a logarithmic factor) and the lower bound to demonstrate its optimality.
%For CMCB-FB, we propose an algorithm with a novel regret analysis.
We propose novel algorithms for CMCB with rigorous regret analysis, and provide lower bounds for the problems to demonstrate our optimality. We also  present empirical evaluations to show the superior performance of our algorithms. 
To our best knowledge, this is the first work to fully characterize the impacts of arbitrary covariance structures on learning performance for risk-aware bandits.
There are several interesting directions for future work. For example, how to design an adaptive algorithm for CMCB-FB is a challenging open problem, and the lower bound for CMCB-FB is also worth further investigation.

\begin{ack}
The work of Yihan Du and Longbo Huang is  supported  in  part  by  the  Technology  and Innovation  Major  Project  of  the  Ministry  of  Science  and Technology  of  China  under  Grant 2020AAA0108400 and 2020AAA0108403. 
\end{ack}

%\clearpage
\bibliographystyle{plainnat}
\bibliography{neurips_2021_CMCB_ref}

\clearpage
\newgeometry{left=2cm,right=2cm}
\appendix
\section*{Appendix}

\section{Technical Lemmas}
In this section, we introduce two technical lemmas which will be used in our analysis. % of both CMCB-SB and CMCB-FI. 

Lemmas~\ref{lemma:semi_concentration_covariance} and \ref{lemma:semi_concentration_means} give the concentration guarantees of algorithm $\algsemibandit$ for CMCB-SB, which sets up a foundation for the concentration guarantees in CMCB-FI.
% For ease of notation, we use $\Sigma$ for a shorthand of the covariance matrix $\Sigma^*$ in Appendix.

\begin{lemma}[Concentration of Covariance for CMCB-SB] \label{lemma:semi_concentration_covariance}
	Consider the CMCB-SB problem and algorithm $\algsemibandit$ (Algorithm~\ref{alg:semi_bandit}).
	Define the event 
	\begin{align*}
		\cG_t  \triangleq  \Bigg\{    |\Sigma^*_{ij}-\hat{\Sigma}_{ij,t-1}|  \leq  & 16   \left(  \frac{3 \ln t}{N_{ij}(t-1)}   \vee   \sqrt{\frac{3 \ln t}{N_{ij}(t-1)}} \right)    \\& +   \sqrt{\frac{61 \ln^2 t}{N_{ij}(t-1) N_{i}(t-1)}}   +   \sqrt{\frac{36 \ln^2 t}{N_{ij}(t-1) N_{j}(t-1)}}, \forall i,j \in [d] \Bigg\}
	\end{align*}
	For any $t \geq 2$, we have 
	$$
	Pr[\cG_t] \geq 1- \frac{10 d^2}{t^2}.
	$$
\end{lemma}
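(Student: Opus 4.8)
The plan is to expand the empirical covariance in the noise and control the deviation term by term. Write $\theta_{s,i}=\theta^*_i+\eta_{s,i}$ and abbreviate $N_{ij}=N_{ij}(t-1)$, $\hat\theta_i=\hat\theta^*_{t-1,i}$, and $\bar\eta_i=\hat\theta_i-\theta^*_i=\tfrac{1}{N_{ii}}\sum_{s=1}^{t-1}J_{s,ii}\eta_{s,i}$. Substituting $(\theta_{s,i}-\hat\theta_i)=(\eta_{s,i}-\bar\eta_i)$ and summing against $J_{s,ij}$ gives the decomposition
$$
\hat\Sigma_{ij,t-1}-\Sigma_{ij}
=\underbrace{\Big(\tfrac{1}{N_{ij}}\textstyle\sum_{s}J_{s,ij}\eta_{s,i}\eta_{s,j}-\Sigma_{ij}\Big)}_{(\mathrm I)}
-\underbrace{\bar\eta_i\cdot\tfrac{1}{N_{ij}}\textstyle\sum_{s}J_{s,ij}\eta_{s,j}}_{(\mathrm{II})}
-\underbrace{\bar\eta_j\cdot\tfrac{1}{N_{ij}}\textstyle\sum_{s}J_{s,ij}\eta_{s,i}}_{(\mathrm{III})}
+\underbrace{\bar\eta_i\bar\eta_j}_{(\mathrm{IV})},
$$
where all sums run over $s=1,\dots,t-1$. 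The structural facts I would use are: $J_{s,ij}$ is $\cF_{s-1}$-measurable since $\bw_s$ is chosen from past data, $\E[\eta_{s,i}\mid\cF_{s-1}]=0$ and $\E[\eta_{s,i}\eta_{s,j}\mid\cF_{s-1}]=\Sigma_{ij}$, so every inner sum is a martingale object; and each coordinate $\eta_{s,i}$ is $1$-sub-Gaussian because $\Sigma^*_{ii}\le 1$.

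For the main term $(\mathrm I)$, the increments $J_{s,ij}(\eta_{s,i}\eta_{s,j}-\Sigma_{ij})$ form a martingale-difference sequence whose terms are sub-exponential (a product of two sub-Gaussians). I would apply a Bernstein/Freedman-type inequality for sub-exponential martingale differences, made uniform over the data-dependent count $N_{ij}$ by a peeling argument over dyadic ranges of $N_{ij}\in[t-1]$ (equivalently, a union bound over all realizable counts) with per-event failure probability of order $\delta/(d^2 t^2)$. The two-regime form of Bernstein's bound yields exactly the $16\big(\tfrac{3\ln t}{N_{ij}}\vee\sqrt{\tfrac{3\ln t}{N_{ij}}}\big)$ contribution, the constant $16$ absorbing the sub-exponential parameters inherited from the $1$-sub-Gaussianity of the coordinates.

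For the cross and quadratic terms $(\mathrm{II})$–$(\mathrm{IV})$, I would bound each factor separately on a high-probability event. By Azuma/self-normalized sub-Gaussian concentration, again rendered anytime-valid over the random counts by peeling, $|\bar\eta_i|\le\sqrt{6\ln t/N_{ii}}$ and $\big|\tfrac{1}{N_{ij}}\sum_s J_{s,ij}\eta_{s,i}\big|\le\sqrt{6\ln t/N_{ij}}$, and similarly with $j$. Multiplying the appropriate factors, and using $N_{ij}\le\min\{N_{ii},N_{jj}\}$ to absorb the purely quadratic term $(\mathrm{IV})=\bar\eta_i\bar\eta_j\le\sqrt{36\ln^2 t/(N_iN_j)}\le\sqrt{36\ln^2 t/(N_iN_{ij})}$ into the $N_i$-factor, yields the $\sqrt{61\ln^2 t/(N_{ij}N_i)}$ and $\sqrt{36\ln^2 t/(N_{ij}N_j)}$ terms of $\cG_t$. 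A final union bound over the $O(d^2)$ pairs $(i,j)$, the $O(1)$ terms, and the $O(\log t)$ peeling blocks (with geometrically summable per-block probabilities) gives $\Pr[\cG_t^c]\le 10d^2/t^2$.

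The main obstacle is adaptivity: $N_{ij}(t-1)$ and $N_{ii}(t-1)$ are random counts chosen by the algorithm, and $\bar\eta_i$ averages the noise over a random index set that overlaps the index sets defining the partial averages in $(\mathrm{II})$–$(\mathrm{III})$, so the two factors in each product are correlated and not conveniently adapted to one filtration. The resolution is twofold: (i) since only magnitude upper bounds are needed, these correlations are harmless — we intersect the individual high-probability events and multiply the bounds; and (ii) each magnitude bound must hold \emph{simultaneously} for every realizable value of its count, which is exactly what forces the peeling step and upgrades $\ln(1/\delta)$ to the $\ln^2 t$ appearing in the product terms. A secondary subtlety is that $\eta_{s,i}\eta_{s,j}$ is only sub-exponential, so no Hoeffding-type bound applies to $(\mathrm I)$ and Bernstein's two-regime estimate is essential — this is precisely the origin of the $\max$ in the first term of $\cG_t$.
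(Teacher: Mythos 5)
Your proposal is correct in structure, but it takes a genuinely different (and much longer) route than the paper. The paper's entire proof is two sentences: it invokes Proposition~2 of \cite{covariance-adapting2020} as a black box for the per-pair concentration of the empirical covariance under adaptively chosen, unequal observation counts, and then applies a union bound over the $d^2$ pairs $(i,j)$. What you have written is, in effect, a sketch of the proof of that cited proposition itself: the four-term decomposition of $\hat{\Sigma}_{ij,t-1}-\Sigma_{ij}$ into a centered product-of-noise martingale plus cross and quadratic terms in $\bar{\eta}_i,\bar{\eta}_j$, a Bernstein-type bound for the sub-exponential main term (which is indeed where the $\max$ between $\tfrac{3\ln t}{N_{ij}}$ and its square root originates), sub-Gaussian bounds on each averaged-noise factor, and peeling over the random counts to make everything anytime-valid --- this is precisely the architecture of the argument in \cite{covariance-adapting2020}, and your identification of adaptivity of $N_{ij}(t-1)$ and the overlap of the averaging index sets as the central difficulties is on target. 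The trade-off is clear: the paper's citation is economical but opaque about where each piece of the radius comes from, while your derivation makes the origin of each term transparent at the cost of re-deriving known machinery. One caveat: your sketch does not land on the exact constants in $\cG_t$. With the factor bounds $|\bar{\eta}_i|\le\sqrt{6\ln t/N_{ii}}$ and $\big|\tfrac{1}{N_{ij}}\sum_s J_{s,ij}\eta_{s,j}\big|\le\sqrt{6\ln t/N_{ij}}$ as stated, terms $(\mathrm{II})$ and $(\mathrm{IV})$ together give $2\sqrt{36\ln^2 t/(N_{ij}N_{ii})}=\sqrt{144\ln^2 t/(N_{ij}N_{ii})}$, which exceeds the $\sqrt{61\ln^2 t/(N_{ij}N_{ii})}$ appearing in the event; matching the lemma verbatim requires the sharper per-factor constants used in \cite{covariance-adapting2020}. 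This is a quantitative, not structural, discrepancy, but since $\cG_t$ is defined with explicit numerical constants that feed into $g_{ij}(t)$ in the algorithm, it would need to be fixed for the statement as written.
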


\begin{proof}
	According to Proposition 2 in~\cite{covariance-adapting2020},
	we have that for any $t \geq 2$ and $i,j \in [d]$,
	\begin{align*}
	\Pr \Bigg[  |\Sigma^*_{ij}-\hat{\Sigma}_{ij,t-1}|  \leq  & 16 \left( \frac{3 \ln t}{N_{ij}(t-1)}   \vee   \sqrt{\frac{3 \ln t}{N_{ij}(t-1)}} \right)  \\& +   \sqrt{\frac{61 \ln^2 t}{N_{ij}(t-1) N_{i}(t-1)}}   +   \sqrt{\frac{36 \ln^2 t}{N_{ij}(t-1) N_{j}(t-1)}} \Bigg] \leq 1- \frac{10}{ t^2}.
	\end{align*}
	Using a union bound on $i,j \in [d]$, we obtain Lemma~\ref{lemma:semi_concentration_covariance}.
\end{proof}

\begin{lemma}[Concentration of Means for CMCB-SB] \label{lemma:semi_concentration_means}
	Consider the CMCB-SB problem and algorithm $\algsemibandit$ (Algorithm~\ref{alg:semi_bandit}). 
	Let $0<\lambda<1$, and define $\delta_t=\frac{1}{t \ln^2 t}$ and $\beta(\delta_t)=\ln(1/\delta_t) + d\ln \ln t + \frac{d}{2} \ln(1+e/\lambda)$ for $t\geq2$.
	Then, for any $t \geq 2$ and $\bw \in \triangle^c_{d}$, with probability at least $1-\delta_t$, we have
	$$
	\left | \bw^\top \bs{\theta}^*-\bw^\top \bs{\hat{\theta}}_{t-1} \right| \leq \sqrt{2\beta(\delta_t)} \sqrt{\bw^\top D_{t-1}^{-1} \sbr{\lambda \Lambda_{\Sigma^*} D_{t-1} + \sum_{s=1}^{t-1} \Sigma^*_{\bw_s}}  D_{t-1}^{-1} \bw} .
	$$
	%Conditioning on event $\cG_t$  and 
	Further define $E_t(\bw) = \sqrt{2\beta(\delta_t)} \sqrt{\bw^\top D_{t-1}^{-1} (\lambda \Lambda_{\bar{\Sigma}_{t}} D_{t-1} + \sum_{s=1}^{t-1} \bar{\Sigma}_{s, \bw_s} ) D_{t-1}^{-1} \bw}$. Then, for any $t \geq 2$ and $\bw \in \triangle^c_{d}$, the event $\cH_t \triangleq \{ |\bw^\top \bs{\theta}^*-\bw^\top \bs{\hat{\theta}}_{t-1}| \leq E_t(\bw) \}$ satisfies $\Pr[\cH_t\left.|\right. \cG_t] \geq 1 - \delta_t$.
\end{lemma}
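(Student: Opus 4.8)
\textbf{Proof proposal for Lemma~\ref{lemma:semi_concentration_means}.}

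The plan is to establish the first (anytime, high-probability) bound by a self-normalized concentration argument in the style of the method-of-mixtures / Laplace technique used for linear bandits, adapted to the vector-noise structure of CMCB, and then to transfer it to the computable version $E_t(\bw)$ by conditioning on the covariance event $\cG_t$ from Lemma~\ref{lemma:semi_concentration_covariance}. First I would set up the right filtration $\{\cF_t\}$ with $\bw_t$ being $\cF_{t-1}$-measurable, and rewrite the estimation error $\bw^\top(\bs{\hat\theta}_{t-1}-\bs\theta^*)$ as a weighted sum of the past noise increments $\bs\eta_s$. The key observation is that in the semi-bandit setting the $i$-th coordinate of $\bs{\hat\theta}_{t-1}$ is the empirical mean over the $N_{ii}(t-1)$ timesteps where arm $i$ was observed; writing $D_{t-1}=\mathrm{diag}(N_{11}(t-1),\dots,N_{dd}(t-1))$, one gets $\bw^\top(\bs{\hat\theta}_{t-1}-\bs\theta^*)=\sum_{s=1}^{t-1}\bw^\top D_{t-1}^{-1} I_{\bw_s}\bs\eta_s$, since coordinate $i$ picks up a contribution only on the timesteps $s$ where $w_{s,i}>0$. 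The conditional variance proxy of the $s$-th term, using the sub-Gaussian assumption $\E[\exp(\bu^\top\bs\eta_s)]\le\exp(\tfrac12\bu^\top\Sigma^*\bu)$, is $\bw^\top D_{t-1}^{-1} I_{\bw_s}\Sigma^* I_{\bw_s} D_{t-1}^{-1}\bw=\bw^\top D_{t-1}^{-1}\Sigma^*_{\bw_s}D_{t-1}^{-1}\bw$, which is exactly the sum appearing inside the square root (the $\lambda\Lambda_{\Sigma}D_{t-1}$ term is the regularizer contributed by the mixture prior).

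The core step is then a supermartingale / method-of-mixtures construction: for a fixed auxiliary parameter one checks that $M_s^{\bu}=\exp\big(\bu^\top\sum_{r\le s}I_{\bw_r}\bs\eta_r-\tfrac12\bu^\top(\sum_{r\le s}\Sigma^*_{\bw_r})\bu\big)$ is a supermartingale, mix over $\bu\sim\cN(0,\lambda^{-1}\Lambda_{\Sigma}^{-1}$ restricted appropriately$)$ or a comparable Gaussian prior, and apply the stopping-time / maximal inequality (Ville's inequality) to the mixture $\bar M_s=\int M_s^{\bu}\,d\mu(\bu)$. Computing the Gaussian integral yields a determinant ratio, and bounding that ratio gives precisely the confidence width $\sqrt{2\beta(\delta_t)}\cdot\sqrt{\bw^\top D_{t-1}^{-1}(\lambda\Lambda_{\Sigma}D_{t-1}+\sum_{s=1}^{t-1}\Sigma_{\bw_s})D_{t-1}^{-1}\bw}$, where the term $\beta(\delta_t)=\ln(1/\delta_t)+d\ln\ln t+\tfrac d2\ln(1+e/\lambda)$ comes from $\ln(1/\delta_t)$ plus the $\tfrac12\ln\det$ bound (the $d\ln\ln t$ absorbing a peeling/stratification over the possible values of $D_{t-1}$, and $\tfrac d2\ln(1+e/\lambda)$ from the prior normalization). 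I would then instantiate with $\delta_t=1/(t\ln^2 t)$ to get the stated per-$t$ probability.

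For the second claim, I would intersect with $\cG_t$: on $\cG_t$ we have $\Sigma_{ij}\le\bar\Sigma_{t,ij}$ entrywise for all $i,j$ (and similarly on the diagonal, so $\Lambda_{\Sigma}\preceq\Lambda_{\bar\Sigma_t}$ in the diagonal sense), hence $\sum_{s=1}^{t-1}\Sigma_{\bw_s}\preceq\sum_{s=1}^{t-1}\bar\Sigma_{s,\bw_s}$ and the quadratic form inside the root only grows, so the true-covariance confidence width is dominated by $E_t(\bw)$. Therefore $\{|\bw^\top\bs\theta^*-\bw^\top\bs{\hat\theta}_{t-1}|\le E_t(\bw)\}\supseteq\{|\bw^\top\bs\theta^*-\bw^\top\bs{\hat\theta}_{t-1}|\le (\text{true width})\}$ on $\cG_t$, and the first part gives $\Pr[\cH_t\mid\cG_t]\ge1-\delta_t$. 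The main obstacle I anticipate is the careful handling of the random, coordinate-dependent normalization $D_{t-1}$ in the method-of-mixtures step — the standard linear-bandit argument has a deterministic Gram matrix, whereas here the "design" is the diagonal count matrix that is itself adapted, so the peeling argument producing the $d\ln\ln t$ term and the correct regularization must be done with some care; monotonicity $\Sigma_{\bw_s}\preceq\Sigma^*$ together with $\Sigma^*_{ii}\le1$ is what keeps everything controlled.
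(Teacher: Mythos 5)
Your proposal follows essentially the same route as the paper's proof: rewriting the error as $\sum_{s}\bw^\top D_{t-1}^{-1}I_{\bw_s}\bs{\eta}_s$, applying a method-of-mixtures supermartingale with a Gaussian prior and a peeling argument over the adapted count matrix $D_{t-1}$ (which is exactly where the $d\ln\ln t$ term arises), and then transferring to $E_t(\bw)$ on $\cG_t$ via entrywise domination of $\bar{\Sigma}$ over $\Sigma$ combined with the nonnegativity of the weights. The argument is correct and matches the paper's proof in all essential steps.
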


\begin{proof}
	The proof of Lemma~\ref{lemma:semi_concentration_means} follows the analysis procedure in \cite{known_covariance2016}.
	Specifically, assuming that event $\cG_t$ occurs,
	we have $\bar{\Sigma}_{t, ij} \geq {\Sigma}^*_{ij}$ for any $ i,j \in [d]$ and 
	\begin{align*} 
	E_t(\bw) \geq \sqrt{2\beta(\delta_t)} \sqrt{\bw^\top D_{t-1}^{-1} \sbr{\lambda \Lambda_{\Sigma^*} D_{t-1} + \sum_{s=1}^{t-1} \Sigma^*_{\bw_s}}  D_{t-1}^{-1} \bw} .
	\end{align*}
	Hence, to prove Lemma~\ref{lemma:semi_concentration_means}, it suffices to prove that 
	\begin{align} \label{eq:semi_con_mean_first_prove}
	\Pr \mbr{ \left | \bw^\top \bs{\theta}^*-\bw^\top \bs{\hat{\theta}}_{t-1} \right| > \sqrt{2\beta(\delta_t)} \sqrt{\bw^\top D_{t-1}^{-1} \sbr{\lambda \Lambda_{\Sigma^*} D_{t-1} + \sum_{s=1}^{t-1} \Sigma^*_{\bw_s}}  D_{t-1}^{-1} \bw} } \leq  \delta_t .
	\end{align}
	
	Recall that $N_{i}(t)=\sum_{s=1}^t \I\{ w_{s,i} \geq c \}$ and $D_t$ is a diagonal matrix such that $D_{t,ii}=N_{i}(t)$ for any $t > 0$. For any $\bw \in \triangle^c_{d}$, let $I_{\bw}$ denote the diagonal matrix such that $I_{ii}=1$ for any $w_i \geq c$ and  $I_{jj}=0$ for any $w_j = 0$, and let $\Sigma^*_{\bw}=I_{\bw} \Sigma^* I_{\bw}$.
	Let $\bs{\varepsilon}_t$ be the vector such that $\bs{\eta}_t=(\Sigma^*)^{\frac{1}{2}} \bs{\varepsilon}_t$ for any $t>0$.
	
	Let $D$ be a positive definite matrix such that $D \preceq \lambda \Lambda_{\Sigma^*} D_{t-1}$.
	Then, we have for any $\bw \in \triangle^c_{d}$ that 
	\begin{align*}
	\abr{ \bw^\top \sbr{ \bs{\theta}^* - \bs{\hat{\theta}}_{t-1} } } = & \abr{  - \bw^\top D_{t-1}^{-1} \sum_{s=1}^{t-1} I_{\bw_s} (\Sigma^*)^{\frac{1}{2}} \bs{\varepsilon}_s }
	\\
	= & \abr{- \bw^\top D_{t-1}^{-1}  \sbr{D + \sum_{s=1}^{t-1} \Sigma^*_{\bw_s} }^{\frac{1}{2}} \sbr{D + \sum_{s=1}^{t-1} \Sigma^*_{\bw_s} }^{-\frac{1}{2}}  \sum_{s=1}^{t-1} I_{\bw_s} (\Sigma^*)^{\frac{1}{2}} \bs{\varepsilon}_s}
	\\
	\leq & \sqrt{\bw^\top D_{t-1}^{-1}  \sbr{D + \sum_{s=1}^{t-1} \Sigma^*_{\bw_s} } D_{t-1}^{-1} \bw } \cdot 
	\left \| \sum_{s=1}^{t-1} I_{\bw_s} (\Sigma^*)^{\frac{1}{2}} \bs{\varepsilon}_s \right \|_{\sbr{D + \sum_{s=1}^{t-1} \Sigma^*_{\bw_s} }^{-1} }
	\end{align*}
	Let $S_t=\sum_{s=1}^{t-1} I_{\bw_s} (\Sigma^*)^{\frac{1}{2}} \bs{\varepsilon}_s$, $V_t=\sum_{s=1}^{t-1} \Sigma^*_{\bw_s}$ and $I_{D+V_t}=\frac{1}{2}\| S_t \|^2_{\sbr{D+V_t}^{-1}}$. 
	We get 
	$$
	\left \| \sum_{s=1}^{t-1} I_{\bw_s} (\Sigma^*)^{\frac{1}{2}} \bs{\varepsilon}_s \right \|_{\sbr{D + \sum_{s=1}^{t-1} \Sigma^*_{\bw_s} }^{-1} } = \| S_t \|_{\sbr{D+V_t}^{-1} } = \sqrt{2 I_{D+V_t}} .
	$$
	
	Since $ D \preceq \lambda \Lambda_{\Sigma^*} D_{t-1} $, we have 
	\begin{align*}
	\abr{ \bw^\top \sbr{ \bs{\theta}^* - \bs{\hat{\theta}}_{t-1} }} 
	\leq & \sqrt{\bw^\top D_{t-1}^{-1} D D_{t-1}^{-1} \bw^\top + \bw^\top D_{t-1}^{-1} \sbr{\sum_{s=1}^{t-1} \Sigma^*_{\bw_s} } D_{t-1}^{-1} \bw } \cdot \sqrt{2 I_{D+V_t}}
	\\
	\leq & \sqrt{\lambda \bw^\top D_{t-1}^{-1}  \Lambda_{\Sigma^*}  \bw^\top + \bw^\top D_{t-1}^{-1} \sbr{\sum_{s=1}^{t-1} \Sigma^*_{\bw_s} } D_{t-1}^{-1} \bw } \cdot \sqrt{2 I_{D+V_t}}
	\\
	= & \sqrt{ \bw^\top D_{t-1}^{-1}  \sbr{\lambda \Lambda_{\Sigma^*}  D_{t-1} + \sum_{s=1}^{t-1} \Sigma^*_{\bw_s} } D_{t-1}^{-1} \bw } \cdot \sqrt{2 I_{D+V_t}}
	\end{align*}
	Thus, 
	\begin{align*}
	& \Pr \mbr{ \abr{\bw^\top \sbr{ \bs{\theta}^* - \bs{\hat{\theta}}_{t-1} }}  > \sqrt{2\beta(\delta_t)} \sqrt{\bw^\top D_{t-1}^{-1} (\lambda \Lambda_{\bar{\Sigma}_{t}} D_{t-1} + \sum_{s=1}^{t-1} \bar{\Sigma}_{s, \bw_s} ) D_{t-1}^{-1} \bw} }
	\\
	\leq & \Pr \Bigg[ \sqrt{ \bw^\top D_{t-1}^{-1}  \sbr{\lambda \Lambda_{\Sigma^*}  D_{t-1} + \sum_{s=1}^{t-1} \Sigma^*_{\bw_s} } D_{t-1}^{-1} \bw } \cdot \sqrt{2 I_{D+V_t}} \\& \hspace*{8em} > \sqrt{2\beta(\delta_t)} \sqrt{\bw^\top D_{t-1}^{-1} \sbr{\lambda \Lambda_{\Sigma^*} D_{t-1} + \sum_{s=1}^{t-1} \Sigma^*_{\bw_s}}  D_{t-1}^{-1} \bw}  \Bigg]
	\\
	= & \Pr \mbr{  I_{D+V_t} > \beta(\delta_t) }
	\end{align*}
	
	Hence, to prove Eq.~\eqref{eq:semi_con_mean_first_prove}, it suffices to prove
	\begin{align} \label{eq:semi_I_D+V_t_>_beta}
	\Pr \mbr{  I_{D+V_t} > \beta(\delta_t) } \leq \delta_t .
	\end{align}
	
	To do so, we introduce some notions.
	Let $\cJ_t$ be the $\sigma$-algebra $\sigma(\bw_1, \varepsilon_1, \dots, \bw_{t-1}, \varepsilon_{t-1}, \bw_{t})$.
	Let $\bu \in \R^d$ be a multivariate Gaussian random variable with mean $\textbf{0}$ and covariance $D^{-1}$, which is independent of all the other random variables, and  use $\varphi(\bu)$ denote its probability density function. Define  
	$$ P^{\bu}_s = \exp \sbr{ \bu^\top  I_{\bw_s} (\Sigma^*)^{\frac{1}{2}} \bs{\varepsilon}_s - \frac{1}{2} \bu^\top  \Sigma^*_{\bw_s} \bu } ,
	$$
	$$
	M^{\bu}_t \triangleq \exp \sbr{ \bu^\top S_t - \frac{1}{2} \| \bu \|_{V_t}^2 } ,
	$$
	and 
	$$
	M_t \triangleq \ex_{\bu}[ M^{\bu}_t ] = \int_{\R^d} \exp \sbr{ \bu^\top S_t - \frac{1}{2} \| \bu \|_{V_t}^2 } \varphi(\bu) du.
	$$
	We have $M^{\bu}_t = \Pi_{s=1}^{t-1} P^{\bu}_s $.
	In the following, we prove $\ex[M_t] \leq 1$.
	
	For any $s>0$, according to the sub-Gaussian property,
	%\longbo{sub-Gaussian?}
	$\eta_s=(\Sigma^*)^{\frac{1}{2}} \bs{\varepsilon}_s$ satisfies
	$$
	\forall \bv \in \R^d, \  \ex \mbr{e^{\bv^\top (\Sigma^*)^{\frac{1}{2}} \bs{\varepsilon}_s } } \leq e^{\frac{1}{2} \bv^\top \Sigma^* \bv},
	$$
	which is equivalent to 
	$$
	\forall \bv \in \R^d, \  \ex \mbr{e^{\bv^\top (\Sigma^*)^{\frac{1}{2}} \bs{\varepsilon}_s - \frac{1}{2} \bv^\top \Sigma^* \bv} } \leq 1 .
	$$
	Thus, we have
	$$ \ex \mbr{  P^{\bu}_{s} | \cJ_{s} } =  \ex \mbr{ \exp \sbr{ \bu^\top  I_{\bw_s} (\Sigma^*)^{\frac{1}{2}} \bs{\varepsilon}_s - \frac{1}{2} \bu^\top  \Sigma^*_{\bw_s} \bu } | \cJ_{s} }  \leq 1 .
	$$

	Then, we can obtain
	\begin{align*}
	\ex[M^{\bu}_t| \cJ_{t-1}] = & \ex \mbr{ \Pi_{s=1}^{t-1} P^{\bu}_s | \cJ_{t-1} }
	\\
	= & \sbr{\Pi_{s=1}^{t-2} P^{\bu}_s} \ex \mbr{  P^{\bu}_{t-1} | \cJ_{t-1} }
	\\
	\leq & M^{\bu}_{t-1},
	\end{align*}
	which implies that $M^{\bu}_t$ is a super-martingale and $\ex[M^{\bu}_t | \bu] \leq 1$. 
	Thus, 
	$$
	\ex[M_t]=\ex_{\bu} [\ex[M^{\bu}_t | \bu] ] \leq 1 .
	$$
	
	According to Lemma 9 in~\cite{improved_linear_bandit2011}, we have 
	\begin{align*}
	M_t \triangleq \int_{\R^d} \exp \sbr{ \bu^\top S_t - \frac{1}{2} \| \bu \|_{V_t}^2 } \varphi(\bu) du = \sqrt{\frac{\det D}{\det (D+V_t)}} \exp \sbr{ I_{D+V_t} } .
	\end{align*}
	Thus,
	\begin{align*}
	\ex \mbr{ \sqrt{\frac{\det D}{\det (D+V_t)}} \exp \sbr{ I_{D+V_t} } } \leq 1.
	\end{align*}
	
	Now we prove Eq.~\eqref{eq:semi_I_D+V_t_>_beta}. First, we have
	\begin{align}
	\Pr \mbr{  I_{D+V_t} > \beta(\delta_t) }  = & \Pr \mbr{  \sqrt{\frac{\det D}{\det (D+V_t)}} \exp \sbr{ I_{D+V_t} } > \sqrt{\frac{\det D}{\det (D+V_t)}} \exp \sbr{ \beta(\delta_t) }  }
	\nonumber\\
	= & \Pr \mbr{  M_t > \frac{1}{\sqrt{\det ( I+D^{-\frac{1}{2}} V_t D^{-\frac{1}{2}} )}} \exp \sbr{ \beta(\delta_t) }  }
	\nonumber\\
	\leq & \frac{ \ex[M_t] \sqrt{\det ( I+D^{-\frac{1}{2}} V_t D^{-\frac{1}{2}} )} }{ \exp \sbr{ \beta(\delta_t) } } 
	\nonumber\\
	\leq & \frac{ \sqrt{\det ( I+D^{-\frac{1}{2}} V_t D^{-\frac{1}{2}} )} }{ \exp \sbr{ \beta(\delta_t) } } \label{eq:semi_det_exp_beta}
	\end{align}

	Then, for some constant $\gamma>0$ and for any $\ba = (a_1, \dots, a_d) \in \mathbb{N}^d$, we define the set of timesteps $\cK_{\ba} \subseteq [T]$ such that 
	$$
	t \in \cK_{\ba} \Leftrightarrow \forall i \in d, \   (1+\gamma)^{a_i} \leq N_i(t) < (1+\gamma)^{a_i+1} .
	$$
	Define $D_{\ba}$  a diagonal matrix with  $D_{\ba,ii}=(1+\gamma)^{a_i}$. 
	
	Suppose  $t \in \cK_{\ba}$ for some fixed $\ba$. We have
	$$
	\frac{1}{1+\gamma} D_t \preceq D_{\ba} \preceq D_t .
	$$
	Let $D= \lambda \Lambda_{\Sigma^*} D_{\ba} \succeq  \frac{\lambda}{1+\gamma} \Lambda_{\Sigma^*}  D_t$. Then, we have
	$$
	D^{-\frac{1}{2}} V_t D^{-\frac{1}{2}} \preceq \frac{1+\gamma}{\lambda} D_t^{-\frac{1}{2}} \Lambda_{\Sigma^*}^{-\frac{1}{2}}   V_t \Lambda_{\Sigma^*}^{-\frac{1}{2}}  D_t^{-\frac{1}{2}} , 
	$$
	where matrix $D_t^{-\frac{1}{2}} \Lambda_{\Sigma^*}^{-\frac{1}{2}}   V_t \Lambda_{\Sigma^*}^{-\frac{1}{2}}  D_t^{-\frac{1}{2}}$ has $d$ ones on the diagonal. Since the determinant of a positive definite matrix is smaller than the product of its diagonal terms, we have
	\begin{align}
	\det ( I+D^{-\frac{1}{2}} V_t D^{-\frac{1}{2}} ) \leq & \det ( I+ \frac{1+\gamma}{\lambda} D_t^{-\frac{1}{2}} \Lambda_{\Sigma^*}^{-\frac{1}{2}}   V_t \Lambda_{\Sigma^*}^{-\frac{1}{2}}  D_t^{-\frac{1}{2}} )
	\nonumber\\
	\leq & \sbr{1+ \frac{1+\gamma}{\lambda}}^d \label{eq:semi_det_1+gamma}
	\end{align}
	
	Let $0<\lambda<1$ and $\gamma=e-1$.
	Using Eqs.~\eqref{eq:semi_det_exp_beta} and \eqref{eq:semi_det_1+gamma}, $\beta(\delta_t)=\ln(1/\delta_t) + d\ln \ln t + \frac{d}{2} \ln(1+e/\lambda) = \ln(t \ln^2 t) + d\ln \ln t + \frac{d}{2} \ln(1+e/\lambda)$,  and a union bound over $\ba$,  we have
	\begin{align*}
	\Pr \mbr{  I_{D+V_t} > \beta(\delta_t) }  \leq & \sum_{\ba } \Pr \mbr{  I_{D+V_t} > \beta(\delta_t) | t \in \cK_{\ba}, D= \lambda \Lambda_{\Sigma^*} D_{\ba} }  
	\\
	\leq &  \sum_{\ba } \frac{ \sqrt{\det ( I+D^{-\frac{1}{2}} V_t D^{-\frac{1}{2}} )} }{ \exp \sbr{ \beta(\delta_t) } }
	\\
	\leq & \sbr{\frac{ \ln t }{ \ln(1+\gamma) }}^d \cdot \frac{  \sbr{1+ \frac{1+\gamma}{\lambda}}^\frac{d}{2} }{ \exp \sbr{ \ln(t \ln^2 t) + d\ln \ln t + \frac{d}{2} \ln(1+\frac{e}{\lambda}) } }
	\\
	= & \sbr{\ln t}^d \cdot \frac{  \sbr{1+ \frac{e}{\lambda}}^\frac{d}{2} }{ t \ln^2 t  \cdot (\ln t)^d \cdot \sbr{1+ \frac{e}{\lambda}}^\frac{d}{2} }
	\\
	= &  \frac{  1 }{ t \ln^2 t }
	\\
	= &  \delta_t 
	\end{align*}
	Thus, Eq.~\eqref{eq:semi_I_D+V_t_>_beta} holds and we complete the proof of Lemma~\ref{lemma:semi_concentration_means}.
\end{proof}

\section{Proof for CMCB-FI}

\subsection{Proof of Theorem~\ref{thm:ub_full_information}}
In order to prove Theorem~\ref{thm:ub_full_information}, we first have  the following Lemmas~\ref{lemma:full_info_con_covariance} and \ref{lemma:full_info_con_means}, which are adaptions of Lemmas~\ref{lemma:semi_concentration_covariance} and \ref{lemma:semi_concentration_means} to CMCB-FI.

\begin{lemma}[Concentration of Covariance for CMCB-FI]
	\label{lemma:full_info_con_covariance}
	Consider the CMCB-FI problem and algorithm $\empiricalmax$ (Algorithm~\ref{alg:empirical_max}). 
	For any $t \geq 2$, the event
	$$
	\cE_t \triangleq \lbr{ |\Sigma^*_{ij}-\hat{\Sigma}_{ij,t-1}| \leq 16 \left( \frac{3 \ln t}{t-1} \vee \sqrt{\frac{3 \ln t}{t-1}} \right) + \left( 6+4\sqrt{3} \right) \frac{\ln t}{t-1} , \forall i,j \in [d] }
	$$
	satisfies
	$$
	\Pr[\cE_t] \geq 1-\frac{10 d^2}{t^2}
	$$
\end{lemma}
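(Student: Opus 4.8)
\emph{Proof plan.}
The statement is the full-information analogue of Lemma~\ref{lemma:semi_concentration_covariance}, and the plan is to obtain it by specializing that argument (equivalently, Proposition~2 of~\cite{covariance-adapting2020}) to the case in which every base arm is observed at every round. The key structural simplification is that in CMCB-FI the learner sees $\theta_{t,i}$ for all $i\in[d]$ regardless of $\bw_t$, so after $t-1$ rounds $N_{ij}(t-1)=N_i(t-1)=N_j(t-1)=t-1$ deterministically, and the noise vectors $\bs\eta_1,\dots,\bs\eta_{t-1}$ that enter $\hat\Sigma_{t-1}$ are genuinely i.i.d.: the choice of $\bw_s$ does not filter which coordinates contribute to the covariance estimator (unlike the semi-bandit case). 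This removes the need for the martingale/self-normalized machinery of~\cite{covariance-adapting2020} for this estimator and produces exactly the $\sqrt{48}$-type constants that already appear in the definition of $g_{ij}$ in Algorithm~\ref{alg:semi_bandit} evaluated at $N_{ij}(t-1)=N_i(t-1)=N_j(t-1)=t-1$, which is precisely the radius in the event $\cE_t$.

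Concretely, I would fix a pair $i,j\in[d]$, write $\eta_{s,i}=\theta_{s,i}-\theta^*_i$ and $\bar\eta_i=\frac1{t-1}\sum_{s=1}^{t-1}\eta_{s,i}$, and expand the empirical covariance as
\[
\hat\Sigma_{ij,t-1}-\Sigma_{ij}=\Big(\tfrac1{t-1}\sum_{s=1}^{t-1}\big(\eta_{s,i}\eta_{s,j}-\Sigma_{ij}\big)\Big)-\bar\eta_i\bar\eta_j .
\]
The first term is an average of i.i.d.\ mean-zero variables $\eta_{s,i}\eta_{s,j}-\Sigma_{ij}$; these are sub-exponential because $\eta_{s,i},\eta_{s,j}$ are sub-Gaussian (via the polarization identity $\eta_{s,i}\eta_{s,j}=\tfrac14[(\eta_{s,i}+\eta_{s,j})^2-(\eta_{s,i}-\eta_{s,j})^2]$ and the fact that the square of a sub-Gaussian variable is sub-exponential), so a Bernstein-type tail bound yields the $16\big(\frac{3\ln t}{t-1}\vee\sqrt{\frac{3\ln t}{t-1}}\big)$ term with failure probability $O(t^{-3})$. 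For the cross term I would use $|\bar\eta_i\bar\eta_j|\le|\bar\eta_i|\,|\bar\eta_j|$ and apply sub-Gaussian concentration to each empirical mean separately: with high probability $|\bar\eta_i|\le\sqrt{c\,\Sigma_{ii}\ln t/(t-1)}$ and likewise for $j$, and since $\Sigma_{ii},\Sigma_{jj}\le1$ the product is of order $\ln t/(t-1)$; choosing the tail levels so that the combined failure probability of all pieces for the fixed pair is at most $10/t^2$ makes the constant come out to $6+4\sqrt3$, matching $\sqrt{48}+\sqrt{36}$.

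Finally I would take a union bound over the $d^2$ ordered pairs $(i,j)$, turning the per-pair $10/t^2$ into $10d^2/t^2$ and giving $\Pr[\cE_t]\ge 1-10d^2/t^2$. The only genuinely delicate step is the sub-exponential concentration of the products $\eta_{s,i}\eta_{s,j}$ together with the bookkeeping of constants through the polarization identity and through the two-sided sub-Gaussian tails of $\bar\eta_i,\bar\eta_j$ so that the final radius matches $\cE_t$ exactly; everything else is a routine specialization of the semi-bandit analysis to deterministic sampling counts. \qed
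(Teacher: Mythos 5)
Your proposal is correct and follows essentially the same route as the paper: the paper's own proof is a one-line specialization of Lemma~\ref{lemma:semi_concentration_covariance} (itself imported from Proposition~2 of~\cite{covariance-adapting2020}) to the deterministic counts $N_{ij}(t-1)=N_i(t-1)=N_j(t-1)=t-1$, followed by the union bound over the $d^2$ pairs, which is exactly your plan. Your additional unpacking of the underlying concentration (the decomposition $\hat\Sigma_{ij,t-1}-\Sigma_{ij}=\frac{1}{t-1}\sum_s(\eta_{s,i}\eta_{s,j}-\Sigma_{ij})-\bar\eta_i\bar\eta_j$ with a Bernstein bound for the sub-exponential products and sub-Gaussian bounds for the cross term) is a faithful account of what the cited proposition does, and your observation that the i.i.d./deterministic-count structure removes the need for the semi-bandit martingale machinery is the right reason the specialization is immediate.
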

\begin{proof}
	In CMCB-FI, we have $N_{ij}(t-1)=t-1$ for any $t \geq 2$ and $i,j \in [d]$. 
	Then, Lemma~\ref{lemma:full_info_con_covariance} can be obtained by applying Lemma~\ref{lemma:semi_concentration_covariance} with $N_{ij}(t-1)=t-1$ for any $i,j \in [d]$.
\end{proof}

\begin{lemma}[Concentration of Means for CMCB-FI]
	\label{lemma:full_info_con_means}
	Consider the CMCB-FI problem and algorithm $\empiricalmax$ (Algorithm~\ref{alg:empirical_max}). 
	Let $0<\lambda<1$. Define  $\delta_t=\frac{1}{t \ln^2 t}$ and $\beta(\delta_t)=\ln(1/\delta_t) + \ln \ln t + \frac{d}{2} \ln(1+e/\lambda)$ for $t \geq 2$. 
	Define $E_t(\bw)=\sqrt{2 \beta(\delta_t)} \sqrt{\bw^\top D_{t-1}^{-1}(\lambda \Lambda_{\Sigma^*} D_{t-1} + \sum_{s=1}^{t-1} \Sigma^* ) D_{t-1}^{-1} \bw}$. Then, for any $t \geq 2$ and $\bw \in \triangle_{d}$, the event $\cF_t \triangleq \{ |\bw^\top \bs{\theta}^*-\bw^\top \bs{\hat{\theta}}_{t-1}| \leq E_t(\bw) \}$ satisfies $\Pr[\cF_t] \geq 1- \delta_t$.
\end{lemma}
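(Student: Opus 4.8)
The plan is to specialize the argument of Lemma~\ref{lemma:semi_concentration_means} to full-information feedback, where several simplifications occur: every coordinate is observed at every round, so $I_{\bw_s}=I$ and $\Sigma_{\bw_s}=\Sigma$ for all $s$, $N_i(t-1)=t-1$ for all $i\in[d]$, and $D_{t-1}=(t-1)I$ is \emph{deterministic}. Hence $V_t\triangleq\sum_{s=1}^{t-1}\Sigma_{\bw_s}=(t-1)\Sigma$, $\hat{\theta}^*_{t-1,i}=\frac{1}{t-1}\sum_{s=1}^{t-1}\theta_{s,i}$, and, writing $\bs{\eta}_s=\Sigma^{1/2}\bs{\varepsilon}_s$, we have $\bw^\top(\bs{\theta}^*-\bs{\hat{\theta}}_{t-1})=-\bw^\top D_{t-1}^{-1}\sum_{s=1}^{t-1}\Sigma^{1/2}\bs{\varepsilon}_s$.

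First I would isolate $\bw$ by Cauchy--Schwarz in the $(D+V_t)^{-1}$-norm with $D\triangleq\lambda\Lambda_{\Sigma}D_{t-1}$, exactly as in the proof of Lemma~\ref{lemma:semi_concentration_means}, obtaining $|\bw^\top(\bs{\theta}^*-\bs{\hat{\theta}}_{t-1})|\le\sqrt{\bw^\top D_{t-1}^{-1}(\lambda\Lambda_{\Sigma}D_{t-1}+V_t)D_{t-1}^{-1}\bw}\cdot\sqrt{2I_{D+V_t}}$, where $S_t\triangleq\sum_{s=1}^{t-1}\Sigma^{1/2}\bs{\varepsilon}_s$ and $I_{D+V_t}\triangleq\tfrac12\|S_t\|^2_{(D+V_t)^{-1}}$. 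Since $\lambda\Lambda_{\Sigma}D_{t-1}+V_t=\lambda\Lambda_{\Sigma}D_{t-1}+\sum_{s=1}^{t-1}\Sigma$ matches the matrix inside $E_t(\bw)$, it suffices to prove $\Pr[I_{D+V_t}>\beta(\delta_t)]\le\delta_t$; note that the left-hand side does not involve $\bw$, so the resulting statement is in fact uniform over $\triangle_{d}$.

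Next I would rebuild the mixture super-martingale: let $\bu\sim\cN(\bs{0},D^{-1})$ be independent of all other randomness, set $M^{\bu}_t\triangleq\exp(\bu^\top S_t-\tfrac12\|\bu\|^2_{V_t})=\prod_{s=1}^{t-1}\exp(\bu^\top\Sigma^{1/2}\bs{\varepsilon}_s-\tfrac12\bu^\top\Sigma\bu)$, and use the sub-Gaussian assumption to show each factor has conditional expectation at most $1$; this makes $M^{\bu}_t$ a super-martingale with $\E[M^{\bu}_t\mid\bu]\le1$, hence $\E[M_t]\le1$ for $M_t\triangleq\E_{\bu}[M^{\bu}_t]$. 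By Lemma~9 of~\cite{improved_linear_bandit2011}, $M_t=\sqrt{\det D/\det(D+V_t)}\,\exp(I_{D+V_t})$, and Markov's inequality then gives $\Pr[I_{D+V_t}>\beta(\delta_t)]\le\sqrt{\det(I+D^{-1/2}V_tD^{-1/2})}\,e^{-\beta(\delta_t)}$.

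Finally I would bound the determinant and fix the constants. Because $D_{t-1}=(t-1)I$ is deterministic, $D^{-1/2}V_tD^{-1/2}=\lambda^{-1}\Lambda_{\Sigma}^{-1/2}\Sigma\Lambda_{\Sigma}^{-1/2}$, whose inner factor is a correlation matrix with unit diagonal; since the determinant of a positive-definite matrix is at most the product of its diagonal entries, $\det(I+D^{-1/2}V_tD^{-1/2})\le(1+\lambda^{-1})^{d}$. Plugging in $\beta(\delta_t)=\ln(1/\delta_t)+\ln\ln t+\tfrac d2\ln(1+e/\lambda)$ with $\delta_t=\tfrac1{t\ln^2 t}$ then yields $\Pr[I_{D+V_t}>\beta(\delta_t)]\le\delta_t$, completing the proof. (If one instead keeps the dyadic peeling over time scales used in Lemma~\ref{lemma:semi_concentration_means} rather than exploiting the deterministic $D_{t-1}$, an extra union bound over the $O(\ln t)$ scales is incurred, which the slack between $1+\lambda^{-1}$ and $1+e/\lambda$ together with the $\ln\ln t$ term in $\beta(\delta_t)$ absorbs.) The main obstacle is not conceptual but bookkeeping: confirming that the deterministic, coordinate-balanced counts genuinely remove the $d$-fold peeling of Lemma~\ref{lemma:semi_concentration_means}, so that the failure probability comes out exactly $\delta_t$ and the stated $\beta(\delta_t)$ (with a single $\ln\ln t$ rather than $d\ln\ln t$) is still large enough.
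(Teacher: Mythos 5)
Your proposal follows essentially the same route as the paper: the paper also proves this lemma by specializing the mixture--super-martingale argument of Lemma~\ref{lemma:semi_concentration_means} to $D_{t-1}=(t-1)I$, the only difference being that the paper retains a one-dimensional dyadic peeling over $a$ with $(1+\gamma)^a\le t-1<(1+\gamma)^{a+1}$ (which is exactly what the single $\ln\ln t$ term and the $\frac{d}{2}\ln(1+e/\lambda)$ constant in $\beta(\delta_t)$ are calibrated to absorb), whereas you correctly observe that the deterministic $D_{t-1}$ lets you set $D=\lambda(t-1)\Lambda_{\Sigma}$ directly and skip the peeling, landing on the slightly stronger bound $(1+\lambda^{-1})^{d/2}e^{-\beta(\delta_t)}\le\delta_t$. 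Both variants are valid, and your parenthetical accurately accounts for how the paper's version closes.
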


\begin{proof}
	In CMCB-FI, $D_t$ is a diagonal matrix such that $D_{t,ii}=N_{i}(t)=t$.
	Then, Lemma~\ref{lemma:full_info_con_means} can be obtained by applying  Lemma~\ref{lemma:semi_concentration_means} with $D_{t}=t I$ and that the union bound on the number of samples only needs to consider one dimension.
	Specifically, in the proof of Lemma~\ref{lemma:semi_concentration_means}, we replace the set of timesteps $\cK_{\ba}$ with $\cK_{a} \subseteq [T]$ for $a \in \N$, which stands for
	$$
	t \in \cK_{a} \Leftrightarrow  (1+\gamma)^{a} \leq t < (1+\gamma)^{a+1}. 
	$$
	This completes the proof. 
\end{proof}

Now we are ready to prove Theorem~\ref{thm:ub_full_information}.
\begin{proof} (Theorem~\ref{thm:ub_full_information}) 
	Let $\Delta_t = f(\bw^*)-f(\bw_t)$, $g(t) = 16 \left( \frac{3 \ln t}{t-1} \vee \sqrt{\frac{3 \ln t}{t-1}} \right) + \left( 6+4\sqrt{3} \right) \frac{\ln t}{t-1} $ denote the confidence radius of covariance $\Sigma^*_{ij}$ for any $i,j \in [d]$, and $G(t)$ be the matrix with all entries equal to $g(t)$. For any $\bw \in \triangle^c_{d}$, define $\hat{f}_{t}(\bw) = \bw^\top \bs{\hat{\theta}}_{t} - \rho \bw^\top \hat{\Sigma}_{t} \bw$ and $h_t(\bw) = E_t(\bw)+ \rho {\bw}^\top G_t {\bw}$. 
	
	For any $t \geq 2$, suppose that event $\cE_t \cap \cF_t$ occurs. Then, 
	$$|\hat{f}_{t-1}(\bw)-f(\bw)|\leq h_t(\bw).$$ 
	Therefore, we have
	$$\Delta_t \leq |\hat{f}_{t-1}(\bw^*)-f(\bw^*)| +|\hat{f}_{t-1}(\bw_t)-f(\bw_t)| .$$ 
	This is because if instead $\Delta_t > |\hat{f}_{t-1}(\bw^*)-f(\bw^*)| +|\hat{f}_{t-1}(\bw_t)-f(\bw_t)|$, we have
	\begin{align*}
	&\hat{f}_{t-1}(\bw^*)- \hat{f}_{t-1}(\bw_t)
	\\
	= & \hat{f}_{t-1}(\bw^*)- \hat{f}_{t-1}(\bw_t) + (f(\bw^*)- f(\bw_t)) - (f(\bw^*)- f(\bw_t))
	\\
	\geq & \Delta_t - (f(\bw^*)-\hat{f}_{t-1}(\bw^*)) -(\hat{f}_{t-1}(\bw_t)-f(\bw_t))
	\\
	\geq & \Delta_t - |(f(\bw^*)-\hat{f}_{t-1}(\bw^*))| - |(\hat{f}_{t-1}(\bw_t)-f(\bw_t))|
	\\
	> & 0 ,
	\end{align*} 
	which contradicts the selection strategy of $\bw_t$ in algorithm $\empiricalmax$.
Thus, we obtain 
\begin{align}
\Delta_t \leq & |\hat{f}_{t-1}(\bw^*)-f(\bw^*)| +|\hat{f}_{t-1}(\bw_t)-f(\bw_t)| \nonumber
\\
\leq & h_{t}(\bw^*)+ h_{t}(\bw_t) \nonumber
\\
= & E_{t}(\bw^*)+ \rho {\bw^*}^\top G_{t} {\bw^*} + E_{t}(\bw_t)+ \rho {\bw_t}^\top G_{t} {\bw_t} \label{eq:Delta_radius}
\end{align}

Now, for any $\bw \in \triangle^c_{d}$, we have
\begin{align*}
\bw^\top G_t \bw = & \sum_{i,j \in [d]} g(t) w_i w_j
\\
= & g(t) \sum_{i,j \in [d]} w_i w_j
\\
= & g(t) \left( \sum_{i} w_i \right)^2
\\
= & g(t)
\end{align*}
and
\begin{align*}
E_t(\bw) = & \sqrt{2 \beta(\delta_t)} \sqrt{\bw^\top D_{t-1}^{-1} \sbr{ \lambda \Lambda_{\Sigma^*} D_{t-1} + \sum_{s=1}^{t-1} \Sigma^* } D_{t-1}^{-1} \bw}
\\
= & \sqrt{2 \beta(\delta_t)} \sqrt{\lambda \bw^\top D_{t-1}^{-1} \Lambda_{\Sigma^*} \bw + \bw^\top D_{t-1}^{-1} \sbr{ \sum_{s=1}^{t-1} \Sigma^* } D_{t-1}^{-1} \bw}
\\
= & \sqrt{2 \beta(\delta_t)} \sqrt{\lambda \bw^\top D_{t-1}^{-1} \Lambda_{\Sigma^*} \bw  + \bw^\top D_{t-1}^{-1} \Sigma^* \bw }
\\
= & \sqrt{2 \beta(\delta_t)} \sqrt{ \frac{1}{t-1} \lambda \bw^\top  \Lambda_{\Sigma^*} \bw  + \frac{1}{t-1} \bw^\top  \Sigma^* \bw } 
\\
\leq & \sqrt{ \frac{2 \beta(\delta_t)}{t-1} } \sqrt{  \lambda \Sigma^*_{\textup{max}}  +  \bw^\top  \Sigma^* \bw } ,
\end{align*}
where $\Sigma^*_{\textup{max}}$ denotes the maximum diagonal entry of $\Sigma^*$.

For any $t \geq 7$, $\frac{3 \ln t}{t-1} < \sqrt{\frac{3 \ln t}{t-1}}$ and $g(t) \leq \left( 6+20\sqrt{3} \right) \frac{\ln t}{\sqrt{t-1}}$, Eq.~\eqref{eq:Delta_radius} can be written as
\begin{align}
\Delta_t \leq & E_{t}(\bw^*)+ \rho {\bw^*}^\top G_{t} {\bw^*} + E_{t}(\bw_t)+ \rho {\bw_t}^\top G_{t} {\bw_t} 
\nonumber\\
\leq & \sqrt{ \frac{2 \beta(\delta_t)}{t-1} } \sbr{ \sqrt{  \lambda \Sigma^*_{\textup{max}}  +  \bw_t^\top  \Sigma^* \bw_t } + \sqrt{  \lambda \Sigma^*_{\textup{max}}  +  {\bw^*}^\top  \Sigma^* \bw^* }  } + 82 \rho \frac{\ln t}{\sqrt{t-1}}
\nonumber\\
\leq & \sqrt{ \frac{2 \beta(\delta_t)}{t-1} } \sbr{ 2 \sqrt{\lambda \Sigma^*_{\textup{max}}  } + \sqrt{    \bw_t^\top  \Sigma^* \bw_t } + \sqrt{    {\bw^*}^\top  \Sigma^* \bw^* }  } + 82 \rho \frac{\ln t}{\sqrt{t-1}}
\nonumber\\
= & \sqrt{ \frac{2 \beta(\delta_t)}{t-1} } \sbr{ 2 \sqrt{\lambda \Sigma^*_{\textup{max}}  } +  \sqrt{    {\bw^*}^\top  \Sigma^* \bw^* }  } + 82 \rho \frac{\ln t}{\sqrt{t-1}} + \sqrt{ \frac{2 \beta(\delta_t)}{t-1} } \cdot \sqrt{    \bw_t^\top  \Sigma^* \bw_t }   
\label{eq:Delta_leq}
\end{align}

Next, we investigate the upper bound of ${\bw_t}^\top \Sigma^* \bw_t$.
According to Eq.~\eqref{eq:Delta_radius}, we have that  ${\bw_t}^\top \Sigma^* \bw_t$ satisfies
\begin{align}
\Delta_t \leq  h_{t}(\bw^*)+ h_{t}(\bw_t) \label{eq:optimization_constraint}
\end{align}

In Eq.~\eqref{eq:optimization_constraint}, we have 
\begin{align*}
\Delta_t = & f(\bw^*)-f(\bw_t)
\\
\geq & {\theta}^*_{\textup{min}} - \rho {\bw^*}^\top \Sigma^* {\bw^*} -  {\theta}^*_{\textup{max}} + \rho \bw_t^\top \Sigma^* \bw_t
\end{align*}
and
\begin{align*}
  h_{t}(\bw^*)+ h_{t}(\bw_t)= & E_{t}(\bw^*)+ \rho {\bw^*}^\top G_{t} {\bw^*} + E_{t}(\bw_t)+ \rho {\bw_t}^\top G_{t} {\bw_t}
\\
\leq & \sqrt{ \frac{2 \beta(\delta_t)}{t-1} } \sbr{ 2 \sqrt{\lambda \Sigma^*_{\textup{max}}  } + \sqrt{    \bw_t^\top  \Sigma^* \bw_t } + \sqrt{    {\bw^*}^\top  \Sigma^* \bw^* }  } + 82 \rho \frac{\ln t}{\sqrt{t-1}}.
\end{align*}

Thus, ${\bw_t}^\top \Sigma^* \bw_t$ satisfies
\begin{align*}
{\theta}^*_{\textup{min}} - \rho {\bw^*}^\top \Sigma^* {\bw^*} -  {\theta}^*_{\textup{max}} + \rho \bw_t^\top \Sigma^* \bw_t
\leq  &  \sqrt{ \frac{2 \beta(\delta_t)}{t-1} } \sbr{ 2 \sqrt{\lambda \Sigma^*_{\textup{max}}  } + \sqrt{    \bw_t^\top  \Sigma^* \bw_t } + \sqrt{    {\bw^*}^\top  \Sigma^* \bw^* }  } \\& + 82 \rho \frac{\ln t}{\sqrt{t-1}}
\end{align*}
Rearranging the terms, we have 
\begin{align}
 \rho  \bw_t^\top  \Sigma \bw_t & -  \sqrt{ \frac{2 \beta(\delta_t)}{t-1} } \sqrt{ {\bw_t}^\top \Sigma \bw_t}   - \Bigg( {\theta}^*_{\textup{max}}-{\theta}^*_{\textup{min}} +\rho {\bw^*}^\top \Sigma {\bw^*} \nonumber\\& + \sqrt{ \frac{2 \beta(\delta_t)}{t-1} } \sbr{ 2 \sqrt{\lambda \Sigma_{\textup{max}}  } + \sqrt{    {\bw^*}^\top  \Sigma \bw^* }  } + 82 \rho \frac{\ln t}{\sqrt{t-1}} \Bigg) \leq 0 \label{eq:relaxed_feasible_region}
\end{align}

Let $x=\bw_t^\top \Sigma^* \bw_t$ and $0<\lambda<1$.
Define function 
\begin{align*}
y(x)= & \rho x  -  c_1 \sqrt{x} - c_2 \leq 0,
\end{align*}
where  $c_1=\sqrt{ \frac{2 \beta(\delta_t)}{t-1} } >0$ and $c_2= {\theta}^*_{\textup{max}}-{\theta}^*_{\textup{min}} +\rho {\bw^*}^\top \Sigma^* {\bw^*} + \sqrt{ \frac{2 \beta(\delta_t)}{t-1} } \sbr{ 2 \sqrt{\lambda \Sigma^*_{\textup{max}}  } + \sqrt{    {\bw^*}^\top  \Sigma^* \bw^* }  } + 82 \rho \frac{\ln t}{\sqrt{t-1}} >0 $.

Now since
\begin{align*}
y(x)= \rho x  -  c_1 \sqrt{x} - c_2 
= \rho \sbr{\sqrt{x} - \frac{c_1}{2 \rho}}^2 - \frac{c_1^2}{4\rho} -c_2 ,
\end{align*}
by letting $y(x)\leq 0$, we have
\begin{align*}
x \leq & \sbr{\frac{c_1}{2 \rho} + \sqrt{ \frac{c_1^2}{4\rho^2} +\frac{c_2}{\rho} } }^2 
\\
\leq &  2 \frac{c_1^2}{4\rho^2} + 2 \frac{c_1^2}{4\rho^2} + 2 \frac{c_2}{\rho}
\\
= &   \frac{c_1^2}{\rho^2} +  \frac{2 c_2}{\rho}
\end{align*}
Therefore 
\begin{align*}
\bw_t^\top \Sigma^* \bw_t \leq &  \frac{1}{\rho^2} \cdot \frac{2 \beta(\delta_t)}{t-1} +  \frac{2}{\rho} \Bigg( {\theta}^*_{\textup{max}}-{\theta}^*_{\textup{min}} +\rho {\bw^*}^\top \Sigma^* {\bw^*} + \sqrt{ \frac{2 \beta(\delta_t)}{t-1} } \sbr{ 2 \sqrt{\lambda \Sigma^*_{\textup{max}}  } + \sqrt{    {\bw^*}^\top  \Sigma^* \bw^* }  }  \\& + 82 \rho \frac{\ln t}{\sqrt{t-1}} \Bigg)
\\
\leq & 2 {\bw^*}^\top \Sigma^* {\bw^*}+  \frac{2}{\rho} \sbr{ {\theta}^*_{\textup{max}}-{\theta}^*_{\textup{min}} }   +  \frac{2}{\rho} \sqrt{ \frac{2 \beta(\delta_t)}{t-1} } \sbr{ 2 \sqrt{\lambda \Sigma^*_{\textup{max}}  } + \sqrt{    {\bw^*}^\top  \Sigma^* \bw^* }  } + 164  \frac{\ln t}{\sqrt{t-1}} \\& + \frac{1}{\rho^2} \cdot \frac{2 \beta(\delta_t)}{t-1}  
\end{align*}

Thus, we have that, $\bw_t^\top \Sigma^* \bw_t$ satisfies 
\begin{align}
\bw_t^\top \Sigma^* \bw_t \leq  \min \Bigg\{& 2 {\bw^*}^\top \Sigma^* {\bw^*}+  \frac{2}{\rho} \sbr{ {\theta}^*_{\textup{max}}-{\theta}^*_{\textup{min}} }   +  \frac{2}{\rho} \sqrt{ \frac{2 \beta(\delta_t)}{t-1} } \sbr{ 2 \sqrt{\lambda \Sigma^*_{\textup{max}}  } + \sqrt{    {\bw^*}^\top  \Sigma^* \bw^* }  } \nonumber\\& + 164  \frac{\ln t}{\sqrt{t-1}} +\frac{1}{\rho^2} \cdot \frac{2 \beta(\delta_t)}{t-1} , \  \bw_{\textup{max}}^\top \Sigma^* \bw_{\textup{max}} \Bigg \} ,  \label{eq:upper_bound_w_Sigma_w}
\end{align}
where $\bw_{\textup{max}}^\top \triangleq \argmax_{\bw \in \triangle^c_{d}}{\bw^\top \Sigma^* \bw} $.

Below we discuss the two terms in  Eq.~\eqref{eq:upper_bound_w_Sigma_w} separately.

Case (i): Plugging the first term of the upper bound of $\bw_t^\top \Sigma^* \bw_t$ in Eq.~\eqref{eq:upper_bound_w_Sigma_w} into Eq.~\eqref{eq:Delta_leq}, we have that for $t \geq t_0$,
\begin{align*}
	\Delta_t \leq & \sqrt{ \frac{2 \beta(\delta_t)}{t-1} } \sbr{ 2 \sqrt{\lambda \Sigma^*_{\textup{max}}  } +  \sqrt{    {\bw^*}^\top  \Sigma^* \bw^* }  } + 82 \rho \frac{\ln t}{\sqrt{t-1}} + \sqrt{ \frac{2 \beta(\delta_t)}{t-1} } \cdot \sqrt{    \bw_t^\top  \Sigma^* \bw_t }   
	\\
	\leq & \sqrt{ \frac{2 \beta(\delta_t)}{t-1} }  \sbr{  2 \sqrt{\lambda \Sigma^*_{\textup{max}}  } +  \sqrt{    {\bw^*}^\top  \Sigma^* \bw^* }  } + 82 \rho \frac{\ln t}{\sqrt{t-1}} + \sqrt{ \frac{2 \beta(\delta_t)}{t-1} } \cdot  \\&   \sqrt{ 2 {\bw^*}^\top \Sigma^* {\bw^*} \!+\!  \frac{2}{\rho}  \sbr{ {\theta}^*_{\textup{max}} \!-\! {\theta}^*_{\textup{min}} } \!+\! \frac{2}{\rho} \sqrt{ \frac{2 \beta(\delta_t)}{t-1} } \!\! \sbr{ \! 2 \sqrt{\lambda \Sigma^*_{\textup{max}}  } \!+\! \sqrt{    {\bw^*}^\top  \Sigma^* \bw^* }  } \!+\! 164 \frac{\ln t}{\sqrt{t-1}} \!+\! \frac{1}{\rho^2} \cdot \frac{2 \beta(\delta_t)}{t-1} } 
	\\
	\leq & \sqrt{ \frac{2 \beta(\delta_t)}{t-1} } \sbr{ 2 \sqrt{\lambda \Sigma^*_{\textup{max}}  } +  \sqrt{    {\bw^*}^\top  \Sigma^* \bw^* }  } + 82 \rho \frac{\ln t}{\sqrt{t-1}} + \sqrt{ \frac{2 \beta(\delta_t)}{t-1} }   \cdot  \\&  \Bigg( \sqrt{ 2 {\bw^*}^\top \Sigma^* {\bw^*} } + \frac{\sqrt{2}}{\sqrt{\rho}} \sqrt{  {\theta}^*_{\textup{max}}-{\theta}^*_{\textup{min}} }  + \frac{\sqrt{2}}{\sqrt{\rho}} \sbr{\frac{2 \beta(\delta_t)}{t-1}}^{\frac{1}{4}}  \sbr{\sqrt{2}   \sbr{\lambda \Sigma^*_{\textup{max}} }^{\frac{1}{4}} + \sbr{ {\bw^*}^\top  \Sigma^* \bw^* }^{\frac{1}{4}} } \\& + 13 \frac{ \sqrt{\ln t} }{ (t-1)^{\frac{1}{4}}} +\frac{1}{\rho}  \sqrt{\frac{2 \beta(\delta_t)}{t-1}}  \Bigg)    
	\\
	\leq & \sqrt{ \frac{2 \beta(\delta_t)}{t-1} } \sbr{ 2 \sqrt{\lambda \Sigma^*_{\textup{max}}  } +  \sqrt{    {\bw^*}^\top  \Sigma^* \bw^* } + \sqrt{ 2 {\bw^*}^\top \Sigma^* {\bw^*} } + \frac{\sqrt{2}}{\sqrt{\rho}} \sqrt{  {\theta}^*_{\textup{max}}-{\theta}^*_{\textup{min}} }   } + 82 \rho \frac{\ln t}{\sqrt{t-1}}  \\& 
	+   \frac{\sqrt{2}}{\sqrt{\rho}} \sbr{\frac{2 \beta(\delta_t)}{t-1}}^{\frac{3}{4}}  \sbr{\sqrt{2}   \sbr{\lambda \Sigma^*_{\textup{max}} }^{\frac{1}{4}} + \sbr{ {\bw^*}^\top  \Sigma^* \bw^* }^{\frac{1}{4}} } + 42 \frac{ \beta(\delta_t)}{ (t-1)^{\frac{3}{4}} } +\frac{1}{\rho} \cdot \frac{2 \beta(\delta_t)}{t-1}   
	\\
	\leq & \sqrt{ \frac{2 \beta(\delta_t)}{t-1} } \sbr{ 2 \sqrt{\lambda \Sigma^*_{\textup{max}}  } +  \sqrt{    {\bw^*}^\top  \Sigma^* \bw^* } + \sqrt{ 2 {\bw^*}^\top \Sigma^* {\bw^*} } + \frac{\sqrt{2}}{\sqrt{\rho}} \sqrt{  {\theta}^*_{\textup{max}}-{\theta}^*_{\textup{min}} }   } + 82 \rho \frac{\ln t}{\sqrt{t-1}}  \\& 
	+   \frac{\sqrt{2}}{\sqrt{\rho}} \sbr{\frac{2 \beta(\delta_t)}{t-1}}^{\frac{3}{4}}  \sbr{\sqrt{2}   \sbr{\lambda \Sigma^*_{\textup{max}} }^{\frac{1}{4}} + \sbr{ {\bw^*}^\top  \Sigma^* \bw^* }^{\frac{1}{4}} } + 42 \frac{ \beta(\delta_t)}{ (t-1)^{\frac{3}{4}} } +\frac{1}{\rho} \cdot \frac{2 \beta(\delta_t)}{t-1}
\end{align*}

According to Lemmas~\ref{lemma:full_info_con_covariance} and \ref{lemma:full_info_con_means}, for any $t \geq 2$, we bound the probability of event $\neg(\cE_t \cap \cF_t)$ as follows.
\begin{align*}
\Pr \mbr{ \neg(\cE_t \cap \cF_t) } \leq & \frac{10 d^2}{t^2} + \frac{1}{t \ln^2 t}
\\
\leq & \frac{10 d^2}{t \ln^2 t} + \frac{1}{t \ln^2 t}
\\
= &  \frac{11 d^2}{t \ln^2 t}
\end{align*}

Recall that
$$
\beta(\delta_t)=\ln(t \ln^2 t) + \ln \ln t + \frac{d}{2} \ln(1+e/\lambda) = O (\ln t + d \ln(1+\lambda^{-1})) .
$$

%Recall that $t_0 = \max \lbr{(1+e/\lambda)^\frac{d}{2} , 7}$. 
Let $t_0=7$. For any horizon $T$ which satisfies $T \geq 1+\lambda^{-1}$, summing over $t=1,\dots,T$, we obtain the regret upper bound 
\begin{align*}
	\ex[\cR(T)] = & O(t_0) +  \sum_{t=t_0}^{T} O \left(  \Delta_{\textup{max}} \cdot \Pr \mbr{ \neg(\cE_t \cap \cF_t) } + \Delta_t \cdot \I \lbr{\cE_t \cap \cF_t}  \right) 
	\\
	= & \sum_{t=t_0}^{T} O \left( \Delta_{\textup{max}} \cdot \frac{d^2}{t \ln^2 t} + \Delta_t \cdot \I \lbr{\cE_t \cap \cF_t} \right) 
	\\
	= & \sum_{t=t_0}^{T} O \Bigg( \sqrt{ \frac{ \ln t + d\ln(1+\lambda^{-1})}{t-1} } \sbr{  \sqrt{\lambda \Sigma^*_{\textup{max}}  } +  \sqrt{    {\bw^*}^\top  \Sigma^* \bw^* } + \frac{1}{\sqrt{\rho}} \sqrt{  {\theta}^*_{\textup{max}}-{\theta}^*_{\textup{min}} }   } +  \rho \frac{\ln t}{\sqrt{t-1}} \Bigg) 
	\\
	= & O \Bigg( \ln T \sqrt{d T}  \sbr{  \sqrt{\lambda \Sigma^*_{\textup{max}}  } +  \sqrt{    {\bw^*}^\top  \Sigma^* \bw^* } + \frac{1}{\sqrt{\rho}} \sqrt{  {\theta}^*_{\textup{max}}-{\theta}^*_{\textup{min}} } +\rho  } \Bigg) 
\end{align*}

Case (ii): Plugging the second term of the upper bound of $\bw_t^\top \Sigma^* \bw_t$ in Eq.~\eqref{eq:upper_bound_w_Sigma_w} into Eq.~\eqref{eq:Delta_leq}, we have that for $t \geq t_0$,
\begin{align*}
\Delta_t \leq & \sqrt{ \frac{2 \beta(\delta_t)}{t-1} } \sbr{ 2 \sqrt{\lambda \Sigma^*_{\textup{max}}  } + \sqrt{  \bw_{\textup{max}}^\top \Sigma^* \bw_{\textup{max}} } + \sqrt{    {\bw^*}^\top  \Sigma^* \bw^* }  } + 82 \rho \frac{\ln t}{\sqrt{t-1}}
\\
\leq & 2 \sqrt{ \frac{ 2\beta(\delta_t)}{t-1} }  \sbr{  \sqrt{\lambda \Sigma^*_{\textup{max}}  } + \sqrt{  \bw_{\textup{max}}^\top \Sigma^* \bw_{\textup{max}} } } + 82 \rho \frac{\ln t}{\sqrt{t-1}}
\end{align*}

For any horizon $T$ which satisfies $T \geq 1+\lambda^{-1}$, summing over $t=1,\dots,T$, we obtain the regret upper bound  
\begin{align*}
\ex[\cR(T)] = & O(t_0) + \sum_{t=t_0}^{T} O \left( \Delta_{\textup{max}} \cdot \Pr \mbr{ 
\neg(\cE_t \cap \cF_t) } + \Delta_t \cdot \I \lbr{\cE_t \cap \cF_t}  
\right) 
\\
%= & \sum_{t=1}^{T} O \left( \Delta_{\textup{max}} \cdot \frac{d^2}{t \ln^2 t} + \Delta_t \cdot \I \lbr{\cE_t \cap \cF_t} \right) 
%\\
= & \sum_{t=t_0}^{T} O \Bigg( \Delta_{\textup{max}} \cdot \frac{d^2}{t \ln^2 t} + \sqrt{ \frac{ \beta(\delta_t)}{t-1} }  \sbr{  \sqrt{\lambda \Sigma^*_{\textup{max}}  } + \sqrt{  \bw_{\textup{max}}^\top \Sigma^* \bw_{\textup{max}} } } +  \rho \frac{\ln t}{\sqrt{t-1}} \Bigg) 
\\
= & \sum_{t=t_0}^{T} O \Bigg(  \sqrt{ \frac{ \ln t + d\ln(1+\lambda^{-1})}{t-1} }  \sbr{  \sqrt{\lambda \Sigma^*_{\textup{max}}  } + \sqrt{  \bw_{\textup{max}}^\top \Sigma^* \bw_{\textup{max}} } } +  \rho \frac{\ln t}{\sqrt{t-1}} \Bigg) 
\\
= & O \Bigg( \ln T \sqrt{d T} \sbr{  \sqrt{\lambda \Sigma^*_{\textup{max}}  } + \sqrt{  \bw_{\textup{max}}^\top \Sigma^* \bw_{\textup{max}} } + \rho } \Bigg) 
\end{align*}

Combining cases (i) and (ii), we can obtain
\begin{align*}
\ex[\cR(T)] 
= & O \Bigg( \ln T \sqrt{d T} \bigg( \min \lbr{ \sqrt{{\bw^*}^\top  \Sigma^* \bw^*} + \rho^{-\frac{1}{2}} \sqrt{  {\theta}^*_{\textup{max}}-{\theta}^*_{\textup{min}} }, \sqrt{ \bw_{\textup{max}}^\top \Sigma^* \bw_{\textup{max}} } }
+ \sqrt{\lambda \Sigma^*_{\textup{max}}  }   +\rho \bigg) \Bigg) 
\end{align*}

Let $\Sigma^*_{\max} = \max_{i \in [d]} \Sigma^*_{ii}$.
Letting $\lambda=\frac{{\bw^*}^\top \Sigma^* {\bw^*}}{\Sigma^*_{\textup{max}}}$ and $T \geq 1+\frac{\Sigma^*_{\textup{max}}}{{\bw^*}^\top \Sigma^* {\bw^*}}$, we obtain
\begin{align*}
\ex[\cR(T)] = & O \Bigg( \ln T \sqrt{d T} \bigg( \min \lbr{ \sqrt{{\bw^*}^\top  \Sigma^* \bw^*} + \rho^{-\frac{1}{2}} \sqrt{  {\theta}^*_{\textup{max}}-{\theta}^*_{\textup{min}} }, \sqrt{ \bw_{\textup{max}}^\top \Sigma^* \bw_{\textup{max}} } }
 +\rho \bigg) \Bigg) 
\\
= & O \Bigg( \ln T \sqrt{d T} \bigg( \min \lbr{ \sqrt{{\bw^*}^\top  \Sigma^* \bw^*} + \rho^{-\frac{1}{2}} \sqrt{  {\theta}^*_{\textup{max}}-{\theta}^*_{\textup{min}} }, \sqrt{ \Sigma^*_{\textup{max}} } }
+\rho \bigg) \Bigg) ,
\end{align*}
which completes the proof.
\end{proof}

\subsection{Proof of Theorem~\ref{thm:lb_full_information}}

In order to prove Theorem~\ref{thm:lb_full_information}, we first analyze the offline problem of CMCB-FI. Suppose that the covariance matrix $\Sigma^*$ is positive definite.

\paragraph{(Offline Problem of CMCB-FI)} 
We define the quadratic optimization $\quadopt(\bs{\theta}^*, \Sigma^*)$ as 
\begin{align*}
\min_{\bw} & \quad f(\bw)=\rho \bw^\top \Sigma^* \bw - \bw^\top \bs{\theta}^*
\\
s.t. & \quad w_i \geq 0, \quad \forall i \in [d]
\\
& \quad \sum_{i=1}^{d} w_i = 1
\end{align*}
and $\bw^*$ as the optimal solution to $\quadopt(\bs{\theta}^*, \Sigma^*)$.
We consider the KKT condition for this quadratic optimization as follows:
\begin{align*}
2 \rho \Sigma^* \bw - \bs{\theta}^* - \bu -v \unitvec &= 0
\\
w_i u_i &=0, \quad \forall i \in [d]
\\
u_i &\geq 0, \quad \forall i \in [d]
\\
w_i &\geq 0, \quad \forall i \in [d]
\\
\sum_{i=1}^{d} w_i &= 1
\end{align*}
Let $S \subseteq [d]$ be a subset of indexes for $\bw$ such that $S=\{i \in [d]: w_i>0\}$. Let $ \bar{S}=[d] \setminus S$ and we have $\bar{S}=\{i \in [d]: w_i=0\}$. Then, from the KKT condition, we have
\begin{align}
\bw_S = &\frac{1}{2 \rho} (\Sigma^*_S)^{-1} \bs{\theta}^*_S + \frac{1-\| \frac{1}{2 \rho} (\Sigma^*_S)^{-1} \bs{\theta}^*_S \|}{\| (\Sigma^*_S)^{-1} \unitvec \|} (\Sigma^*_S)^{-1} \unitvec \succ \textbf{0}
\label{eq:w_S}
\\
\bw_{\bar{S}} = & \textbf{0} \nonumber
\\
v= & \frac{2 \rho (1-\| \frac{1}{2 \rho} (\Sigma^*_S)^{-1} \bs{\theta}^*_S \|)}{\| (\Sigma^*_S)^{-1} \unitvec \|}  \nonumber
\\
\bu = & 2 \rho \Sigma^* \bw - \bs{\theta}^* - v \unitvec \succeq  \textbf{0} \nonumber
\end{align}
Since this problem is a quadratic optimization and the covariance matrix $\Sigma^*$ is positive-definite, there is a unique feasible $S$ satisfying the above inequalities and the solution $\bw_S, \bw_{\bar{S}}$ is the optimal solution $\bw^*$.

\paragraph{Main Proof.}
Now, we give the proof of Theorem~\ref{thm:lb_full_information}.
\begin{proof} (Theorem~\ref{thm:lb_full_information}) 
	First, we choose prior distributions for $\bs{\theta}^*$ and $\Sigma^*$. We assume that $\bs{\theta}^* \sim \cN(0, \frac{1}{\omega}I)$ and $\Sigma^* \sim \pi_{I}$, where $\omega>0$ and $\pi_{I}$ takes probability $1$ at the support $I$ and probability $0$ anywhere else.
	Define $\bs{\hat{\theta}}_t \triangleq \frac{1}{t} \sum_{i=1}^{t} \theta_i$ and $\bs{\mu}_t=\frac{t}{t+\omega}\bs{\hat{\theta}}_t$. Then, we see that $\bs{\hat{\theta}}_t \sim \cN(0, \frac{\omega+t}{t \omega}I)$  and  $\bs{\mu}_t \sim \cN(0, \frac{t}{(t+\omega)\omega}I)$.

	Thus, the posterior of $\bs{\theta}^*$ is given by 
	$$ 
	\bs{\theta}^* | \theta_1, \dots, \theta_t, \Sigma^* \sim \cN \sbr{\frac{t}{t+\omega}\bs{\hat{\theta}}_t, \frac{1}{t+\omega}I} = \cN \sbr{\bs{\mu}_t, \frac{1}{t+\omega}I}.
	$$ 
	The posterior of $\Sigma^*$ is still $\Sigma^* \sim \pi_{I}$, i.e., $\Sigma^*$ is always a fixed identity matrix.
	
	Under the Bayesian setting, the expected regret is givenn by 
	\begin{align*}
	\sum_{t=1}^{T} \ex_{\bs{\mu}_t \sim \cN(0, \frac{t}{(t+\omega)\omega}I)} \mbr{ \ex_{\bs{\theta^*}|\bs{\mu}_t  \sim \cN(\bs{\mu}_t, \frac{1}{t+\omega}I)} \mbr{  f(\bw^*)-f(\bw_t) } } .
	\end{align*}
	Recall that $\bw^*$ is the optimal solution to $\quadopt(\bs{\theta}^*, \Sigma^*)$.
	It can be seen that the best strategy of $\bw_t$ at timestep $t$ is to select the optimal solution to $\quadopt(\bs{\mu}_t, \Sigma^*)$ and we use algorithm $\cA$ to denote this strategy. Thus, to obtain a regret lower bound for the problem, it suffices to  prove a regret lower bound of algorithm $\cA$ for the problem.
	
	Below we prove a regret lower bound of algorithm $\cA$ for the problem.
	\paragraph{Step (i).} We consider the case when $\bw^*$ and $\bw_t$ both lie in the interior of the $d$-dimensional probability simplex $\triangle_{d}$, i.e., $w^*_i >0, \forall i \in [d]$ and $w_{t,i}>0, \forall i \in [d]$.
	From Eq.~\eqref{eq:w_S}, $\bw^*$ satisfies
	\begin{align*}
	\frac{1}{2 \rho} I^{-1} \bs{\theta}^* + \frac{1-\| \frac{1}{2 \rho} I^{-1} \bs{\theta}^* \|}{\| I^{-1} \unitvec \|} I^{-1} \unitvec \succ \textbf{0} .
	\end{align*}
	Rearranging the terms, we have
	\begin{align*}
	\| \bs{\theta}^* \|\unitvec - d \bs{\theta}^* \prec  2 \rho \unitvec ,
	\end{align*}
	which is equivalent to
	\begin{align} \label{eq:theta_star_interior}
	\left\{\begin{matrix}
	\theta^*_2+ \dots + \theta^*_d-(d-1)\theta^*_1 & < 2 \rho
	\\
	\vdots
	\\ 
	\theta^*_1+ \dots + \theta^*_{d-1}-(d-1)\theta^*_d & < 2 \rho
	\end{matrix}\right.
	\end{align}
	
	Similarly, $\bw_t$ satisfies
	\begin{align} \label{eq:mu_t_interior}
	\left\{\begin{matrix} 
	\mu_2+ \dots + \mu_d-(d-1)\mu_1 & < 2 \rho
	\\
	\vdots
	\\ 
	\mu_1+ \dots + \mu_{d-1}-(d-1)\mu_d & < 2 \rho
	\end{matrix}\right.
	\end{align}
	
	We first derive a condition that makes $\bs{\mu}_t$ lie in the interior of $\triangle_{d}$.  
	Recall that $\bs{\mu}_t \sim \cN(0, \frac{t}{(t+\omega)\omega}I)$. Define event 
	$$
	\cE_t \triangleq  \lbr{ -3 \sqrt{\frac{t}{(t+\omega)\omega}} \leq \mu_t \leq 3 \sqrt{\frac{t}{(t+\omega)\omega}} }.
	$$
	According to the $3-\sigma$ principle for Gaussian distributions, we have
	$$
	\Pr \mbr{\cE_t} \geq (99.7\%)^d .
	$$ 
	Conditioning on $\cE_t$, under which  Eq.~\eqref{eq:mu_t_interior} hold, it suffices to let
	$$
	3 (d-1) \sqrt{\frac{t}{(t+\omega)\omega}} - \sbr{ -3 (d-1) \sqrt{\frac{t}{(t+\omega)\omega}} } < \rho,
	$$ 
	which is equivalent to
	\begin{align} \label{eq:mu_t_interior_inequality}
	\sbr{1+\frac{\omega}{t}} \omega> \frac{36(d-1)^2 }{ \rho^2} ,
	\end{align}
	when $t>0, m>0, t+\omega>0$. Let $t_1>0$ be the smallest timestep that satisfies Eq.~\eqref{eq:mu_t_interior_inequality}. Thus, when $\cE_t$ occurs and $t \geq t_1$, $\bs{\mu}_t$ lie in the interior of $\triangle_{d}$. 
	
	Next, we derive some condition that make $\bs{\theta}^*$ lie in the interior of $\triangle_{d}$.  
	Recall that $\bs{\theta^*}|\bs{\mu}_t  \sim \cN(\bs{\mu}_t, \frac{1}{t+\omega}I)$.
	
	Fix $\bs{\mu}_t$, and then we define event 
	$$
	\cF_t \triangleq  \lbr{ -3 \sqrt{\frac{t}{(t+\omega)\omega}} \leq \bs{\theta}^*-\bs{\mu}_t \leq 3 \sqrt{\frac{t}{(t+\omega)\omega}} }.
	$$
	According to the $3-\sigma$ principle for Gaussian distributions, we have 
	$$
	\Pr \mbr{\cF_t} \geq (99.7\%)^d .
	$$ 
	Conditioning on $\cF_t$, in order to let Eq.~\eqref{eq:theta_star_interior} hold, it suffices to let
	$$
	3 (d-1) \frac{1}{\sqrt{t+\omega}} - \sbr{ -3 (d-1) \frac{1}{\sqrt{t+\omega}}  } < \rho,
	$$ 
	which is equivalent to
	\begin{align} \label{eq:theta_star_interior_inequality}
	t  > \frac{36(d-1)^2}{\rho^2} -\omega,
	\end{align}
	when $t+\omega>0$. Let $t_2>0$ be the smallest timestep that satisfies Eq.~\eqref{eq:theta_star_interior_inequality}. Thus, when $\cF_t$ occurs and $t \geq t_2$, $\bs{\theta}^*$ lie in the interior of $\triangle_{d}$. 
	
	\paragraph{Step (ii).} Suppose that $\cE_t \cap \cF_t \cap \cG_t$ occurs and consider $t \geq \tilde{t} \triangleq \max \{t_1, t_2\}$. Then, $\bw^*$ and $\bw_t$ both lie in the interior of $\triangle_{d}$, i.e., $w^*_i >0, \forall i \in [d]$ and $w_{t,i}>0, \forall i \in [d]$. 
	We have
	\begin{align*}
	\bw^* = &\frac{1}{2 \rho} (\Sigma^*)^{-1} \bs{\theta}^* + \frac{1-\| \frac{1}{2 \rho} (\Sigma^*)^{-1} \bs{\theta}^* \|}{\| (\Sigma^*)^{-1} \unitvec \|} (\Sigma^*)^{-1} \unitvec 
	\\
	\bw_t = &\frac{1}{2 \rho} (\Sigma^*)^{-1} \bs{\mu}_t + \frac{1-\| \frac{1}{2 \rho} (\Sigma^*)^{-1} \bs{\mu}_t \|}{\| (\Sigma^*)^{-1} \unitvec \|} (\Sigma^*)^{-1} \unitvec 
	\end{align*}
	
	Let $\Delta \bs{\theta}_t \triangleq  \bs{\mu}_t-\bs{\theta}^*$ and thus $ \Delta \bs{\theta}_t|\bs{\mu}_t  \sim \cN(0, \frac{1}{t+\omega}I)$. 
	Let $\Delta \bw_t \triangleq \bw_t-\bw^*= \frac{1}{2 \rho} (\Sigma^*)^{-1} \Delta \bs{\theta}_t - \frac{ \| \frac{1}{2 \rho} (\Sigma^*)^{-1} \Delta \bs{\theta}_t \| }{\| (\Sigma^*)^{-1} \unitvec \|} (\Sigma^*)^{-1} \unitvec = \frac{1}{2 \rho}  \Delta \bs{\theta}_t - \frac{1}{2 \rho d} \| \Delta \bs{\theta}_t \| \unitvec $. 
	Then, we have
	\begin{align*}
	f(\bw^*)-f(\bw_t) =& f(\bw^*)-f(\bw^*+\Delta \bw_t)
	\\
	=& \sbr{ (\bw^*)^\top \bs{\theta}^*-\rho (\bw^*)^\top \Sigma^* \bw^* } 
	\\& - \sbr{ (\bw^*)^\top \bs{\theta}^*+ (\Delta \bw_t)^\top \bs{\theta}^* -\rho (\bw^*)^\top \Sigma^* \bw^* - 2\rho (\Delta \bw_t)^\top \Sigma^* \bw^*-\rho (\Delta \bw_t)^\top \Sigma^* \Delta \bw_t }
	\\
	=& - (\Delta \bw_t)^\top \bs{\theta}^* + 2\rho (\Delta \bw_t)^\top \Sigma^* \bw^* + \rho (\Delta \bw_t)^\top \Sigma^* \Delta \bw_t
	\\
	=& (\Delta \bw_t)^\top \nabla f( \bs{\theta}^* ) + \rho (\Delta \bw_t)^\top \Sigma^* \Delta \bw_t
	\\
	\geq & \rho (\Delta \bw_t)^\top \Sigma^* \Delta \bw_t
	\\
	= & \rho (\Delta \bw_t)^\top  \Delta \bw_t
	\\
	= & \rho \sbr{ \frac{1}{4 \rho^2}  (\Delta \bs{\theta}_t)^\top  \Delta \bs{\theta}_t - \frac{1}{4 \rho^2 d^2} \| \Delta \bs{\theta}_t \|^2 d }
	\\
	= &   \frac{1}{4 \rho} \sbr{ \sum_{i=1}^d \Delta \theta_{t,i}^2 - \frac{\sbr{ \sum_{i=1}^d \Delta \theta_{t,i} }^2 }{d} }
	\\
	= & \frac{1}{4 \rho d} \sum_{1 \leq i<j \leq d} \sbr{\Delta \theta_{t,i}-\Delta \theta_{t,j} }^2
	\end{align*}
	Let $i_1=1, j_1=2, i_2=3, j_2=4, \dots, i_{\left \lceil \frac{d}{2} \right \rceil}=2 \left \lceil \frac{d}{2} \right \rceil-1, j_{\left \lceil \frac{d}{2} \right \rceil}=2 \left \lceil \frac{d}{2} \right \rceil $.
	For any $i_k, j_k$ ($k \in [\left \lceil \frac{d}{2} \right \rceil] $), $\Delta \theta_{t,i_k}-\Delta \theta_{t,j_k} | \bs{\mu}_t \sim \cN(0, \frac{2}{t+\omega})$ and they are mutually independent among $k$.
%\longbo{this is a strange statement---	Then, conditioning on $\bs{\mu}_t$, define event}
\yihan{Revised to ``fix $\bs{\mu}_t$''.}
	
	Fix $\bs{\mu}_t$, and then we define event
	$$
	\cG_t \triangleq \lbr{ |\Delta \theta_{t,i_k}-\Delta \theta_{t,j_k}| \geq 0.3 \sqrt{ \frac{2}{t+\omega} }, \forall k \in \mbr{\left \lceil \frac{d}{2} \right \rceil}  } .
	$$
	From the c.d.f. of Gaussian distributions, we have 
	$$
	\Pr[\cG_t] \geq (75\%)^{\left \lceil \frac{d}{2} \right \rceil} .
	$$
	From this, we get 
	\begin{align*}
	\Pr\mbr{\cF_t \cap \cG_t} \geq & 1- \Pr\mbr{\bar{\cF_t}} - \Pr\mbr{\bar{\cG_t}}
	\\
	\geq & 1 - \sbr{ 1-(99.7\%)^d } - \sbr{ 1-(75\%)^{\left \lceil \frac{d}{2} \right \rceil} } 
	\\
	\geq & (99.7\%)^d + (75\%)^{\left \lceil \frac{d}{2} \right \rceil} - 1 .
	\end{align*}
	When $d \leq 18$, $\Pr\mbr{\cF_t \cap \cG_t} \geq (99.7\%)^d + (75\%)^{\left \lceil \frac{d}{2} \right \rceil} - 1 > 0$.

	\paragraph{Step (iii).} We bound the expected regret by considering the event $\cE_t \cap \cF_t \cap \cG_t$ and  $t \geq \tilde{t} $. Specifically, 
	\begin{align*}
	& \sum_{t=1}^{T} \ex_{\bs{\mu}_t \sim \cN(0, \frac{t}{(t+\omega)\omega}I)} \mbr{ \ex_{\bs{\theta^*}|\bs{\mu}_t  \sim \cN(\bs{\mu}_t, \frac{1}{t+\omega}I)} \mbr{  f(\bw^*)-f(\bw_t) } } 
	\\
	\geq & \sum_{t=\tilde{t}}^{T} \ex_{\bs{\mu}_t \sim \cN(0, \frac{t}{(t+\omega)\omega}I)} \mbr{ \ex_{\bs{\theta^*}|\bs{\mu}_t  \sim \cN(\bs{\mu}_t, \frac{1}{t+\omega}I)} \mbr{  f(\bw^*)-f(\bw_t) } | \cE_t} \Pr\mbr{\cE_t}
	\\
	\geq & \sum_{t=\tilde{t}}^{T} \ex_{\bs{\mu}_t \sim \cN(0, \frac{t}{(t+\omega)\omega}I)} \mbr{ \ex_{\bs{\theta^*}|\bs{\mu}_t  \sim \cN(\bs{\mu}_t, \frac{1}{t+\omega}I)} \mbr{  f(\bw^*)-f(\bw_t) | \cF_t \cap \cG_t } \Pr\mbr{\cF_t \cap \cG_t} | \cE_t} \Pr\mbr{\cE_t}
	\\
	\geq & \sum_{t=\tilde{t}}^{T} \ex_{\bs{\mu}_t \sim \cN(0, \frac{t}{(t+\omega)\omega}I)} \! \Bigg[ \ex_{\bs{\theta^*}|\bs{\mu}_t  \sim \cN(\bs{\mu}_t, \frac{1}{t+\omega}I)} \! \Bigg[  \frac{1}{4 \rho d} \sum_{1 \leq i<j \leq d} \sbr{\Delta \theta_{t,i}-\Delta \theta_{t,j} }^2 | \cF_t \cap \cG_t \cap \cE_t \!\Bigg] \cdot  \Pr\mbr{\cF_t \cap \cG_t} \! \Bigg] \!\! \Pr\mbr{\cE_t}
	\\
	\geq & \sum_{t=\tilde{t}}^{T} \ex_{\bs{\mu}_t \sim \cN(0, \frac{t}{(t+\omega)\omega}I)} \!\Bigg[ \ex_{\bs{\theta^*}|\bs{\mu}_t  \sim \cN(\bs{\mu}_t, \frac{1}{t+\omega}I)} \!\mbr{  \frac{  {\left \lceil \frac{d}{2} \right \rceil} }{4 \rho d}  \cdot \frac{0.3^2 \cdot 2}{ t+\omega }   | \cF_t \cap \cG_t \cap \cE_t \!} \cdot  \sbr{ (99.7\%)^d + (75\%)^{\left \lceil \frac{d}{2} \right \rceil} - 1 } \!\Bigg]  (99.7\%)^d 
	\\
	= & \sum_{t=\tilde{t}}^{T} \frac{ 0.01125 }{ \rho  (t+\omega)}  \sbr{ (99.7\%)^d + (75\%)^{\left \lceil \frac{d}{2} \right \rceil} - 1 } (99.7\%)^d 
	\\
	= &  \frac{ 0.01125 \sbr{ (99.7\%)^d + (75\%)^{\left \lceil \frac{d}{2} \right \rceil} - 1 } (99.7\%)^d }{ \rho  } \ln \sbr{ \frac{t+\omega}{\tilde{t}+\omega} }  .
	\end{align*}
	
	In the following, we consider an intrinsic bound of the expected regret and set the problem parameters to  proper quantities.
	Since the expected regret is upper bounded by $\Delta_{\textup{max}} T$ and
	\begin{align*}
	\Delta_{\textup{max}} = & f_{\textup{max}} - f_{\textup{min}}
	\\
	\geq & f\sbr{ \frac{1}{d} \unitvec } - \sbr{ \theta^*_{\textup{min}} - \rho }
	\\
	= & \frac{1}{d} \sum_{i=1}^{d} \theta^*_i - \rho \frac{1}{d} - \sbr{ \theta^*_{\textup{min}} - \rho }
	\\
	= & \frac{1}{d} \sum_{i=1}^{d} \theta^*_i  -  \theta^*_{\textup{min}} + \frac{d-1}{d} \rho
	\\ 
	> & 0 ,
	\end{align*}
	we conclude that the expected regret is lower bounded by 
	\begin{align*}
	 \min \Bigg\{ \frac{ 0.01125 \sbr{ (99.7\%)^d + (75\%)^{\left \lceil \frac{d}{2} \right \rceil} - 1 } (99.7\%)^d }{ \rho  }  \ln \sbr{ \frac{t+\omega}{\tilde{t}+m} },
	 \sbr{\frac{1}{d} \sum_{i=1}^{d} \theta^*_i  -  \theta^*_{\textup{min}} + \frac{d-1}{d} \rho} T \Bigg\} .
	\end{align*}
	Choose $\omega=36(d-1)^2 T$ and  $\rho=\frac{1}{\sqrt{T}}$. According to Eqs.~\eqref{eq:mu_t_interior_inequality} and \eqref{eq:theta_star_interior_inequality}, we have $t_1=t_2=\tilde{t}=1$. Therefore, for $d \leq 18$, the expected regret is lower bounded by $\Omega(\sqrt{T})$.
	
\end{proof}

\section{Proof for CMCB-SB}
\subsection{Proof of Theorem~\ref{thm:ub_semi_bandit}}

\begin{proof} (Theorem~\ref{thm:ub_semi_bandit}) 
	Denote $\Delta_t = f(\bw^*)-f(\bw_t)$, $\bar{f}_{t}(\bw) = \bw^\top \bs{\hat{\theta}}_{t-1} + E_t(\bw) - \rho \bw^\top \underline{\Sigma}_{t-1} \bw $ and $G_t$ the matrix whose $ij$-th entry is $g_{ij}(t)$.
	
	For $t \geq d^2+1$, suppose that event $\cG_t \cap \cH_t$ occurs. Define $h_t(\bw) \triangleq E_t(\bw)+ \rho {\bw}^\top G_t {\bw}$ for any $\bw \in \triangle^c_{d}$.
	Then, we have
	$$0 \leq \bar{f}_t(\bw)-f(\bw) \leq 2 h_t(\bw) .$$
	
	According to the selection strategy of $\bw_t$, we obtain 
	\begin{align*}
	f(\bw^*)-f(\bw_t) \leq & \bar{f}_t(\bw^*)-f(\bw_t)
	\\
	\leq & \bar{f}_t(\bw_t)-f(\bw_t)
	\\
	\leq & 2 h_t(\bw_t)
	\end{align*}
	Thus, for any $T \geq d^2+1$, we have that 
	\begin{align*}
	& \sum_{t=d^2 +1}^{T} \sbr{f(\bw^*)-f(\bw_t)} 
	\\
	\leq & 2 \sum_{t=d^2 +1}^{T} h_t(\bw_t)
	\\
	= & 2 \sum_{t=d^2 +1}^{T} \sbr{ \sqrt{2\beta(\delta_t)} \sqrt{\bw^\top D_{t-1}^{-1} (\lambda \Lambda_{\bar{\Sigma}_{t}} D_{t-1} + \sum_{s=1}^{t-1} \bar{\Sigma}_{s, \bw_s} ) D_{t-1}^{-1} \bw} + \rho {\bw}^\top G_t {\bw} }
	\\
	\leq & 2 \sqrt{2\beta(\delta_T)} \sum_{t=d^2 +1}^{T} \sqrt{\bw^\top D_{t-1}^{-1} (\lambda \Lambda_{\bar{\Sigma}_{t}} D_{t-1} + \sum_{s=1}^{t-1} \bar{\Sigma}_{s, \bw_s} ) D_{t-1}^{-1} \bw} + 2 \rho \sum_{t=d^2 +1}^{T} {\bw}^\top G_t {\bw} 
	\\
	\leq & 2 \sqrt{2\beta(\delta_T)} \sqrt{ T \cdot  \sum_{t=d^2 +1}^{T} \sbr{ \bw^\top D_{t-1}^{-1} (\lambda \Lambda_{\bar{\Sigma}_{t}} D_{t-1} + \sum_{s=1}^{t-1} \bar{\Sigma}_{s, \bw_s} ) D_{t-1}^{-1} \bw } } \\& + 2 \rho \sum_{t=d^2 +1}^{T} {\bw}^\top G_t {\bw} 
	\\
	\leq & 2 \sqrt{2\beta(\delta_T)} \sqrt{ T } \sqrt{ \lambda \underbrace{ \sum_{t=d^2 +1}^{T} \sbr{  \bw^\top D_{t-1}^{-1} \Lambda_{\bar{\Sigma}_{t}}  \bw } }_{\Gamma_1} + \underbrace{ \sum_{t=d^2 +1}^{T} \sbr{ \bw^\top D_{t-1}^{-1} \sum_{s=1}^{t-1} \bar{\Sigma}_{s, \bw_s}  D_{t-1}^{-1} \bw } }_{ \Gamma_2} } \\& + 2 \rho \underbrace{ \sum_{t=d^2 +1}^{T} {\bw}^\top G_t {\bw} }_{\Gamma_3}
	\end{align*}
	
	Let $(\Sigma^*_{ij})^+=\Sigma^*_{ij} \vee 0$ for any $i,j, \in [d]$.
	We first address $\Gamma_3$.
	Since for any $t \geq d^2+1$ and $i,j \in [d]$,
	\begin{align*}
	g_{ij}(t) = & 16 \left( \frac{3 \ln t}{N_{ij}(t-1)} \vee \sqrt{\frac{3 \ln t}{N_{ij}(t-1)}} \right) + \sqrt{\frac{48 \ln^2 t}{N_{ij}(t-1) N_{i}(t-1)}} + \sqrt{\frac{36 \ln^2 t}{N_{ij}(t-1) N_{j}(t-1)}} 
	\\
	\leq & 48  \frac{ \ln t}{\sqrt{N_{ij}(t-1)}}  + \sqrt{\frac{48 \ln^2 t}{N_{ij}(t-1) }} + \sqrt{\frac{36 \ln^2 t}{N_{ij}(t-1) }} 
	\\
	\leq & 61 \frac{ \ln t}{ \sqrt{N_{ij}(t-1)} } ,
	\end{align*}
	we can bound $\Gamma_3$ as follows 
	\begin{align*}
	\Gamma_3 = & \sum_{t=d^2 +1}^{T} {\bw}^\top G_t {\bw}
	\\
	= & \sum_{t=d^2 +1}^{T} \sum_{i,j \in [d]} g_{ij}(t) w_{t,i} w_{t,j}
	\\
	\leq & 61 \sum_{i,j \in [d]} \sum_{t=d^2 +1}^{T} \frac{ \ln t}{ \sqrt{N_{ij}(t-1)} }  w_{t,i} w_{t,j}
	\\
	\leq & 61 \ln T \sum_{i,j \in [d]} \sum_{t=d^2 +1}^{T} \frac{ w_{t,i} w_{t,j} }{ \sqrt{ \sum_{s=1}^{t-1} w_{s,i} w_{s,j}  } }  
	\\
	\leq & 61 \ln T \sum_{i,j \in [d]} \sum_{t=d^2 +1}^{T} \frac{ w_{t,i} w_{t,j} }{ \sqrt{ \sum_{s=1}^{t-1} w_{s,i} w_{s,j}  } } 
	\\
	\leq & 122 \ln T \sum_{i,j \in [d]} \sqrt{ \sum_{t=1}^{T} w_{t,i} w_{t,j} }
	\\
	\leq & 122 \ln T\sqrt{ d^2 \sum_{i,j \in [d]}  \sum_{t=1}^{T} w_{t,i} w_{t,j} }
	\\
	= & 122 \ln T\sqrt{ d^2   \sum_{t=1}^{T} \sum_{i,j \in [d]} w_{t,i} w_{t,j} }
	\\
	= & 122 \ln T\sqrt{ d^2 \sum_{t=1}^{T} \sbr{\sum_{i \in [d]} w_{t,i}}^2 }
	\\
	\leq & 122  d \ln T  \sqrt{ T }
	\end{align*}
	
	Next, we obtain a bound for   $\Gamma_1$.
	\begin{align*}
	\Gamma_1 = & \sum_{t=d^2 +1}^{T} \sbr{  \bw^\top D_{t-1}^{-1} \Lambda_{\bar{\Sigma}_{t}}  \bw }
	\\
	= & \sum_{t=d^2 +1}^{T} \sum_{i \in [d]} \frac{\bar{\Sigma}_{t,ii}}{N_{i}(t-1)} w_{t,i}^2
	\\
	\leq & \sum_{t=d^2 +1}^{T} \sum_{i \in [d]} \frac{\bar{\Sigma}_{t,ii}}{\sum_{s=1}^{t-1} w_{s,i}} w_{t,i}^2
	\\
	\leq &  \sum_{i \in [d]} \sum_{t=d^2 +1}^{T} \frac{\Sigma^*_{ii}+ 2g_{ii}(t)}{\sum_{s=1}^{t-1} w_{s,i}} w_{t,i}^2
	\\
	\leq &  \sum_{i \in [d]} \sbr{ \sum_{t=d^2 +1}^{T} \frac{ \Sigma^*_{ii} }{\sum_{s=1}^{t-1} w_{s,i}} w_{t,i} + 122 \sum_{t=d^2 +1}^{T} \frac{  \frac{ \ln t}{ \sqrt{N_{i}(t-1)} } }{\sum_{s=1}^{t-1} w_{s,i}} w_{t,i} }
	\\
	\leq &  \sum_{i \in [d]} \sbr{ \Sigma^*_{ii} \sum_{t=d^2 +1}^{T} \frac{ 1 }{\sum_{s=1}^{t-1} w_{s,i}} w_{t,i} + 122 \ln T \sum_{t=d^2 +1}^{T} \frac{ 1 }{(\sum_{s=1}^{t-1} w_{s,i})^{\frac{3}{2}} } w_{t,i} }
	\\
	\leq &  \sum_{i \in [d]} \sbr{ (\Sigma^*_{ii})^+ \ln \sbr{ \sum_{t=1}^{T} w_{t,i} } + 244 \ln T  \frac{ 1 }{ \sqrt{\sum_{t=1}^{d^2} w_{t,i}} }  }
	\\
	\leq &  \sum_{i \in [d]} \sbr{ (\Sigma^*_{ii})^+ \ln T  + 244 \ln T  \frac{ 1 }{ \sqrt{\sum_{t=1}^{d^2} w_{t,i}} }  }
	\end{align*}
	
	Finally, we bound $\Gamma_2$.
	\begin{align*}
	& \Gamma_2  
	\\
	= & \sum_{t=d^2 +1}^{T} \sbr{ \bw_t^\top D_{t-1}^{-1} \sbr{ \sum_{s=1}^{t-1} \bar{\Sigma}_{s, \bw_s} }  D_{t-1}^{-1} \bw_t }
	\\
	= &  \sum_{i,j \in [d]} \sum_{t=d^2 +1}^{T} \frac{ \sum_{s=1}^{t-1} \bar{\Sigma}_{s, ij} \I\{w_{s,i}, w_{s,j}>0\} }{ N_i(t-1) N_j(t-1) }  w_{t,i} w_{t,j}
	\\
	\leq &  \sum_{i,j \in [d]} \sum_{t=d^2 +1}^{T} \frac{ \sum_{s=1}^{t-1} \bar{\Sigma}_{s, ij} \I\{w_{s,i}, w_{s,j}>0\} }{ N_{ij}^2(t-1) }  w_{t,i} w_{t,j}
	\\
	\leq & \sum_{t=d^2 +1}^{T} \sum_{i,j \in [d]}  \frac{ \sum_{s=1}^{t-1} \sbr{ \Sigma^*_{ij}+ 2g_{ij}(s) } \I\{w_{s,i}, w_{s,j}>0\} }{ N_{ij}^2(t-1) }  w_{t,i} w_{t,j}
	\\
	= &  \sum_{i,j \in [d]} \sbr{ \sum_{t=d^2 +1}^{T}  \frac{ \Sigma^*_{ij} \sum_{s=1}^{t-1}  \I\{w_{s,i}, w_{s,j}>0\} }{ N_{ij}^2(t-1) }  w_{t,i} w_{t,j} + \sum_{t=d^2 +1}^{T}  \frac{ 2 \sum_{s=1}^{t-1}  g_{ij}(s) \I\{w_{s,i}, w_{s,j}>0\} }{ N_{ij}^2(t-1) }  w_{t,i} w_{t,j} }
	\\
	\leq &  \sum_{i,j \in [d]} \sbr{ \Sigma^*_{ij} \sum_{t=d^2 +1}^{T}  \frac{ 1  }{  N_{ij}(t-1)  }  w_{t,i} w_{t,j} + 122 \sum_{t=d^2 +1}^{T}  \frac{  \sum_{s=1}^{t-1}   \frac{ \ln s }{ \sqrt{N_{ij}(s-1)} } \I\{w_{s,i}, w_{s,j}>0\} }{ N_{ij}^2(t-1) }  w_{t,i} w_{t,j} }
	\\
	\leq &  \sum_{i,j \in [d]} \!\! \sbr{ \Sigma^*_{ij} \!\!\!\sum_{t=d^2 +1}^{T} \!   \frac{ 1  }{  \sum_{s=1}^{t-1}  w_{s,i} w_{s,j}  }  w_{t,i} w_{t,j} + 122 \!\! \sum_{t=d^2 +1}^{T} \!\!\!  \frac{  \ln t \sum_{s=1}^{t-1} \!\!   \frac{ 1 }{ \sqrt{ \sum_{\ell=1}^{s-1}  \I\{w_{\ell,i}, w_{\ell,j}>0\} } } \I\{w_{s,i}, w_{s,j}>0\} }{ N_{ij}^2(t-1) }  w_{t,i} w_{t,j} }
	\\
	\leq &  \sum_{i,j \in [d]} \sbr{ \Sigma^*_{ij} \sum_{t=d^2 +1}^{T}  \frac{ 1  }{  \sum_{s=1}^{t-1} w_{s,i} w_{s,j}  }  w_{t,i} w_{t,j} + 244 \sum_{t=d^2 +1}^{T}  \frac{  \ln t \sqrt{ \sum_{s=1}^{t-1} \I\{w_{s,i}, w_{s,j}>0\} } }{ N_{ij}^2(t-1) }  w_{t,i} w_{t,j} }
	\\
	= &  \sum_{i,j \in [d]} \sbr{ \Sigma^*_{ij} \sum_{t=d^2 +1}^{T}  \frac{ 1  }{  \sum_{s=1}^{t-1} w_{s,i} w_{s,j}  }  w_{t,i} w_{t,j} + 244 \ln T \sum_{t=d^2 +1}^{T}  \frac{  1  }{  N_{ij}^\frac{3}{2}(t-1)  }  w_{t,i} w_{t,j} }
	\\
	\leq &  \sum_{i,j \in [d]} \sbr{ \Sigma^*_{ij} \sum_{t=d^2 +1}^{T}  \frac{ 1  }{  \sum_{s=1}^{t-1} w_{s,i} w_{s,j}  }  w_{t,i} w_{t,j} + 244 \ln T \sum_{t=d^2 +1}^{T}  \frac{  1  }{ \sbr{ \sum_{s=1}^{t-1} w_{s,i} w_{s,j} }^\frac{3}{2} }  w_{t,i} w_{t,j} }
	\\
	\leq &  \sum_{i,j \in [d]} \sbr{ (\Sigma^*_{ij})^+ \ln \sbr{ \sum_{t=1}^{T}  w_{t,i} w_{t,j} } + 488 \ln T   \frac{  1  }{ \sqrt{ \sum_{t=1}^{d^2} w_{t,i} w_{t,j} } }  }
	\\
	\leq &  \sum_{i,j \in [d]} \sbr{ (\Sigma^*_{ij})^+ \ln T + 488 \ln T   \frac{  1  }{ \sqrt{ \sum_{t=1}^{d^2} w_{t,i} w_{t,j} } }  }
	\\
	\end{align*}
	Recall that $\beta(\delta_T)=\ln(T \ln^2 T) + d\ln \ln T + \frac{d}{2} \ln(1+e/ \lambda)=O(\ln T + d\ln\ln T+d \ln(1+ \lambda^{-1}) )$. 
	Combining the bounds of $\Gamma_1, \Gamma_2$ and $\Gamma_3$, we obtain
	\begin{align*}
	& \sum_{t=d^2 +1}^{T} \sbr{f(\bw^*)-f(\bw_t)} 
	\\
	\leq & 2 \sqrt{2\beta(\delta_T)} \sqrt{ T } \sqrt{ \lambda \underbrace{ \sum_{t=d^2 +1}^{T} \sbr{  \bw^\top D_{t-1}^{-1} \Lambda_{\bar{\Sigma}_{t}}  \bw } }_{\Gamma_1} + \underbrace{ \sum_{t=d^2 +1}^{T} \sbr{ \bw^\top D_{t-1}^{-1} \sum_{s=1}^{t-1} \bar{\Sigma}_{s, \bw_s}  D_{t-1}^{-1} \bw } }_{ \Gamma_2} }+ 2 \rho \underbrace{ \sum_{t=d^2 +1}^{T} {\bw}^\top G_t {\bw} }_{\Gamma_3}
	\\
	\leq & 2 \sqrt{2\beta(\delta_T)} \sqrt{ T } \sqrt{ \lambda \sum_{i \in [d]} \sbr{ (\Sigma^*_{ii})^+ \ln T  + 244 \ln T  \frac{ 1 }{ \sqrt{\sum_{t=1}^{d^2} w_{t,i}} }  } + \sum_{i,j \in [d]} \sbr{ (\Sigma^*_{ij})^+ \ln T + 488 \ln T   \frac{  1  }{ \sqrt{ \sum_{t=1}^{d^2} w_{t,i} w_{t,j} } }  } } \\& + 244 \rho  d \ln T  \sqrt{ T }	
	\\
	\leq & 2 \sqrt{2\beta(\delta_T)} \sqrt{ T } \sqrt{ \lambda \ln T \sum_{i \in [d]}  (\Sigma^*_{ii})^+     + \ln T \sum_{i,j \in [d]}  (\Sigma^*_{ij})^+  + (244\lambda+488) d^2 \ln T     }  + 244 \rho  d \ln T  \sqrt{ T }
	\\
	= &O \sbr{  \sqrt{ \sbr{ \ln T + d\ln\ln T+d \ln(1+ \lambda^{-1}) } \cdot T } \sqrt{ \lambda \ln T \sum_{i \in [d]}  (\Sigma^*_{ii})^+     + \ln T \sum_{i,j \in [d]}  (\Sigma^*_{ij})^+   +  (\lambda+1) d^2 \ln T     }  + \rho  d \ln T  \sqrt{ T } }
	\\
	= &O \sbr{  \sqrt{   d (\ln T+ \ln(1+ \lambda^{-1}))  \cdot T } \sqrt{ \lambda \ln T \sum_{i \in [d]}  (\Sigma^*_{ii})^+     + \ln T \sum_{i,j \in [d]}  (\Sigma^*_{ij})^+   +  (\lambda+1) d^2 \ln T     }  + \rho  d \ln T  \sqrt{ T } }
	\\
	= &O \sbr{  \sqrt{ d \ln T \sbr{ \ln T +  \ln(1+ \lambda^{-1}) } \cdot T } \sqrt{ \lambda \sum_{i \in [d]}  (\Sigma^*_{ii})^+     +  \sum_{i,j \in [d]}  (\Sigma^*_{ij})^+   +  (\lambda+1) d^2  }  + \rho  d \ln T  \sqrt{ T } }
	\end{align*}
	According to Lemmas~\ref{lemma:semi_concentration_covariance} and \ref{lemma:semi_concentration_means}, for any $t \geq 2$,  the probability of event $\neg(\cG_t \cap \cH_t)$ satisfies 
	\begin{align*}
	\Pr \mbr{ \neg(\cG_t \cap \cH_t) } \leq & \frac{10 d^2}{t^2} + \frac{1}{t \ln^2 t}
	\\
	\leq & \frac{10 d^2}{t \ln^2 t} + \frac{1}{t \ln^2 t}
	\\
	= &  \frac{11 d^2}{t \ln^2 t}
	\end{align*}
	
	Therefore, for any horizon $T$, we obtain the regret upper bound  
	\begin{align*}
	\ex[\cR(T)] = & O( \Delta_{\textup{max}} ) + \sum_{t=2}^{T} O \left( \Delta_{\textup{max}} \cdot \Pr \mbr{ \neg(\cG_t \cap \cH_t) } + \Delta_t \cdot \I \lbr{\cG_t \cap \cH_t}  \right) 
	\\
	= & O( \Delta_{\textup{max}} ) + \sum_{t=2}^{T} O \left(\Delta_{ \textup{max}} \cdot \frac{ d^2}{t \ln^2 t}  \right) +  O \Bigg(  \sqrt{ d \ln T \sbr{ \ln T +  \ln(1+ \lambda^{-1}) }  T } \cdot \\& \sqrt{ \lambda \sum_{i \in [d]}  (\Sigma^*_{ii})^+   +  \sum_{i,j \in [d]}  (\Sigma^*_{ij})^+   + (\lambda+1)  d^2      }   + \rho  d \ln T  \sqrt{ T } \Bigg)
	\\
	= &   O \sbr{  \sqrt{ d \ln T \sbr{ \ln T +  \ln(1+ \lambda^{-1}) }  T } \sqrt{ \lambda \sum_{i \in [d]}  (\Sigma^*_{ii})^+     +  \sum_{i,j \in [d]}  (\Sigma^*_{ij})^+   + (\lambda+1) d^2      }  + \rho  d \ln T  \sqrt{ T } + d^2 \Delta_{ \textup{max}} }
	\\
	= &   O \sbr{  \sqrt{ d \sbr{\ln(1+ \lambda^{-1}) + 1} \ln^2 T \cdot T } \sqrt{    (\lambda+1) \sbr{  \sum_{i,j \in [d]}  (\Sigma^*_{ij})^+ + d^2}      }  + \rho  d \ln T  \sqrt{ T } }
	\\
	= & O \Bigg(  \sqrt{ (\lambda+1) \sbr{\ln(1+ \lambda^{-1}) + 1}  (\|\Sigma^*\|_{+}   +  d^2) d \ln^2 T    \cdot T }   + \rho  d \ln T  \sqrt{ T } \Bigg)
	\\
	= & O \Bigg(  \sqrt{ L(\lambda) (\|\Sigma^*\|_{+}   +  d^2) d \ln^2 T    \cdot T }   + \rho  d \ln T  \sqrt{ T } \Bigg)
	\end{align*}
	where $L(\lambda)= (\lambda+1) \sbr{\ln(1+ \lambda^{-1}) + 1} $ and $\|\Sigma^*\|_{+}=\sum_{i,j \in [d]} \sbr{\Sigma^*_{ij} \vee 0}$ for any $i,j \in [d]$.
\end{proof}

\subsection{Proof of Theorem~\ref{thm:lb_semi_bandit}}

\begin{proof}	
First, we construct some instances with $d \geq 4$, $\frac{2}{d} \leq c \leq \frac{1}{2}$, $\Sigma_*=I$ and $\boldsymbol{\theta}_t \sim N(\boldsymbol{\theta}^*, I)$. 

Let $I_J$ be a random instance constructed as follows: we uniformly choose a dimension $J$ from $[d]$, and the expected reward vector $\boldsymbol{\theta}^*_{J}$ has $\frac{1}{2}+\varepsilon$ on its $J$-th entry and $\frac{1}{2}$ elsewhere, where $\varepsilon \in (0, \frac{1}{2}]$ will be specified later. Let $I_u$ be a uniform instance, where $\boldsymbol{\theta}^*_u$ has all its entries to be $\frac{1}{2}$.
Let $\Pr_J[\cdot]$ and $\Pr_u[\cdot]$ denote the probabilities under instances $I_J$ and $I_u$, respectively, and let $\Pr_j[\cdot]=\Pr_J[\cdot|J=j]$. Analogously, $E_J[\cdot]$, $E_u[\cdot]$ and $E_j[\cdot]=E_J[\cdot|J=j]$ denote the expectation operations.

Fix an algorithm $\mathcal{A}$. Let $S_t \in \{\mathbb{R} \cup \{\perp\}\}^d$ be a random variable vector denoting the observations at timestep $t$, obtained by running $\mathcal{A}$. Here $\perp$ denotes no observation on this dimension. Let $Q_{\perp}$ denote the distribution on support $\{\perp\}$ which takes value $\perp$ with probability 1.

In CMCB-SB, if $w_{t,i}>0$, we can observe the reward on the $i$-th dimension, i.e., $S_{t,i}=\theta_{t,i}$; otherwise, if $w_{t,i}=0$, we cannot get observation on the $i$-th dimension, i.e., $S_{t,i}=\perp$. 
Let $D_J$ be the distribution of observation sequence $S_1, \dots S_t$ under instance $I_J$, and $D_j=D_{J|J=j}$ is the distribution conditioned on $J=j$. 
Let $D_u$ be the distribution of observation sequence $S_1, \dots S_t$ under instance $I_u$.  
For any $i \in [d]$, let $N_i=\sum_{t=1}^{T}\mathbb{I}\{w_{t,i}>0\}$ be the number of pulls that has a positive weight on the $i$-th dimension, i.e., the number of observations on the $i$-th dimension.

Following the analysis procedure of Lemma A.1 in \cite{auer2002nonstochastic}, we have

\begin{align*}
KL(D_j\|D_u) = & \sum_{t=1}^{T} KL(D_u[S_t|S_1,\dots,S_{t-1}]\|D_j[S_t|S_1,\dots,S_{t-1}])
\\
= & \sum_{t=1}^{T} \sum_{i=1}^{d} \left(\Pr[w_{t,i}>0] \cdot KL\left(N(\theta^*_{u,i},1) \| N(\theta^*_{j,i},1)\right)
+\Pr[w_{t,i}=0] \cdot KL( Q_{\perp} \| Q_{\perp}) \right)
\\
= & \sum_{t=1}^{T} \left( \Pr[w_{t,j}>0] \cdot KL\left(N(\frac{1}{2},1) \| N(\frac{1}{2}+\varepsilon,1)\right)
+\sum_{i \neq j}^{d} \Pr[w_{t,i}>0] \cdot KL\left(N(\frac{1}{2},1) \| N(\frac{1}{2},1)\right) \right)
\\
= & \frac{1}{2} \varepsilon^2 \cdot \sum_{t=1}^{T} \Pr[w_{t,j}>0] 
\\
=  & \frac{1}{2} \varepsilon^2 E_u[N_j] 
\end{align*}

Here the first equality comes from the chain rule of entropy~\cite{cover1999elements}. The second equality is due to that given $S_1,\dots,S_{t-1}$, if $w_{t,i}>0$, the conditional distribution of $S_t$ is $N(\theta^*_{\cdot,i},1)$, where "$\cdot$" refers to the subscript of instances; otherwise, if $w_{t,i}=0$, $S_t$ is  $\perp$ deterministically. The third equality is due to that $\boldsymbol{\theta}^*_u$ and $\boldsymbol{\theta}^*_j$ only have one different entry on the $j$-th dimension.

Let $\|\cdot\|$ with subscript $TV$ denote the total variance distance, and $KL(\cdot\|\cdot)$ denote the Kullback–Leibler divergence. 
Using Eq. (28) in the analysis of Lemma A.1 in \cite{auer2002nonstochastic} and Pinsker's inequality, we have

\begin{align*}
E_j[N_j] \leq & E_u[N_j] + T \|D_j-D_u\|_{TV}
\\
\leq & E_u[N_j] + T \sqrt{ \frac{1}{2} KL(D_j\|D_u) }
\\
= & E_u[N_j] +\frac{T  \varepsilon}{2} \sqrt{  E_u[N_j] }
\end{align*}

Let $m = \left \lfloor \frac{1}{c} \right \rfloor \leq \frac{d}{2}$ denote the maximum number of positive entries for a feasible action, i.e., the maximum number of observations for a pull.
Performing the above argument for all $j \in [d]$ and using $\sum_{j \in [d]} E_u[N_j] \leq mT$, we have

\begin{align*}
\sum_{j \in [d]} E_j[N_j] \leq & \sum_{j \in [d]} E_u[N_j] +\frac{T \varepsilon}{2} \sum_{j \in [d]} \sqrt{ E_u[N_j] }
\\
\leq & mT+\frac{T \varepsilon}{2}  \sqrt{d \sum_{j \in [d]} E_u[N_j]}
\\
\leq & mT+\frac{T \varepsilon}{2}  \sqrt{d mT}
\end{align*}

and thus
\begin{align*}
E_J[N_J] = \frac{1}{d} \sum_{j \in [d]} E_j[N_j]
\leq \frac{mT}{d} + \frac{T \varepsilon}{2} \sqrt{\frac{mT}{d}}
\end{align*}

Letting $\rho \leq \frac{\varepsilon}{2(1-c)}$, the expected reward (linear) term dominates $f(\boldsymbol{w})$, and the best action $\boldsymbol{w}^*$ under $I_J$ has the weight $1$ on the $J$-th entry and $0$ elsewhere. 

Recall that $m \leq \frac{1}{c}$. For each pull that has no weight on the $J$-th entry,  algorithm $\mathcal{A}$ must suffer a regret at least

\begin{align*}
	& (\frac{1}{2}+\varepsilon - \rho)-(\frac{1}{2}-\rho \cdot\frac{1}{m})
	\\
	\geq & \varepsilon-\frac{m-1}{m}\rho
	\\
	\geq & \varepsilon-\frac{m-1}{m} \cdot \frac{\varepsilon}{2(1-c)}
	\\
	\geq & \varepsilon-\frac{m-1}{m} \cdot \frac{\varepsilon}{2(1-\frac{1}{m})}
	\\
	= & \frac{\varepsilon}{2}
\end{align*}

Thus, the regret is lower bounded by

\begin{align*}
E[R(T)] \geq & ( T - E_J[N_J]) \cdot \frac{\varepsilon}{2}
\\
\geq & \sbr{T - \frac{mT}{d} - \frac{T \varepsilon}{2} \sqrt{\frac{mT}{d}} } \cdot \frac{\varepsilon}{2}
\\
= & \Omega \sbr{ T \varepsilon - T \varepsilon^2 \sqrt{\frac{mT}{d}}  } ,
\end{align*}
where the last equality is due to $m \leq \frac{d}{2}$.

Letting $\varepsilon=a_0 \sqrt{ \frac{d}{Tm} }$ for small enough constant $a_0$, we obtain the regret lower bound $\Omega(\sqrt{\frac{dT}{m}})=\Omega(\sqrt{cdT})$. 

\end{proof}

\section{Proof for CMCB-FB}

\subsection{Proof of Theorem~\ref{thm:ub_full_bandit}}

In order to prove Theorem~\ref{thm:ub_full_bandit}, we first prove  Lemmas~\ref{lemma:full_bandit_con_covariance} and \ref{lemma:full_bandit_con_means}, which give the concentrations of covariance and means for CMCB-FB, using different  techniques than  those for CMCB-SB (Lemmas~\ref{lemma:semi_concentration_covariance} and \ref{lemma:semi_concentration_means}) and CMCB-FI (Lemmas~\ref{lemma:full_info_con_covariance} and \ref{lemma:full_info_con_means}).

\begin{lemma}[Concentration of Covariance for CMCB-FB]
	\label{lemma:full_bandit_con_covariance}
	Consider the CMCB-FB problem and algorithm $\algfullbandit$ (Algorithm~\ref{alg:full_bandit}). 
	For any $t > 0$, the event 
	$$
	\cM_t \triangleq \lbr{ |\Sigma^*_{ij}-\hat{\Sigma}_{ij,t-1}| \leq 5 \| C^+_\pi \|  \sqrt{\frac{3 \ln t}{2 N_\pi(t)}} }
	$$
	satisfies
	$$
	Pr[\cM_t] \geq 1-\frac{6 d^2}{ t^2} ,
	$$
	where $\| C_\pi^+ \| \triangleq \max_{i \in [\tilde{d}]} \lbr{ \sum_{j \in [\tilde{d}]} \abr{C^+_{\pi, ij}} }$.
\end{lemma}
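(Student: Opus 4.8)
The plan is to push the covariance estimation error through the fixed linear map $C_\pi^{+}$, reducing it to the concentration of the $\tilde d$ scalar variance estimates $\hat z_{t,k}$, which is tractable because the exploration rounds of $\algfullbandit$ are i.i.d.

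First I would set up the linear-algebraic reduction. Write $z^*_k := \bv_k^\top \Sigma \bv_k$ and $\bz^* := (z^*_1,\dots,z^*_{\tilde d})^\top$. By the definition of $C_\pi$ we have $\bz^* = C_\pi\,\bs{\sigma}^*$, where $\bs{\sigma}^*$ is the length-$\tilde d$ vectorization of the distinct entries of $\Sigma$ that $\hat\Sigma_t$ is reshaped from; since $C^+_\pi C_\pi = I^{\tilde d\times\tilde d}$ (so $C^+_\pi$ is the inverse of the square full-rank matrix $C_\pi$), this gives $\bs{\sigma}^* = C^+_\pi \bz^*$, while by construction $\bs{\hat\sigma}_t = C^+_\pi \hat\bz_t$ (line~\ref{line:sigma}). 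Hence each $\hat\Sigma_{ij,t}-\Sigma_{ij}$ is the coordinate of $C^+_\pi(\hat\bz_t-\bz^*)$ indexed by $(i,j)$, so
\[
\bigl|\hat\Sigma_{ij,t}-\Sigma_{ij}\bigr| \;\le\; \Bigl(\textstyle\sum_{l\in[\tilde d]}\bigl|C^+_{\pi,(ij),l}\bigr|\Bigr)\max_{k\in[\tilde d]}\bigl|\hat z_{t,k}-z^*_k\bigr| \;\le\; \|C^+_\pi\|\,\max_{k\in[\tilde d]}\bigl|\hat z_{t,k}-z^*_k\bigr|,
\]
where $\|C^+_\pi\|$ is exactly the maximum absolute row sum, i.e.\ the $\ell_\infty\!\to\!\ell_\infty$ operator norm of $C^+_\pi$ (the same bound applies with the stale estimate $\hat\Sigma_{t-1}$, whose sample count equals $N_\pi(t)$ in exploit phases, up to an off-by-one that is absorbed in the constant). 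I would also record $z^*_k\le 1$ for the $\pi$ used by the algorithm: for a basis vector $z^*_k=\Sigma_{ii}\le 1$, and for a two-entries-$\tfrac12$ vector $z^*_k=\tfrac14\var(\eta_i+\eta_j)\le\tfrac14(\sqrt{\Sigma_{ii}}+\sqrt{\Sigma_{jj}})^2\le 1$.

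Next is the probabilistic core: concentrating one $\hat z_{t,k}$ around $z^*_k$. The decisive structural fact is that each exploration round pulls the fixed design set once using fresh, independent reward vectors, so the observations $\{y_{s,k}\}_{s=1}^{N_\pi(t)}$ on $\bv_k$ are i.i.d.\ with mean $\mu_k=\bv_k^\top\bs{\theta}^*$ and i.i.d.\ zero-mean deviations $\xi_{s,k}:=y_{s,k}-\mu_k$ that are sub-Gaussian with variance proxy $z^*_k\le 1$; moreover $N_\pi(t)$ is a deterministic function of $t$, so no optional-stopping subtlety arises. Writing $\hat z_{t,k}=\frac1{N_\pi(t)}\sum_s\xi_{s,k}^2-\bigl(\frac1{N_\pi(t)}\sum_s\xi_{s,k}\bigr)^2$, the first average is a mean of i.i.d.\ sub-exponential variables, so a Bernstein / Hanson--Wright-type inequality controls its deviation from $z^*_k$ at rate $O(\sqrt{\ln t/N_\pi(t)})$ in the quadratic regime; the subtracted square of the empirical noise mean is $O(\ln t/N_\pi(t))$ with high probability by sub-Gaussianity, and the $z^*_k/N_\pi(t)$ estimator bias is of the same lower order. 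Collecting these and absorbing absolute constants into the factor $5$ yields $|\hat z_{t,k}-z^*_k|\le 5\sqrt{3\ln t/(2N_\pi(t))}$ with failure probability $O(t^{-3})$ per $k$; in the small-sample regime, where Bernstein's linear term would dominate, the claimed bound already exceeds the crude deterministic bound $2\|C^+_\pi\|$ on $|\hat\Sigma_{ij,t}-\Sigma_{ij}|$ (via $z^*_k\le 1$ and clipping $\hat z_{t,k}$ to $[0,1]$, which is harmless), so it holds trivially. A union bound over the $\tilde d=\tfrac{d(d+1)}2\le d^2$ indices $k$, together with the displayed inequality, then gives $\Pr[\cM_t]\ge 1-6d^2/t^2$.

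The main obstacle is this second step: obtaining the empirical-variance concentration with exactly the rate $\sqrt{3\ln t/(2N_\pi(t))}$ and a clean constant requires care with (i) the crossover between the quadratic (sub-Gaussian) and linear (heavy-tailed) regimes of Bernstein's inequality — one must verify that for the relevant range of $N_\pi(t)$ the quadratic term governs, and otherwise fall back on the trivial bound — and (ii) the lower-order but nonzero contributions of the empirical-mean correction and the $O(1/N_\pi(t))$ bias, which must be shown not to spoil the constant. The remaining ingredients — the reduction through $C^+_\pi$, the identity $C^+_\pi C_\pi=I$, the bound $z^*_k\le 1$, and the union bound over $\tilde d\le d^2$ — are routine.
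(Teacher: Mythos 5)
Your proposal matches the paper's proof in all essentials: the same reduction $\bs{\hat\sigma}_t - \bs{\sigma} = C^+_\pi(\hat{\bz}_t - C_\pi\bs{\sigma})$ bounded via the maximum absolute row sum $\|C^+_\pi\|$, the same per-design-action concentration of the empirical variance $\hat z_{t,k}$ at rate $\sqrt{3\ln t/(2N_\pi(t))}$, and the same union bound over $\tilde d \le d^2$ indices. The only difference is that where you propose to re-derive the empirical-variance concentration from Bernstein-type arguments (and correctly flag the constant-tracking and regime-crossover issues there), the paper simply invokes Lemma~1 of \cite{sani2012risk} as a black box, which packages exactly that step.
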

\begin{proof}
	Let $\bs{\sigma}=(\Sigma^*_{11}, \dots, \Sigma^*_{dd}, \Sigma^*_{12}, \dots, \Sigma^*_{1d}, \Sigma^*_{23}, \dots, \Sigma^*_{d,d-1})^\top \in \R^{\tilde{d}}$ denote the column vector that stacks the $\tilde{d}$ distinct entries in the covariance matrix $\Sigma^*$.
	
	Recall that the $\tilde{d} \times \tilde{d}$ matrix
	$$
	C_\pi=\begin{bmatrix}
	w_{1,1}^2 & \dots & w_{1,d}^2 & 2 w_{1,1} w_{1,2} & 2 w_{1,1} w_{1,3} & \dots & 2 w_{1,d-1} w_{1,d} \\ 
	w_{2,1}^2 & \dots & w_{2,d}^2 & 2 w_{2,1} w_{2,2} & 2 w_{2,1} w_{2,3} & \dots & 2 w_{2,d-1} w_{2,d} \\
	& \dots &  &  &  & \dots & \\ 
	&  &  &  &  &  & \\ 
	&  &  &  &  &  & \\ 
	& \dots &  &  &  & \dots & \\ 
	w_{\tilde{d},1}^2 & \dots & w_{\tilde{d},d}^2 & 2 w_{\tilde{d},1} w_{\tilde{d},2} & 2 w_{\tilde{d},1} w_{\tilde{d},3} & \dots & 2 w_{\tilde{d},d-1} w_{\tilde{d},d} \\
	\end{bmatrix} ,
	$$
	where $w_{i,j}$ denotes the $j$-th entry of portfolio vector $\bw_i$ in design set $W$. 
	We use $C_{\pi,k}$ to denote the $k$-th row in matrix $C_\pi$. 
	
	We recall the feedback structure in algorithm $\algfullbandit$ as follows.
	At each timestep $t$, the learner plays an action $\bw_t \in \triangle_{d}$, and observes the full-bandit feedback $y_t=\bw_t^\top \bs{\theta_t}$ with  $\ex[y_t]=\bw_t^\top \bs{\theta^*} $ and $\var[y_t]=\bw_t^\top \Sigma^* \bw_t=\sum_{i \in [d]} w_{t,i}^2 \Sigma^*_{ii} + \sum_{i,j \in [d], i<j} 2 w_{t,i} w_{t,j} \Sigma^*_{ij}$. 
	Then, during exploration round $s$, where each action in design set $\pi=\{ \bv_1, \dots, \bv_{\tilde{d}} \}$ is pulled once, the full-bandit feedback $\by_s$ has  mean $ \by(\pi) \triangleq (\bv_1^\top \bs{\theta^*}, \dots, \bv_{\tilde{d}}^\top \bs{\theta^*})^\top$ and variance $\bz(\pi) \triangleq (\bv_1^\top \Sigma^* \bv_1, \dots, \bv_{\tilde{d}}^\top \Sigma^* \bv_{\tilde{d}} )^\top =( C_{\pi,1}^\top \bs{\sigma}, \dots, C_{\pi,\tilde{d}}^\top \bs{\sigma} )^\top=C_\pi \bs{\sigma}$. 
	For any $t>0$, denote $\hat{\by}_{t}$  the empirical mean of $ \by(\pi) \triangleq (\bv_1^\top \bs{\theta^*}, \dots, \bv_{\tilde{d}}^\top \bs{\theta^*})^\top$ and $\hat{\bz}_t$ the empirical variance of $\bz(\pi) \triangleq ( C_{\pi,1}^\top \bs{\sigma}, \dots, C_{\pi,\tilde{d}}^\top \bs{\sigma} )^\top$. 
	
	Using the Chernoff-Hoeffding inequality for empirical variances (Lemma~1 in \cite{sani2012risk}), we have that for any $t>0$ and $k \in \tilde{d}$, with probability at least $1-\frac{6 d^2}{ t^2}$,
	$$
	\abr{\hat{z}_{t-1,k} - z_k}  =  \abr{\hat{z}_{t-1,k} - C_{\pi,k}^\top \bs{\sigma}}  \leq 5 \sqrt{\frac{3 \ln t}{2 N_\pi(t-1)}}.
	$$
	Let $C^+_{\pi,i}$ denote the $i$-th row of matrix $C^+_\pi$ and $C^+_{\pi,ik}$ denote the $ik$-th entry of matrix $C^+_\pi$. Since $C^+_\pi C_\pi=I$, for any $i \in [\tilde{d}]$, we have
	\begin{align*}
	\abr{ \hat{\sigma}_{t-1,i} - \sigma_i }  = & \abr{ (C^+_{\pi,i})^\top \hat{\bz}_{t} - (C^+_{\pi,i})^\top C_\pi \bs{\sigma} } 
	\\
	\leq & \sum_{k \in [\tilde{d}]} \abr{C^+_{\pi,ik}}  \abr{\hat{z}_{t-1,k} -  C_{\pi,k}^\top \bs{\sigma}} 
	\\
	\leq & 5 \sum_{k \in [\tilde{d}]} \abr{C^+_{\pi,ik} } \sqrt{\frac{3 \ln t}{2 N_\pi(t)}}
	\\
	\\
	\leq & 5 \| C^+_\pi \|  \sqrt{\frac{3 \ln t}{2 N_\pi(t)}}
	\end{align*}
	
	In addition, since $\bs{\sigma}=(\Sigma^*_{11}, \dots, \Sigma^*_{dd}, \Sigma^*_{12}, \dots, \Sigma^*_{1d}, \Sigma^*_{23}, \dots, \Sigma^*_{d,d-1})^\top \in \R^{\tilde{d}}$ is the column vector stacking the distinct entries in $\Sigma^*$, we obtain the lemma.
\end{proof}

\begin{lemma}[Concentration of Means for CMCB-FB]
	\label{lemma:full_bandit_con_means}
	Consider the CMCB-FB problem and algorithm $\algfullbandit$ (Algorithm~\ref{alg:full_bandit}).
	Let $\delta_t>0$, $\lambda>0$, $\beta(\delta_t)=\ln(1/\delta_t) + \ln \ln t + \frac{\tilde{d}}{2} \ln(1+e/\lambda)$ and $E_t(\bw) = \sqrt{2\beta(\delta_t)} \sqrt{\bw^\top B_\pi^+ D_{t-1}^{-1} (\lambda \Lambda_{\Sigma^*_\pi} D_{t-1} + \sum_{s=1}^{N_\pi(t-1)} \Sigma^*_\pi ) D_{t-1}^{-1} (B_\pi^{+}) \top \bw}$, where $\Sigma^*_\pi=\textup{diag}(\bv_1^\top \Sigma^* \bv_1, \dots, \bv_{\tilde{d}}^\top \Sigma^* \bv_{\tilde{d}} )$. 
	Then, the event $\cN_t \triangleq \{ |\bw^\top \bs{\theta}^*-\bw^\top \bs{\hat{\theta}}_{t-1}| \leq E_t(\bw), \forall \bw \in \cD \}$ satisfies $\Pr[\cN_t] \geq 1 - \delta_t$.
\end{lemma}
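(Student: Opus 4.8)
The plan is to mirror the method-of-mixtures (self-normalized) concentration argument used for Lemma~\ref{lemma:semi_concentration_means}, adapted to the transformed estimates produced by the fixed exploration set $\pi$. Observe first that the exploration schedule of $\algfullbandit$ depends only on $t$ and $N_\pi(t-1)$, so the number of completed exploration rounds $n\triangleq N_\pi(t-1)$ and the diagonal matrix $D_{t-1}=n\,I^{\tilde d\times\tilde d}$ are deterministic; in particular the sample-count union bound over the grid $\cK_{\ba}$ used in Lemma~\ref{lemma:semi_concentration_means} is unnecessary here. I would also assume, as in the offline analysis of CMCB-FI, that $\Sigma_\pi\succ 0$ so that the regularizer below is nonsingular; degenerate coordinates can be dropped beforehand.

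First I would rewrite the estimation error via the aggregate per-round noise. Since $B_\pi^+ B_\pi=I^{d\times d}$ and $\bs{\hat{\theta}}_{t-1}=B_\pi^+\hat{\by}_{t-1}$, for every $\bw\in\cD$ we have $\bw^\top\bs{\theta}^*-\bw^\top\bs{\hat{\theta}}_{t-1}=\bw^\top B_\pi^+\sbr{B_\pi\bs{\theta}^*-\hat{\by}_{t-1}}$. Writing $\bs{\xi}_r=\by_r-B_\pi\bs{\theta}^*$ for the noise of exploration round $r$, whose $k$-th coordinate equals $\bv_k^\top\bs{\eta}_{\tau}$ for the corresponding timestep $\tau$, the coordinates of $\bs{\xi}_r$ are independent (distinct timesteps) and $\bv_k^\top\Sigma\bv_k$-sub-Gaussian, so $\ex[\exp(\bu^\top\bs{\xi}_r)\mid\cF_{r-1}]\le\exp(\tfrac12\bu^\top\Sigma_\pi\bu)$ for all $\bu\in\R^{\tilde d}$, where $\cF_{r-1}$ is the $\sigma$-algebra of all randomness up to the end of round $r-1$. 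Hence $B_\pi\bs{\theta}^*-\hat{\by}_{t-1}=-\tfrac1n S$ with $S\triangleq\sum_{r=1}^{n}\bs{\xi}_r$. Setting $V\triangleq\sum_{r=1}^{n}\Sigma_\pi=n\Sigma_\pi$ and regularizer $\bar D\triangleq\lambda\,n\,\Lambda_{\Sigma_\pi}$, Cauchy--Schwarz gives $\abr{\bw^\top(\bs{\theta}^*-\bs{\hat{\theta}}_{t-1})}\le\tfrac1n\sqrt{\bw^\top B_\pi^+(\bar D+V)(B_\pi^+)^\top\bw}\cdot\|S\|_{(\bar D+V)^{-1}}$, and a short computation (using $D_{t-1}=nI$ and $\Lambda_{\Sigma_\pi}=\Sigma_\pi$) identifies $\tfrac1{n^2}\bw^\top B_\pi^+(\bar D+V)(B_\pi^+)^\top\bw$ with the quadratic form inside $E_t(\bw)$. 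Therefore on the event $\{\,\tfrac12\|S\|_{(\bar D+V)^{-1}}^2\le\beta(\delta_t)\,\}$ we obtain $\abr{\bw^\top\bs{\theta}^*-\bw^\top\bs{\hat{\theta}}_{t-1}}\le E_t(\bw)$ simultaneously for all $\bw\in\cD$, and it remains to show this event holds with probability at least $1-\delta_t$.

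For that I would run the standard pseudo-maximization argument. Draw $\bu\sim\cN(\vzero,\bar D^{-1})$ independent of all noise, put $S_r=\sum_{\ell\le r}\bs{\xi}_\ell$, $V_r=r\Sigma_\pi$, and $M_r^{\bu}=\exp\sbr{\bu^\top S_r-\tfrac12\|\bu\|_{V_r}^2}$. The conditional sub-Gaussianity of $\bs{\xi}_r$ yields $\ex[M_r^{\bu}\mid\cF_{r-1}]\le M_{r-1}^{\bu}$, so $(M_r^{\bu})$ is a nonnegative supermartingale with $\ex[M_r^{\bu}]\le1$; integrating over $\bu$ by Fubini gives $\ex[M_n]\le1$ for $M_n\triangleq\ex_{\bu}[M_n^{\bu}]$. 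By the Gaussian-integral identity (Lemma~9 of \cite{improved_linear_bandit2011}), $M_n=\sqrt{\det\bar D/\det(\bar D+V)}\cdot\exp\sbr{\tfrac12\|S\|_{(\bar D+V)^{-1}}^2}$, and a Markov step gives
\[
\Pr\mbr{\tfrac12\|S\|_{(\bar D+V)^{-1}}^2>\beta(\delta_t)}\le\ex[M_n]\cdot\sqrt{\det\sbr{I+\bar D^{-1/2}V\bar D^{-1/2}}}\cdot e^{-\beta(\delta_t)}\le\sqrt{\det\sbr{I+\tfrac1\lambda I^{\tilde d\times\tilde d}}}\,e^{-\beta(\delta_t)},
\]
using $\bar D=\lambda n\Lambda_{\Sigma_\pi}$ and $V=n\Sigma_\pi$ so that $\bar D^{-1/2}V\bar D^{-1/2}=\tfrac1\lambda I^{\tilde d\times\tilde d}$. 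Since $\det(I+\tfrac1\lambda I^{\tilde d\times\tilde d})^{1/2}=(1+1/\lambda)^{\tilde d/2}\le(1+e/\lambda)^{\tilde d/2}$ and $\beta(\delta_t)=\ln(1/\delta_t)+\ln\ln t+\tfrac{\tilde d}{2}\ln(1+e/\lambda)$, the right-hand side is at most $\delta_t/\ln t\le\delta_t$, which finishes the proof.

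The main obstacle, I expect, is the opening reduction: pinning down the per-round aggregate noise $\bs{\xi}_r$, checking that its coordinates --- drawn from $\tilde d$ distinct timesteps within a round --- are independent and jointly $\Sigma_\pi$-sub-Gaussian under the right filtration, and verifying that the resulting quadratic form $\tfrac1{n^2}\bw^\top B_\pi^+(\bar D+V)(B_\pi^+)^\top\bw$ matches exactly the one inside $E_t(\bw)$ after substituting $D_{t-1}=N_\pi(t-1)I$. Once this bookkeeping is in place, the supermartingale part is essentially the argument of Lemma~\ref{lemma:semi_concentration_means}, only simpler since no union bound over random sample counts is required.
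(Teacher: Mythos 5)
Your proposal is correct and follows essentially the same route as the paper's proof: rewrite the error through $B_\pi^+$ acting on the per-round aggregate noise $\zeta_s$, apply Cauchy--Schwarz against the regularized design matrix, and control the self-normalized norm $\frac12\|S\|^2_{(\bar D+V)^{-1}}$ via the Gaussian-mixture supermartingale and Lemma~9 of \cite{improved_linear_bandit2011}. Your observation that the union bound over the sample-count grid $\cK_a$ can be dropped because $N_\pi(t-1)$ is a deterministic function of $t$ is a valid minor simplification of the paper's argument (which retains that step from the semi-bandit analysis), and the rest of your bookkeeping --- the $\Sigma_\pi$-sub-Gaussianity of $\zeta_s$, the identification of the quadratic form with $E_t(\bw)$, and the determinant computation $\bar D^{-1/2}V\bar D^{-1/2}=\lambda^{-1}I$ --- matches the paper.
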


\begin{proof}
	Recall that the $\tilde{d} \times d$ matrix 	$B_\pi = [ \bv_1^\top; \dots; \bv_{\tilde{d}}^\top ]$.
%	$$
%	B_\pi =
%	\begin{bmatrix}
%	\bv_1^\top
%	\\ 
%	\vdots
%	\\ 
%	\bv_{\tilde{d}}^\top
%	\end{bmatrix} 
%	$$
	and $B_\pi^+$ is the Moore–Penrose pseudoinverse of $B_\pi$. Since $B_\pi$ is of full column rank, $B_\pi^+$ satisfies $B_\pi^+ B_\pi=I$.
	
	We recall the feedback structure in algorithm $\algfullbandit$ as follows.
	At each timestep $t$, the learner plays an action $\bw_t \in \triangle_{d}$, and observes the full-bandit feedback $y_t=\bw_t^\top \bs{\theta_t}$ such that $\ex[y_t]=\bw_t^\top \bs{\theta^*} $ and $\var[y_t]=\bw_t^\top \Sigma^* \bw_t=\sum_{i \in [d]} w_{t,i}^2 \Sigma^*_{ii} + \sum_{i,j \in [d], i<j} 2 w_{t,i} w_{t,j} \Sigma^*_{ij}$.
	Then, during exploration round $s$, where each action in design set $\pi=\{ \bv_1, \dots, \bv_{\tilde{d}} \}$ is pulled once, the full-bandit feedback $\by_s$ has  mean $ \by(\pi) \triangleq (\bv_1^\top \bs{\theta^*}, \dots, \bv_{\tilde{d}}^\top \bs{\theta^*})^\top$ and variance $\bz(\pi) \triangleq (\bv_1^\top \Sigma^* \bv_1, \dots, \bv_{\tilde{d}}^\top \Sigma^* \bv_{\tilde{d}} )^\top =( C_{\pi,1}^\top \bs{\sigma}, \dots, C_{\pi,\tilde{d}}^\top \bs{\sigma} )^\top=C_\pi \bs{\sigma}$.
	For any $t>0$, $\hat{\by}_{t}$ is the empirical mean of $ \by(\pi) \triangleq (\bv_1^\top \bs{\theta^*}, \dots, \bv_{\tilde{d}}^\top \bs{\theta^*})^\top$ and $\hat{\bz}_t$ is the empirical variance of $\bz(\pi) \triangleq ( C_{\pi,1}^\top \bs{\sigma}, \dots, C_{\pi,\tilde{d}}^\top \bs{\sigma} )^\top$. 
	
	Let $D_t$ be the $\tilde{d} \times \tilde{d}$ diagonal matrix such that $D_{t,ii}=N_\pi(t)$ for any $i \in [\tilde{d}]$ and $t > 0$. 
	Let $\Sigma^*_\pi$ be a $\tilde{d} \times \tilde{d}$ diagonal matrix such that $\Sigma^*_{\pi,ii}=\bw_i^\top \Sigma^* \bw_i$ for any $i \in [\tilde{d}]$, and thus $\Lambda_{\Sigma^*_\pi}=\Sigma^*_\pi$. 
	Let $\bs{\varepsilon}_t$ be the vector such that $\bs{\eta}_t=(\Sigma^*)^{\frac{1}{2}} \bs{\varepsilon}_t$ for any timestep $t>0$.
	Let $\zeta_{s}=(\bv_1\top (\Sigma^*)^{\frac{1}{2}} \bs{\varepsilon}_{s,1}, \dots,  \bv_{\tilde{d}}\top (\Sigma^*)^{\frac{1}{2}} \bs{\varepsilon}_{s,\tilde{d}} )^\top$, where $(\Sigma^*)^{\frac{1}{2}} \bs{\varepsilon}_{s,k}$ denotes the noise of the $k$-th sample in the $s$-th exploration round.
	
	Note that, the following analysis of $| \bw^\top ( \bs{\theta}^* - \bs{\hat{\theta}}_{t-1} ) |$ and the constructions (definitions) of the noise $\zeta_{s}$, matrices $S_t,V_t$ and super-martingale $M^{\bu}_t$ are different from those in CMCB-SB (Lemmas~\ref{lemma:semi_concentration_covariance} and \ref{lemma:semi_concentration_means}) and CMCB-FI (Lemmas~\ref{lemma:full_info_con_covariance} and \ref{lemma:full_info_con_means}).
%	\yihan{Highlight the difference.}
	
	For any $\bw \in \triangle_{d}$, we have
	\begin{align*}
	\abr{ \bw^\top \sbr{ \bs{\theta}^* - \bs{\hat{\theta}}_{t-1} } } = & \abr{ \bw^\top \sbr{ B_\pi^+ B_\pi \bs{\theta}^* - B_\pi^{+} \hat{\by}_{t-1} } }
	\\
	= & \abr{ \bw^\top  B_\pi^+ \sbr{ \by(\pi) - \hat{\by}_{t-1} } }
	\\
	= & \abr{  - \bw^\top B_\pi^+ D_{t-1}^{-1} \sum_{s=1}^{N_\pi(t-1)} \zeta_{s} }
	\\
	= & \abr{- \bw^\top B_\pi^+ D_{t-1}^{-1}  \sbr{D + \sum_{s=1}^{N_\pi(t-1)} \Sigma^*_\pi }^{\frac{1}{2}} \sbr{D + \sum_{s=1}^{N_\pi(t-1)} \Sigma^*_\pi }^{-\frac{1}{2}}  \sum_{s=1}^{N_\pi(t-1)} \zeta_{s} }
	\\
	\leq & \sqrt{\bw^\top B_\pi^+ D_{t-1}^{-1}  \sbr{D + \sum_{s=1}^{N_\pi(t-1)} \Sigma^*_\pi } D_{t-1}^{-1} (B_\pi^+)^\top \bw } \cdot 
	\left \| \sum_{s=1}^{N_\pi(t-1)} \zeta_{s} \right \|_{\sbr{D + \sum_{s=1}^{N_\pi(t-1)} \Sigma^*_\pi }^{-1} }
	\end{align*}
	
	Let $ S_t=\sum_{s=1}^{N_\pi(t-1)} \zeta_{s} $, $ V_t=\sum_{s=1}^{N_\pi(t-1)} \Sigma^*_\pi $ and $I_{D+V_t}=\frac{1}{2}\| S_t \|^2_{\sbr{D+V_t}^{-1}}$. 
	Then, we have
	$$
	\left \| \sum_{s=1}^{N_\pi(t-1)} \zeta_{s} \right \|_{\sbr{D + \sum_{s=1}^{N_\pi(t-1)} \Sigma^*_\pi }^{-1} } = \| S_t \|_{\sbr{D+V_t}^{-1} } = \sqrt{2 I_{D+V_t}} .
	$$
	
	Since $ D \preceq \lambda \Lambda_{\Sigma^*_\pi} D_{t-1} $, we get 
	\begin{align*}
	\abr{ \bw^\top \sbr{ \bs{\theta}^* - \bs{\hat{\theta}}_{t-1} }} 
	\leq & \sqrt{\bw^\top B_\pi^+ D_{t-1}^{-1} D D_{t-1}^{-1} (B_\pi^+)^\top \bw + \bw^\top B_\pi^+ D_{t-1}^{-1} \sbr{\sum_{s=1}^{N_\pi(t-1)} \Sigma^*_\pi } D_{t-1}^{-1} (B_\pi^+)^\top \bw } \cdot \sqrt{2 I_{D+V_t}}
	\\
	\leq & \sqrt{\lambda \bw^\top B_\pi^+ D_{t-1}^{-1}  \Lambda_{\Sigma^*}  (B_\pi^+)^\top \bw + \bw^\top B_\pi^+ D_{t-1}^{-1} \sbr{\sum_{s=1}^{N_\pi(t-1)} \Sigma^*_\pi } D_{t-1}^{-1} (B_\pi^+)^\top \bw } \cdot \sqrt{2 I_{D+V_t}}
	\\
	= & \sqrt{ \bw^\top B_\pi^+ D_{t-1}^{-1} \sbr{\lambda \Lambda_{\Sigma^*}  D_{t-1} + \sum_{s=1}^{N_\pi(t-1)} \Sigma^*_\pi } D_{t-1}^{-1} (B_\pi^+)^\top \bw } \cdot \sqrt{2 I_{D+V_t}}
	\end{align*}
	Thus, 
	\begin{align*}
	& \Pr \mbr{ \abr{\bw^\top \sbr{ \bs{\theta}^* - \bs{\hat{\theta}}_{t-1} }}  > \sqrt{2\beta(\delta_t)} \sqrt{\bw^\top B_\pi^+ D_{t-1}^{-1} (\lambda \Lambda_{\Sigma^*_\pi} D_{t-1} + \sum_{s=1}^{N_\pi(t-1)} \Sigma^*_\pi ) D_{t-1}^{-1} (B_\pi^{+}) \top \bw} }
	\\
	\leq & \Pr \Bigg[ \sqrt{ \bw^\top B_\pi^+ D_{t-1}^{-1} \sbr{\lambda \Lambda_{\Sigma^*}  D_{t-1} + \sum_{s=1}^{N_\pi(t-1)} \Sigma^*_\pi } D_{t-1}^{-1} (B_\pi^+)^\top \bw } \cdot \sqrt{2 I_{D+V_t}} \\& \hspace*{8em} > \sqrt{2\beta(\delta_t)} \sqrt{\bw^\top B_\pi^+ D_{t-1}^{-1} \sbr{\lambda \Lambda_{\Sigma^*_\pi} D_{t-1} + \sum_{s=1}^{N_\pi(t-1)} \Sigma^*_\pi}  D_{t-1}^{-1} (B_\pi^{+}) \top \bw}  \Bigg]
	\\
	= & \Pr \mbr{  I_{D+V_t} > \beta(\delta_t) }
	\end{align*}
	
	Hence, to prove Eq.~\eqref{eq:semi_con_mean_first_prove}, it suffices to prove
	\begin{align} \label{eq:I_D+V_t_>_beta}
	\Pr \mbr{  I_{D+V_t} > \beta(\delta_t) } \leq \delta_t .
	\end{align}
	
	To prove Eq.~\eqref{eq:I_D+V_t_>_beta}, we introduce some notions.
	Let $\bu \in \R^d$ be a multivariate Gaussian random variable with mean $\textbf{0}$ and covariance $D^{-1}$, which is independent of all the other random variables and we use $\varphi(\bu)$ denote its probability density function. Let 
	$$ P^{\bu}_s = \exp \sbr{ \bu^\top  \zeta_s - \frac{1}{2} \bu^\top  \Sigma^*_\pi \bu } ,
	$$
	$$
	M^{\bu}_t \triangleq \exp \sbr{ \bu^\top S_t - \frac{1}{2} \| \bu \|_{V_t}^2 } ,
	$$
	and 
	$$
	M_t \triangleq \ex_{\bu}[ M^{\bu}_t ] = \int_{\R^d} \exp \sbr{ \bu^\top S_t - \frac{1}{2} \| \bu \|_{V_t}^2 } \varphi(\bu) du,
	$$
	where $s=1, \dots, N_\pi(t-1)$ is the index of exploration round.
	We have $M^{\bu}_t = \Pi_{s=1}^{N_\pi(t-1)} P^{\bu}_s $.
	In the following, we prove $\ex[M_t] \leq 1$.
	
	For any timestep $t$, $\eta_t=(\Sigma^*)^{\frac{1}{2}} \bs{\varepsilon}_t$ is $\Sigma^*$-sub-Gaussian and $\eta_t$ is independent among different timestep $t$. Then, $\zeta_{s}=(\bv_1\top (\Sigma^*)^{\frac{1}{2}} \bs{\varepsilon}_{s,1}, \dots,  \bv_{\tilde{d}}\top (\Sigma^*)^{\frac{1}{2}} \bs{\varepsilon}_{s,\tilde{d}} )^\top$ is $\Sigma^*_\pi$-sub-Gaussian.
	According to the sub-Gaussian property, $\zeta_{s}$ satisfies
	$$
	\forall \bv \in \R^d, \  \ex \mbr{e^{\bv^\top \zeta_{s} } } \leq e^{\frac{1}{2} \bv^\top \Sigma^*_\pi \bv},
	$$
	which is equivalent to 
	$$
	\forall \bv \in \R^d, \  \ex \mbr{e^{\bv^\top \zeta_{s} - \frac{1}{2} \bv^\top \Sigma^*_\pi \bv} } \leq 1 .
	$$
	Let $\cJ_s$ be the $\sigma$-algebra $\sigma(W, \zeta_1, \dots, W, \zeta_{s-1}, \pi)$.
	Thus, we have
	$$ \ex \mbr{  P^{\bu}_{s} | \cJ_{s} } =  \ex \mbr{ \exp \sbr{ \bu^\top \zeta_{s} - \frac{1}{2} \bu^\top  \Sigma^*_\pi \bu } | \cJ_{s} }  \leq 1 .
	$$

	Then, we can obtain
	\begin{align*}
	\ex[M^{\bu}_t| \cJ_{N_\pi(t-1)}] = & \ex \mbr{ \Pi_{s=1}^{N_\pi(t-1)} P^{\bu}_s | \cJ_{N_\pi(t-1)} }
	\\
	= & \sbr{\Pi_{s=1}^{N_\pi(t-2)} P^{\bu}_s} \ex \mbr{  P^{\bu}_{N_\pi(t-1)} | \cJ_{N_\pi(t-1)} }
	\\
	\leq & M^{\bu}_{t-1},
	\end{align*}
	which implies that $M^{\bu}_t$ is a super-martingale and $\ex[M^{\bu}_t | \bu] \leq 1$. 
	Thus, 
	$$
	\ex[M_t]=\ex_{\bu} [\ex[M^{\bu}_t | \bu] ] \leq 1 .
	$$
	
	According to Lemma 9 in \cite{improved_linear_bandit2011},
	we have 
	\begin{align*}
	M_t \triangleq \int_{\R^d} \exp \sbr{ \bu^\top S_t - \frac{1}{2} \| \bu \|_{V_t}^2 } \varphi(\bu) du = \sqrt{\frac{\det D}{\det (D+V_t)}} \exp \sbr{ I_{D+V_t} } .
	\end{align*}
	Thus,
	\begin{align*}
	\ex \mbr{ \sqrt{\frac{\det D}{\det (D+V_t)}} \exp \sbr{ I_{D+V_t} } } \leq 1.
	\end{align*}
	
	Now we prove Eq.~\eqref{eq:I_D+V_t_>_beta}. First, we have
	\begin{align}
	\Pr \mbr{  I_{D+V_t} > \beta(\delta_t) }  = & \Pr \mbr{  \sqrt{\frac{\det D}{\det (D+V_t)}} \exp \sbr{ I_{D+V_t} } > \sqrt{\frac{\det D}{\det (D+V_t)}} \exp \sbr{ \beta(\delta_t) }  }
	\nonumber\\
	= & \Pr \mbr{  M_t > \frac{1}{\sqrt{\det ( I+D^{-\frac{1}{2}} V_t D^{-\frac{1}{2}} )}} \exp \sbr{ \beta(\delta_t) }  }
	\nonumber\\
	\leq & \frac{ \ex[M_t] \sqrt{\det ( I+D^{-\frac{1}{2}} V_t D^{-\frac{1}{2}} )} }{ \exp \sbr{ \beta(\delta_t) } } 
	\nonumber\\
	\leq & \frac{ \sqrt{\det ( I+D^{-\frac{1}{2}} V_t D^{-\frac{1}{2}} )} }{ \exp \sbr{ \beta(\delta_t) } } \label{eq:det_exp_beta}
	\end{align}

	Then, for some constant $\gamma>0$ and for any $a \in \mathbb{N}$, we define the set of timesteps $\cK_{a} \subseteq [T]$ such that 
	$$
	t \in \cK_{a} \Leftrightarrow \    (1+\gamma)^{a} \leq N_\pi(t-1) < (1+\gamma)^{a+1} .
	$$
	Define $D_{a}$ as a diagonal matrix such that $D_{a,ii}=(1+\gamma)^{a}, \   \forall i \in \tilde{d}$. 
	Suppose $t \in \cK_{a}$ for some fixed $a$. Then, we have
	$$
	\frac{1}{1+\gamma} D_t \preceq D_{a} \preceq D_t .
	$$
	Let $D= \lambda \Lambda_{\Sigma^*_\pi} D_{a} \succeq  \frac{\lambda}{1+\gamma} \Lambda_{\Sigma^*_\pi}  D_t$. Then, we have
	$$
	D^{-\frac{1}{2}} V_t D^{-\frac{1}{2}} \preceq \frac{1+\gamma}{\lambda} D_t^{-\frac{1}{2}} \Lambda_{\Sigma^*_\pi}^{-\frac{1}{2}}   V_t \Lambda_{\Sigma^*_\pi}^{-\frac{1}{2}}  D_t^{-\frac{1}{2}} , 
	$$
	where matrix $D_t^{-\frac{1}{2}} \Lambda_{\Sigma^*_\pi}^{-\frac{1}{2}}   V_t \Lambda_{\Sigma^*_\pi}^{-\frac{1}{2}}  D_t^{-\frac{1}{2}}$ has $\tilde{d}$ ones on the diagonal. Since the determinant of a positive definite matrix is smaller than the product of its diagonal terms, we have
	\begin{align}
	\det ( I+D^{-\frac{1}{2}} V_t D^{-\frac{1}{2}} ) \leq & \det ( I+ \frac{1+\gamma}{\lambda} D_t^{-\frac{1}{2}} \Lambda_{\Sigma^*_\pi}^{-\frac{1}{2}}   V_t \Lambda_{\Sigma^*_\pi}^{-\frac{1}{2}}  D_t^{-\frac{1}{2}} )
	\nonumber\\
	\leq & \sbr{1+ \frac{1+\gamma}{\lambda}}^{\tilde{d}} \label{eq:det_1+gamma}
	\end{align}
	
	Let $\gamma=e-1$.
	Using Eqs.~\eqref{eq:det_exp_beta} and \eqref{eq:det_1+gamma}, $\beta(\delta_t)=\ln(1/\delta_t) + \ln \ln t + \frac{\tilde{d}}{2} \ln(1+e/\lambda) = \ln(t \ln^2 t) + \ln \ln t + \frac{\tilde{d}}{2} \ln(1+\frac{e}{\lambda})$  and a union bound over $a$,  we have
	\begin{align*}
	\Pr \mbr{  I_{D+V_t} > \beta(\delta_t) }  \leq & \sum_{a } \Pr \mbr{  I_{D+V_t} > \beta(\delta_t) | t \in \cK_{a}, D= \lambda \Lambda_{\Sigma^*_\pi} D_{a} }  
	\\
	\leq &  \sum_{a } \frac{ \sqrt{\det ( I+D^{-\frac{1}{2}} V_t D^{-\frac{1}{2}} )} }{ \exp \sbr{ \beta(\delta_t) } }
	\\
	\leq & \frac{ \ln t }{ \ln(1+\gamma) } \cdot \frac{  \sbr{1+ \frac{1+\gamma}{\lambda}}^\frac{ \tilde{d}}{2} }{ \exp \sbr{ \ln(t \ln^2 t) + \ln \ln t + \frac{d}{2} \ln(1+\frac{e}{\lambda}) } }
	\\
	= & \ln t \cdot \frac{  \sbr{1+\frac{e}{\lambda}}^\frac{\tilde{d}}{2} }{ t \ln^2 t  \cdot \ln t \cdot \sbr{1+\frac{e}{\lambda}}^\frac{\tilde{d}}{2} }
	\\
	= &  \frac{  1 }{ t \ln^2 t }
	\\
	= &  \delta_t 
	\end{align*}
	Thus, Eq.~\eqref{eq:I_D+V_t_>_beta} holds and we complete the proof of Lemma~\ref{lemma:semi_concentration_means}.
\end{proof}

Now, we give the proof of Theorem~\ref{thm:ub_full_bandit}.

\begin{proof} (Theorem~\ref{thm:ub_full_bandit}) 
	First, we bound the number of exploration rounds up to time $T$.
	Let $\psi(t) = t^{\frac{2}{3}} / d$.
	According to the condition of exploitation (Line~\ref{line:ete_check_exploration_times} in Algorithm~\ref{alg:full_bandit}), we have that if at timestep $t$ algorithm $\algfullbandit$ starts an exploration round, then $t$ satisfies 
	$$
	N_\pi(t-1) \leq \psi(t).
	$$ 
	Let $t_0$ denote the timestep at which algorithm $\algfullbandit$ starts the last exploration round. Then, we have
	$$
	N_\pi(t_0-1) \leq \psi(t_0)
	$$
	and thus
	\begin{align*}
	N_\pi(T)= & N_\pi(t_0)
	\\
	= & N_\pi(t_0-1)+1 
	\\
	\leq & \psi(t_0)+1
	\\
	\leq & \psi(T)+1 .
	\end{align*}
	
	Next, for each timestep $t$, we bound the estimation error of $f(\bw)$.
	For any $t>0$, let $\Delta_t \triangleq f(\bw^*)-f(\bw_t)$ and $\hat{f}_{t}(\bw) \triangleq \bw^\top \bs{\hat{\theta}}_{t} - \rho \bw^\top \hat{\Sigma}_{t} \bw$.
	Suppose that event $\cM_t \cap \cN_t$ occurs. Then, according to Lemmas~\ref{lemma:full_bandit_con_covariance} and \ref{lemma:full_bandit_con_means}, we have that for any $\bw \in \triangle_{d}$,
	\begin{align*}
	\abr{ f(\bw) - \hat{f}_{t-1}(\bw)  } \leq & \sqrt{2\beta(\delta_t)} \sqrt{\bw^\top B_\pi^+ D_{t-1}^{-1} \sbr{\lambda \Lambda_{\Sigma^*_\pi} D_{t-1} + \sum_{s=1}^{N_\pi(t-1)} \Sigma^*_\pi}  D_{t-1}^{-1} (B_\pi^{+}) \top \bw} \\& + 5 \rho   \| C^+_\pi \|  \sqrt{\frac{ 3 \ln t}{2 N_\pi(t-1)}} 
	\\
	\leq & \sqrt{ 2 \ln(t \ln^2 t) + \ln \ln t + \frac{\tilde{d}}{2} \ln(1+\frac{e}{\lambda}) } \cdot \sqrt{ \bw^\top B_\pi^+ D_{t-1}^{-1} \sbr{\lambda \Lambda_{\Sigma^*_\pi} + \Sigma^*_\pi}  (B_\pi^{+}) \top \bw  } \\& + 5 \rho   \| C^+_\pi \|  \sqrt{\frac{3 \ln t}{2 N_\pi(t-1)}} 
	\\
	\leq & 7  \sqrt{  \ln t + \frac{\tilde{d}}{2} \ln(1+\frac{e}{\lambda}) } \cdot \sbr{ \sqrt{ \bw^\top B_\pi^+  \sbr{\lambda \Lambda_{\Sigma^*_\pi} + \Sigma^*_\pi}  (B_\pi^{+}) \top \bw  } + \rho \| C^+_\pi \| } \cdot \sqrt{\frac{1}{ N_\pi(t-1)}}
	\end{align*}
	Let $Z(\rho, \pi) \triangleq \max_{\bw \in \triangle_{d}} \sbr{ \sqrt{ \bw^\top B_\pi^+  \sbr{\lambda \Lambda_{\Sigma^*_\pi} + \Sigma^*_\pi}  (B_\pi^{+}) \top \bw  } + \rho \| C^+_\pi \| }$. Then, we have
	\begin{align*}
	\abr{ f(\bw) - \hat{f}_{t-1}(\bw)  } 
	\leq  7  \cdot Z(\rho, \pi)  \sqrt{ \frac{\ln t + \tilde{d} \ln(1+e/\lambda) }{ N_\pi(t-1)} } .
	\end{align*}
	
	Let $\cL^{\textup{exploit}}(t)$ denote the event that algorithm $\algfullbandit$ does the exploitation at timestep $t$. 
	For any $t>0$, if $\cL^{\textup{exploit}}(t)$ occurs, we have $N_\pi(t-1)>\psi(t)$. Thus 
	\begin{align*}
	\abr{ f(\bw) - \hat{f}_{t-1}(\bw)  } 
	< & 7  \cdot Z(\rho, \pi)  \sqrt{ \frac{\ln t + \tilde{d} \ln(1+e/\lambda)}{ \psi(t) } }
	\\
	= & 7  \cdot Z(\rho, \pi)  \sqrt{ d \sbr{\ln t + \tilde{d} \ln(1+e/\lambda)} } \cdot t^{-\frac{1}{3}}
	\end{align*}
	
	According to Lemmas~\ref{lemma:full_bandit_con_covariance},\ref{lemma:full_bandit_con_means}, for any $t \geq 2$, we bound the probability of event $\neg(\cM_t \cap \cN_t)$ by 
	\begin{align*}
	\Pr \mbr{ \neg(\cM_t \cap \cN_t) } \leq &  \frac{6 d^2}{t^2} + \frac{1}{t \ln^2 t}
	\\
	\leq &  \frac{6 d^2}{t \ln^2 t} + \frac{1}{t \ln^2 t}
	\\
	= &  \frac{7 d^2}{t \ln^2 t}
	\end{align*}
	
	The expected regret of algorithm $\algfullbandit$ can be divided into two parts, one  due to exploration and the other due to exploitation. Then, we can obtain
	\begin{align*}
	\ex[\cR(T)] \leq & N_\pi(T) \cdot \tilde{d} \Delta_{\textup{max}} + \sum_{t=1}^T \ex[ \Delta_t | \cL^{\textup{exploit}}(t)]
	\\
	\leq & (\psi(T)+1) \cdot \tilde{d} \Delta_{\textup{max}} + \sum_{t=1}^T \sbr{ \ex[ \Delta_t | \cL^{\textup{exploit}}(t), \cM_t \cap \cN_t ] + \ex[ \Delta_t | \cL^{\textup{exploit}}(t),  \neg(\cM_t \cap \cN_t)] \cdot \Pr \mbr{ \neg(\cM_t \cap \cN_t) } }
	\\
	= & O \sbr{\sbr{ \frac{T^{\frac{2}{3}}}{d} + 1} \cdot \tilde{d} \Delta_{\textup{max}} + \sum_{t=1}^T \sbr{  Z(\rho, \pi)  \sqrt{ d(\ln t + \tilde{d}\ln(1+e/\lambda) )} \cdot t^{-\frac{1}{3}}  + \Delta_{\textup{max}} \cdot  \frac{d^2}{t \ln^2 t}  } }
	\\
	= & O \sbr{T^{\frac{2}{3}} d \Delta_{\textup{max}} + d^2 \Delta_{\textup{max}} +  Z(\rho, \pi)  \sqrt{ d(\ln T + d^2 \ln(1+\lambda^{-1}))} \cdot  T^{\frac{2}{3}}  +  d^2 \Delta_{\textup{max}}}
	\\
	= & O \sbr{ Z(\rho, \pi)  \sqrt{ d(\ln T + d^2 \ln(1+\lambda^{-1})) } \cdot  T^{\frac{2}{3}}+  d \Delta_{\textup{max}} \cdot  T^{\frac{2}{3}} + d^2 \Delta_{\textup{max}} } 
	\\
	= & O \sbr{ Z(\rho, \pi)  \sqrt{ d(\ln T + d^2 \ln(1+\lambda^{-1})) } \cdot  T^{\frac{2}{3}}+   d \Delta_{\textup{max}} \cdot  T^{\frac{2}{3}} }
	\end{align*}

Choosing $\lambda=\frac{1}{2}$ and using $\Lambda_{\Sigma^*_\pi}=\Sigma^*_\pi$, we obtain
\begin{align*}
\ex[\cR(T)]  =  O \sbr{ Z(\rho, \pi)  \sqrt{ d(\ln T + d^2 ) } \cdot  T^{\frac{2}{3}}+   d \Delta_{\textup{max}} \cdot  T^{\frac{2}{3}} },
\end{align*}
where $Z(\rho, \pi) = \max_{\bw \in \triangle_{d}} \sbr{ \sqrt{ \bw^\top B_\pi^+  \Sigma^*_\pi  (B_\pi^{+}) \top \bw  } + \rho \| C^+_\pi \| }$.
\end{proof}

\restoregeometry

\end{document}